\newtheorem{theorem}{Theorem}[section]
\newtheorem{lemma}[theorem]{Lemma}
\newtheorem{definition}[theorem]{Definition}
\title{Learning by solving differential equations}
\author{%
  Benoit Dherin$^*$ \\
  Google Research\\
  \texttt{dherin@google.com} \\
  \And
  Michael Munn$^*$ \\
  Google Research \\
  \texttt{munn@google.com} \\
  \AND
  Hanna Mazzawi \\
  Google Research \\
  \texttt{mazzawi@google.com} \\
  \And
  Michael Wunder \\
  Google Research \\
  \texttt{mwunder@google.com} \\
  \And
  Sourabh Medapati \\
  Google DeepMind \\
  \texttt{smedapati@google.com} \\
  \And
  Javier Gonzalvo \\
  Google Research \\
  \texttt{xavigonzalvo@google.com} \\
}
\begin{document}

\maketitle

\def\thefootnote{*}\footnotetext{These authors contributed equally to this work}
\def\thefootnote{\arabic{footnote}}

\begin{abstract}
Modern deep learning algorithms use variations of gradient descent as their main learning methods. Gradient descent can be understood as the simplest Ordinary Differential Equation (ODE) solver; namely, the Euler method applied to the gradient flow differential equation. Since Euler, many ODE solvers have been devised that follow the gradient flow equation more precisely and more stably. Runge-Kutta (RK) methods provide a family of very powerful explicit and implicit high-order ODE solvers. However, these higher-order solvers have not found wide application in deep learning so far. In this work, we evaluate the performance of higher-order RK solvers when applied in deep learning, study their limitations, and propose ways to overcome these drawbacks. In particular, we explore how to improve their performance by naturally incorporating key ingredients of modern neural network optimizers such as preconditioning, adaptive learning rates, and momentum.

\end{abstract}

\section{Introduction}

Optimization in deep learning is notoriously hard. One of the main difficulties is the rough nature of the optimization landscape which causes training instabilities or even convergence failures. Due to the high cost of training large-scale models, particularly modern LLMs, this has sparked a recent interest in optimization techniques which can mitigate these training instabilities (e.g., \cite{takase2025spike,liu2020admin,wortsman2024smallscale}). 

Motivated by the theoretical potential for improved stability offered by higher-order ODE solvers, this work investigates the practical application and adaptation of such Runge-Kutta methods to deep learning, focusing on their performance, limitations, and effective modifications.

The nature of the problem can be traced back to the behaviour of the gradient field for losses coming from neural network optimization. Namely, the loss gradient tends to have extreme variations both in magnitude and direction during the course of training \cite{mccandlish2018empirical,visualization,liu2020admin}.
This makes simple training strategies like Stochastic Gradient Descent (SGD) - where the raw gradient evaluated at a single point over a batch of data is used to update the weight - often inefficient and unstable. The best modern deep-learning optimizers are variations of SGD that rely on essentially two  important methods to modify the raw gradient in the SGD update in order to fix these unstable gradient issues. The first method consists of some form of averaging of the gradient over the training run yielding strategies like classical momentum \cite{Polyak1964,Rumelhart1986,Sutskever2013}. The averaging helps with canceling out the variations in gradient direction that are unhelpful. The second method, preconditioning, consists in multiplying the gradient by a matrix in order to lower gradient directions of high curvature to tame the gradient field. This has yielded methods like AdaGrad \cite{Duchi2011}, Adadelta \cite{Zeiler2012}, RMSProp \cite{Tieleman2012}. The most used generic method Adam \cite{kingma2015adam} (as well as its variants like Nadam \cite{dozat2016incorporating}, AdamW \cite{loshchilov2019decoupled}, or NadamW \cite{loshchilov2019decoupled}) use both schemes in conjunction, yielding optimizers that perform well on a wide range of settings. The most performant optimizers in the AlgoPerf leaderboard \cite{kasimbeg2025accelerating,dahl2023benchmarking} at the time of writing are methods using only a special form of preconditioning like Shampoo \cite{anil2020scalable,shi2023distributed}, or methods that leverage some special form of averaging, like the schedule-free AdamW from \cite{defazio2024road}. 

This work introduces a gradient update modification (beyond averaging and preconditioning) to address training instabilities. Raw gradient updates (e.g., $\theta' = \theta -h\nabla L(\theta)$) deviate from the continuous loss minimization path with an $\mathcal O(h^2)$ error, causing training oscillations. Drawing from numerical methods of ODEs \cite{hairer2006geometric}, we propose Runge-Kutta (RK) updates that track the gradient flow trajectory more precisely, leading to more stable optimization. In fact, many common deep learning optimizers can be viewed as first-order ODE solvers. For instance, gradient descent is Euler's method for the gradient flow ODE, and momentum methods can be derived from Hamiltonian systems \cite{ghosh2023implicit,kovachki2021continuous,franca2020,shi2019acceleration,muehlebach2019dynamical}. Our research explores higher-order RK methods. These higher-order approaches promise more accurate tracking of the true ODE solutions (i.e., the gradient flow ODE which continuously minimizes the loss) compared to their first-order counterparts, aiming for improved stability and performance.

Our main contributions are the following:
\begin{itemize}

    \item We conduct a benchmark of the classical ${4^\text{th}}$-order RK method and discuss the benefits and limitations of higher-order RK updates in deep learning; see Sec.~\ref{section:benchmarking rk}.
    \item We propose three modifications  which aim to address certain limitations of vanilla RK methods: preconditioning, adaptive learning rates, momentum; see Sec.~\ref{section:modifications to rk}.
    \item We demonstrate experimentally that these modifications do indeed confer benefits when training with RK optimizers; see Figs.~\ref{figure:bridging_the_gap_preconditioning}-\ref{figure:bridging_the_gap_momentum}.
\end{itemize}

\paragraph{Paper goal:}
The primary aim of this work is to explore the intersection of Runge-Kutta theory and deep learning optimization, demonstrating its relevance and motivating further research in the intersection of these two areas. We hope to show that while such established numerical solvers hold promise, their `out-of-the-box' application to neural network training is often challenging. Here we explore how to leverage theoretical insights to guide their effective adaptation. 

\subsection{Related Work}

\paragraph{RK methods in optimization.}There have been surprisingly few and limited studies on the application of higher-order RK methods \cite{hairer1993solving1,hairer_solving_ode_2,hairer2006geometric} to deep learning. To our knowledge there are only two bench-marking works, both comparing higher-order RK methods to SGD only. In \cite{ayadi2020runge_kutta}, the authors observe better performance of a second-order RK method, the Heun method (applied to the gradient flow ODE) on MNIST, Fashion-MNIST, and CIFAR-10 with a CNN architecture over SGD. 
In \cite{su2024improving}, the classical $4^{\text{th}}$-order RK method shows also improved test accuracy over SGD on the same workloads plus CIFAR-100, but with ResNet architectures. 
% The experiment details are not given though, and it is unclear what level of tuning was used for each comparison.
However, the paper provides limited specifics regarding hyperparameter configurations and the experiment setup, making it difficult to ascertain the precise extent of hyperparameter tuning applied in the comparisons.
A very interesting work \cite{franca2020} uses a second order RK method on a relativistic Hamiltonian system to obtain Relativistic Gradient Descent (RGD), which is shown to interpolate between classical momentum and Nesterov momentum depending on the values of some hyper-parameters. This RGD is not evaluated in the context of neural network optimization in that paper. Another noteworthy work in this context is \cite{qin2020training} where they used a gradient regularized RK method to train GANs with good success. Lastly, first-order implicit RK methods have been used with benefit in the case of Physics-Informed Neural Networks (PINN) \cite{li2023implicit,wang2021understanding}. Devising optimization schemes using ODE's has been also considered in \cite{brown1989some,fiscko2023towards}.
In a sort of converse way, a number of works have observed that central optimizers in deep learning can be realized as \emph{first-order} RK methods applied to specific differential equations: gradient descent, momentum methods \cite{ghosh2023implicit,kovachki2021continuous,franca2020,shi2019acceleration,muehlebach2019dynamical,Betancourt2018}, and accelerated gradient methods \cite{zhang2018direct,franca2020, Franca2021,franca2018admm,Su2016}. These findings point toward a potentially high impact of bridging RK theory with deep learning optimization, a connection we believe has been understudied. 

\paragraph{Preconditioning.} Most modern deep-learning optimizers involve a form of raw gradient modification by the application of a matrix to the gradient to decrease its variability \cite{fiscko2023towards,franca2018admm}. For instance, the Adam family uses at each step a diagonal preconditioning matrix formed by averaging the gradient squares over the trajectory \cite{Duchi2011,Zeiler2012,Tieleman2012,kingma2015adam,dozat2016incorporating,loshchilov2019decoupled,anil2020scalable,dahl2023benchmarking,franca2018admm}. The resulting matrix can be thought of as an approximation of the Hessian inverse. More recently, the Shampoo family of optimizers uses block diagonal approximations of the same Hessian inverse \cite{anil2020scalable,shi2023distributed}. Amari in \cite{amari1998natural}, defines the natural gradient as the preconditioning of the raw gradient with the inverse of a natural metric on the parameter space generated by the family of probability distributions. In Section \ref{section:preconditioning} we explain how to incorporate preconditioning into RK updates by performing local modifications of gradient flow, and we analyze a variation of the AdaGrad preconditioner \cite{Duchi2011}.

\paragraph{Momentum.} Another technique prevalent in deep-learning is that of averaging the gradient updates along the trajectory to tame the raw gradient. This technique is known as momentum \cite{Polyak1964,Rumelhart1986,Sutskever2013}. One can understand momentum as an application of a first-order RK scheme to a special Hamiltonian equation with friction using the loss as potential function \cite{ghosh2023implicit,kovachki2021continuous,franca2020,shi2019acceleration,muehlebach2019dynamical}. The use of higher-order RK methods to solve this Hamiltonian system with friction has been explored in \cite{franca2020} leading to improved stability. Here we take a different approach by simply averaging RK updates instead of raw gradient updates leading to improved test performance as explained in  Section \ref{section:momentum}.

\paragraph{Adaptive learning rates.}  Higher-order RK methods can follow the exact solutions of ODE very precisely \cite{hairer1993solving1}. However, they suffer when the ODE is stiff \cite{hairer_solving_ode_2}, which means that its vector field has large local variations. One common method for handling this is to use an adaptive step-size \cite{hairer_solving_ode_2} that automatically reduces the step-size in these regions of large variation. We propose a modification of a recent adaptive learning rate, the Drift-Adjusted Learning rate (DAL) from \cite{rosca2023on}, in conjunction with RK methods in Section \ref{section:adaptive_learning_rate}. Note that other learning rates like the Polyak step-size \cite{Polyak1964, orvieto2022dynamics,dherin2024corridors} or the NGN step-size \cite{orvieto2024adaptive} may also be worth studying in this context.

\section{Background on RK methods}
\label{section:background}

RK methods are a family of iterative methods that numerically approximate solutions to ODEs by calculating a weighted average of gradient estimates at several points within a single step (similarly to the lookahead optimizer from \cite{zhang2019lookahead} or the aggregated momentum from \cite{lucas2019aggregated}). These methods have an order associated to them, and the higher the order the more closely the numerical solution follows the exact solution. Neural network training can be seen as numerically solving the Gradient Flow (GF) differential equation. Since the exact solutions of GF are always stable, we expect that higher-order methods following these solutions more closely will benefit from increased stability in training loss. We explain this now in more detail.  

Consider a general ODE of the form $\dot \theta = f(\theta)$, where $f:\mathbb R^k\rightarrow \mathbb R^k$ is a function called the ODE \emph{vector field}. We use the notation $\dot \theta(t)$ to denote the derivative of the curve $\theta(t)$, which geometrically is the tangent vector to the curve at time $t$. A curve $\theta(t)$ is a solution of the differential equation $\dot \theta = f(\theta)$ if its tangent vector at $t$ is exactly given by the vector field $f(\theta)$ evaluated at $\theta(t)$.

\paragraph{Gradient Flow.} In the context of deep learning, the vector field is the negative gradient field: $f(\theta) := -\nabla L(\theta)$, where $L(\theta)$ is the loss, yielding the \emph{gradient flow} ODE: $\dot \theta = -\nabla L(\theta).$ The exact solutions $\theta(t)$ of this differential equation are the continuous paths of steepest descent, along which the loss decreases continuously and stably at the rate prescribed by the gradient norm; namely, along a gradient-flow solution $\theta(t)$, the loss derivative w.r.t.~time is negative:
$$
\frac {d L(\theta(t))}{dt} = \nabla L(\theta(t))^T \dot \theta(t) = - \| \nabla L(\theta(t))\|^2  \leq 0
$$
Following the solutions of this ODE closely, through numerical iterations, then ensures decreasing the loss in a way that is more stable (i.e. less loss oscillations) as we are closer to the gradient flow.

\paragraph{Order of an ODE solver.}
The simplest possible ODE solver is the Euler method, which in deep learning is called gradient descent: $$\theta' = \theta + h f(\theta),$$ which approximates the exact solution $\theta(t)$ of the differential equation $\dot \theta = f(\theta)$ after a time $t=h$ starting from the point $\theta$. Gradient descent is a \emph{first-order} solver, since its error with respect to the solution of gradient flow after a step of size $h$ is of order $\mathcal O(h^2)$. It is easy to see this by comparing the Taylor expansion of the exact solution of the ODE $\dot \theta = f(\theta)$ with the numerical method. The Taylor expansion of the exact solution $\theta(t)$ of $\dot \theta = f(\theta)$ is easy to compute:
\begin{eqnarray}
    \theta(h) 
    & = & \theta + \dot \theta(0) h + \frac 1{2!} \ddot \theta(0) h^2 + \mathcal O(h^3) \ \\
    & = & \theta + hf(\theta) + \frac {h^2}2 f'(\theta)f(\theta) + \mathcal O(h^3), \label{equation:exact_expansion} \label{equation:exact_solution_expansion}
\end{eqnarray}
where we obtained the second derivative of $\theta(t)$ by differentiating the ODE $\dot \theta = f(\theta)$ on both sides w.r.t.~to time: $\ddot \theta = f'(\theta) \dot \theta = f'(\theta) f(\theta)$. From the exact solution expansion in \eqref{equation:exact_solution_expansion}, we see immediately that the Euler method is just a first order expansion, since it coincides with the first-order term in the learning rate only yielding an error of size $\mathcal O(h^2)$. More generally, we say that
\begin{definition} An iterative ODE solver as above is of order $k$ if the error after one step is of order $\mathcal O(h^{k+1})$. In other words, for a method of order $k$, we have that
$$
\| \theta'- \theta(h)\| = \mathcal O(h^{k+1}).
$$
where $\theta'$ is the method iterator after one step of size $h$ from $\theta$ and $\theta(h)$ is the exact solution of the ODE starting at $\theta$ after a time $h$.
\end{definition}

% \paragraph{RK methods.} 
\paragraph{Runge-Kutta methods for solving ODEs.}The Euler method is the simplest ODE solver from a vast family of ODE solvers. RK methods \cite{hairer1993solving1,hairer_solving_ode_2,hairer2006geometric} generalize the Euler method in that each step is computed from a weighted average of gradients evaluated in a neighborhood of the previous iterate:
\begin{equation} \label{equation:rk_method}
\theta' = \theta + h\sum_{i=1}^s b_i f(\theta_i(h)),
\end{equation}
where the $b_i$'s sum to one and the $\theta_i(h)$'s are points in a neighborhood of the previous iterate $\theta$ with $\theta_i(h) \rightarrow \theta$ as $h\rightarrow 0$, recovering the gradient flow for infinitesimally small learning rates. The neighborhood points $\theta_i(h)$ where the gradients are evaluated are computed themselves as a solution of an implicit system of equations\footnote{Note that if $a_{ij} = 0$ if $i\geq j$ then Eq. \eqref{equation:rk_neighoring_points} can be solved explicitly; in this case the method is called \emph{explicit}, otherwise it is called \emph{implicit}.} given by  
\begin{equation}\label{equation:rk_neighoring_points}
\theta_i = \theta + h \sum_{j=1}^s a_{ij}f(\theta_j).
\end{equation} 
 The main idea behind RK methods is that one can find special vectors $b = (b_i)$ and matrices $A = (a_{ij})$ such that the resulting ODE solver is of higher order (i.e., with lower error). This is done by expanding the solution of the differential equation as well as the RK method into a Taylor series and choosing the coefficients $a_{ij}$ and $b_i$ such that both series coincide up to the desired order.
For instance, the Taylor series up to 2nd order of the RK method in \eqref{equation:rk_method} is (see Appendix \ref{appendix:runge_kutta_updates} Eqs.~\eqref{equation:RK_taylor_expansion}-\eqref{equation:RK_taylor_expansion_last} for details):
\begin{equation}
\theta'  
 =  \theta +  h \left(\sum_{i=1}^s b_i\right) f(\theta) +
h^2 \left(\sum_{ij=1}^s b_i a_{ij}\right)f'(\theta)f(\theta) + \mathcal O(h^3).
\end{equation}
Comparing this with the exact solution Taylor expansion in \eqref{equation:exact_expansion}, we obtain the conditions
\begin{equation}\label{equation:rk_conditions}
\sum_{i=1}^s b_i = 1 
\quad\textrm{and}\quad
\sum_{ij=1}^s b_ia_{ij} = \frac 12
\end{equation}
which ensure the method is of order 1 and 2, respectively. Finding RK weights that satisfy these and higher-order conditions is a difficult problem. We provide in Appendix \ref{appendix:runge_kutta_updates} an illustration of the procedure by computing the weights for all possible second-order RK methods, and we give concrete coefficients for RK methods of order 3 and 4 as well.
The number of gradient evaluations used in the RK method is called the number of \emph{stages}. One key result is that for a method to be of order $s$ we need at least (and possibly more) $s$ stages. \emph{This means that improving the method order requires increasing the number of gradient evaluations at each step \cite{hairer1993solving1}}.

\section{Benchmarking Runge-Kutta Methods in Deep Learning}\label{section:benchmarking rk}

In the context of deep learning, we will call a \emph{vanilla RK update}, the gradient update obtained by applying a RK method $A = (a_{ij})$ and $b = (b_i)$ to the gradient flow ODE: $\dot \theta = -\nabla L(\theta)$, where $L$ is the neural loss. We will denote the RK gradient by $g^*(\theta, h)$ defined by
\begin{equation}
    g^*(\theta, h) = \sum_{i=1}^s b_i g(\theta_i) \quad \textrm{where} \quad
    \theta_i = \theta - h\sum_{j=1}^s a_{ij}g(\theta_j),
\end{equation}
where $g(\theta) = \nabla L(\theta)$ denotes the gradient of the loss $L(\theta)$.
The corresponding vanilla RK update with fixed learning rate $h$ is then:
\begin{equation} 
\theta' = \theta - hg^*(\theta, h).
\end{equation}
Observe that the RK gradient $g^*(\theta, h)$ depends on the step-size. This dependence allows the update to be closer to the exact gradient flow $\theta(h)$ that continuously minimizes the loss without any instability. In theory, this is the main advantage of this type of update: they are more stable than raw gradient updates for higher learning rates, allowing for faster and stable convergence.

\subsection{Empirical evaluation of vanilla Runge-Kutta}

Benchmarking new optimizers is notoriously challenging \cite{dahl2023benchmarking,kasimbeg2025accelerating}. This is in part because many of the tricks and techniques needed to achieve SOTA performance are often optimizer-specific and potentially disadvantage a new optimizer. For instance, standard practices like weight decay or batch normalization can conflict with RK's underlying mechanics, if not specifically adapted. We therefore adopt the following strategy: from a collection of highly tuned strong baseline settings, taken from workloads within the init2winit framework \cite{init2winit2021github, dahl2023benchmarking}, we assess vanilla RK's competitiveness by substituting the original optimizer and tuning only RK's learning rate and batch size. This provides a deliberately challenging setup but allows for a naive ``worst-case'' comparison.

We see that vanilla RK updates can be competitive (even within this adverse setting) with the additional benefit of having only the learning rate train to tune and no accumulators, as shown in Table \ref{table:rk_sota}.
Note that while vanilla RK excels on simple tasks, its competitiveness drops on more complex workloads. We suspect this is because common training tricks (e.g., batch norm, weight decay, cosine schedule), often necessary for such workloads, are detrimental to RK methods.

\begin{table}[htbp] 
  \centering
  \caption{Comparison of vanilla RK4 (see Appendix \ref{appendix:runge_kutta_updates}) against strong baselines for various workloads. Accuracy represents mean best test accuracy during training over a fixed number of training steps and standard error over 5 independent trials.
See Appendix \ref{appendix:sota_experiment_details} for experiment details. \label{table:rk_sota}}
  
  \begin{tabular}{
    l l c c | c
  }
    \toprule % Top line of the table
    % --- HEADER ROW ---
    \textbf{Dataset} & \textbf{Model} & {\shortstack[c]{\textbf{Baseline} \\ \textbf{Optimizer}}} & {\shortstack[c]{\textbf{Baseline} \\ \textbf{Accuracy (\%)}}} & {\shortstack[c]{\textbf{RK4} \\ \textbf{Accuracy (\%)}}} \\ % Use {} around \textbf for S columns
    \midrule % Line separating header from data

    % --- DATA ROWS ---
    MNIST           & DNN          &  Adam      & 96.64 $\pm$ 7e-4 & 96.88 $\pm$ 6e-4 \\
    MNIST           & Simple CNN   &  Momentum  & 98.64 $\pm$ 2e-4 & 98.53 $\pm$ 1e-4 \\
    Fashion-MNIST   & DNN          &  Adam      & 85.6  $\pm$ 4e-4 & 86.02 $\pm$ 4e-4 \\
    Fashion-MNIST   & Simple CNN   &  Momentum  & 90.12 $\pm$ 4e-4 & 90.65 $\pm$ 2e-3 \\
    CIFAR-10        & WRN 28-10    &  Momentum  & 97.10 $\pm$ 1e-4 & 96.78 $\pm$ 4e-3 \\
    CIFAR-100       & WRN 28-10    &  Momentum  & 81.89 $\pm$ 2e-3 & 78.81 $\pm$ 9e-4 \\
    ImageNet        & ViT          &  NAdamW    & 78.5 $\pm$ 2e-3 & 61.2 $\pm$ 2e-3 \\
    \bottomrule % Bottom line of the table
  \end{tabular}
\end{table}

\begin{figure}[htbp] 
\centering 

\includegraphics[width=0.49\textwidth]{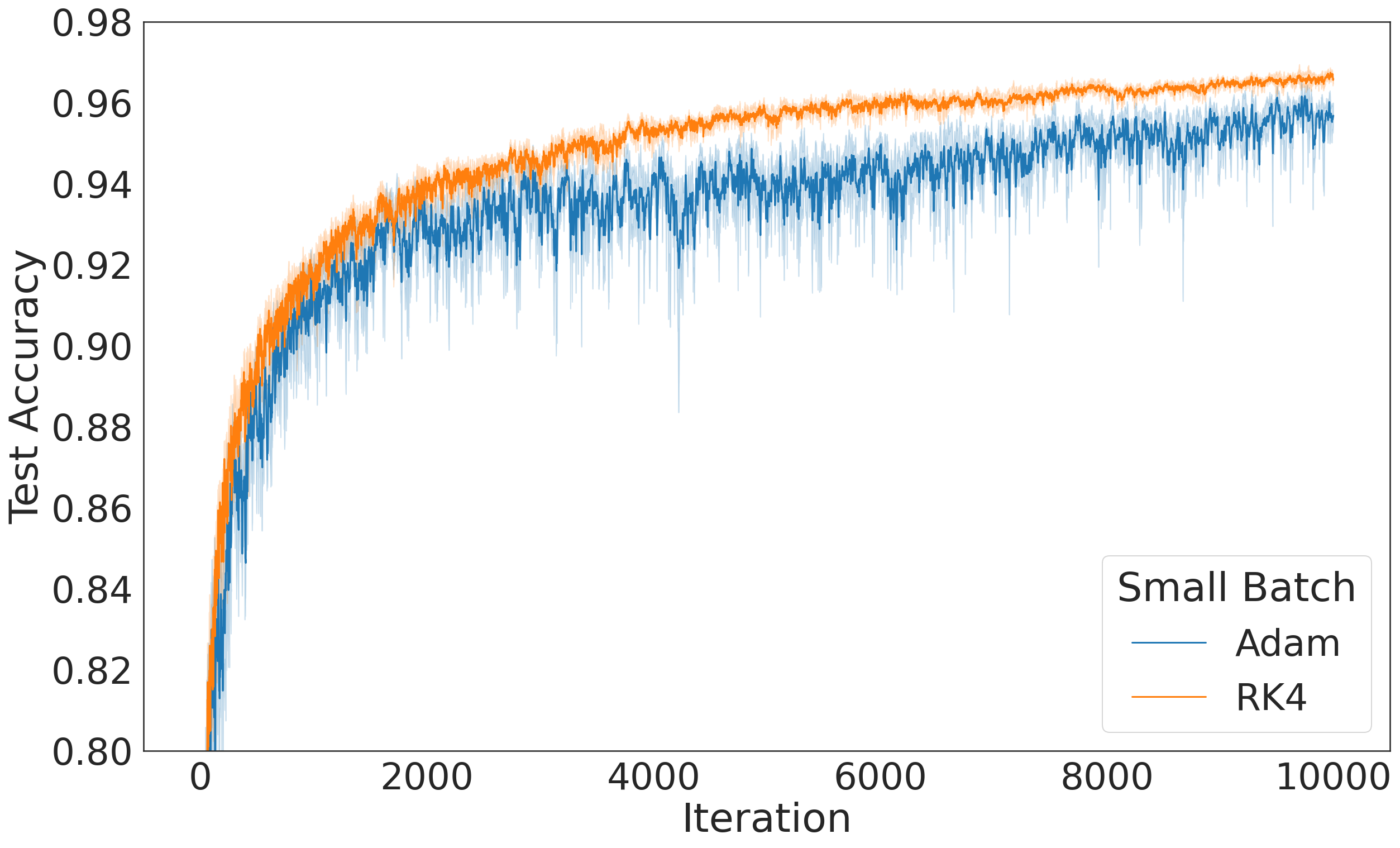}
\includegraphics[width=0.49\textwidth]{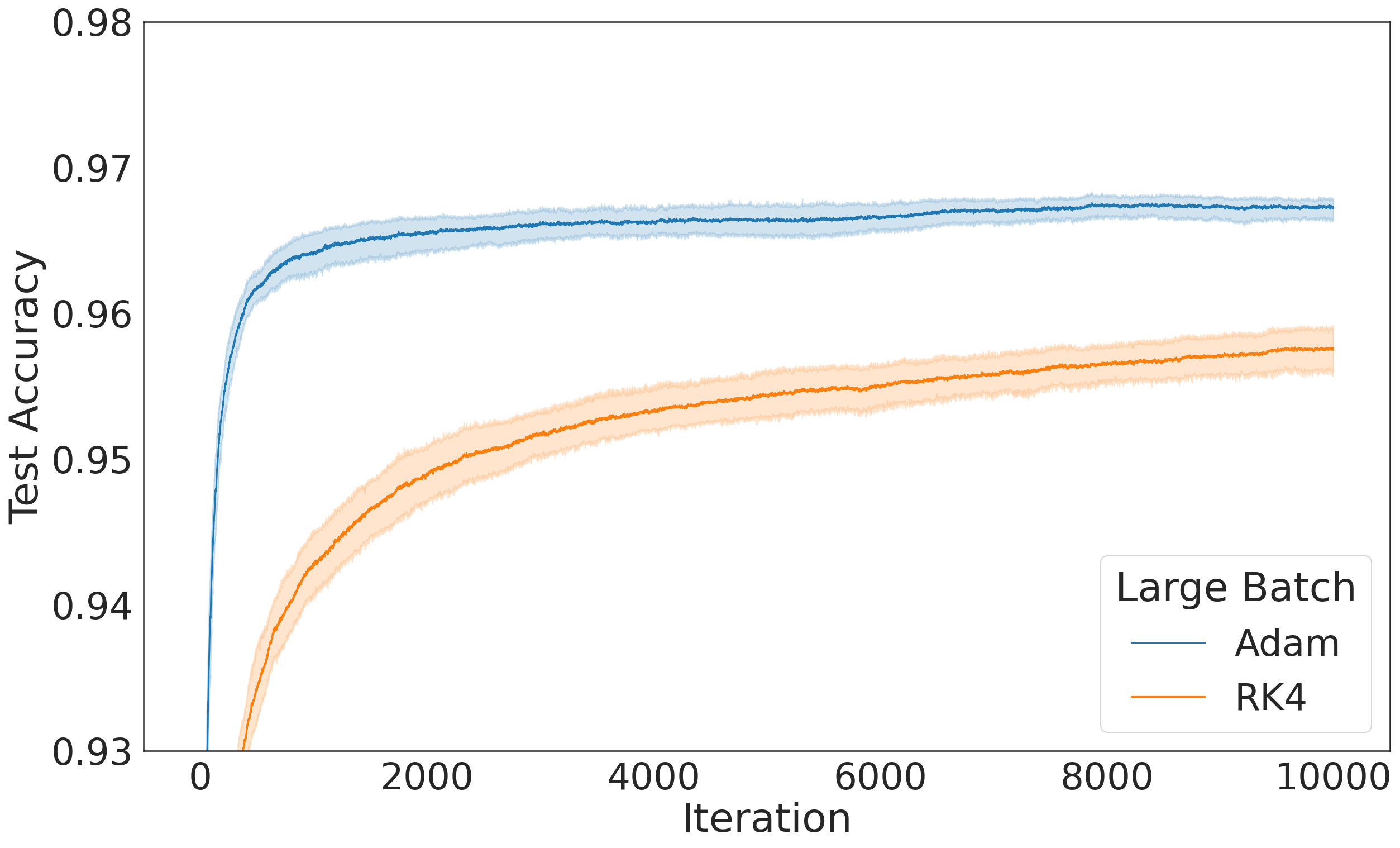}
\includegraphics[width=0.49\textwidth]{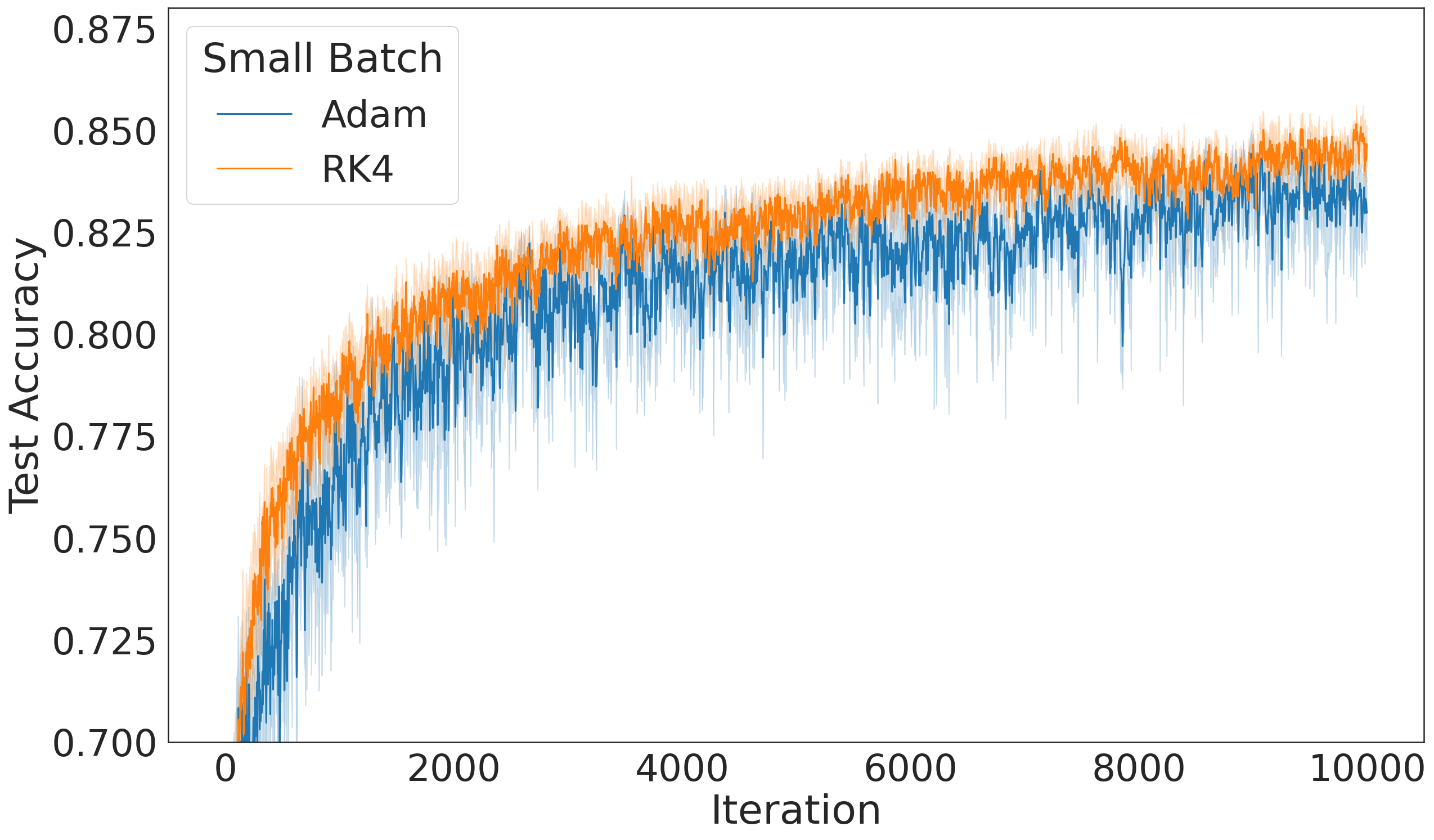}
\includegraphics[width=0.49\textwidth]{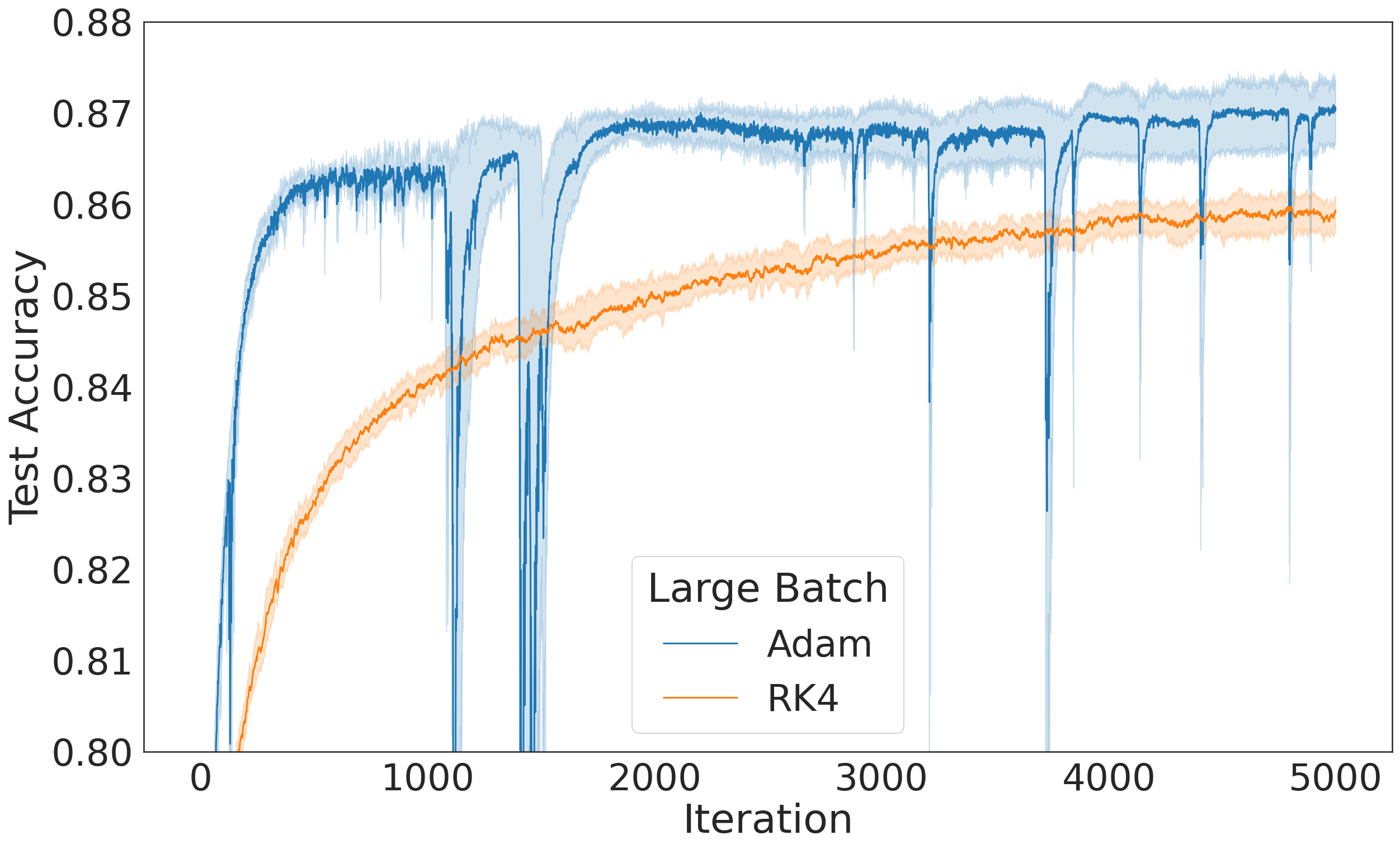}
\caption{
For MNIST ({\bf top row}) and Fashion MNIST ({\bf bottom row}) trained on MLP, vanilla RK4 achieves better test accuracy compared to Adam for small batches ({\bf left column}) while we observe a generalization gap for large batches ({\bf right column}). See Appendix \ref{appendix:figure_1} for experiment details.\label{figure:small_batch_vs_large_batch}
}
\end{figure}

\subsection{Limitations}
\label{section:limitations}
Although theoretically Runge-Kutta methods can provide benefits in training stability and simplicity in hyperparameter tuning, there are still practical challenges when naively applied to deep learning workloads. Most notably, these challenges are 1) the increase in compute time per step, 2) convergence difficulties in case of gradient field stiffness, and 3) a generalization gap in the large batch setting. These, which describe below, can all impact the overall efficiency and performance of RK methods in many settings.

\paragraph{Wall-clock time v.s.~steps for vanilla RK.}

An order-$k$ RK update requires at least $k$ gradient evaluations per step. While seemingly costly, in fact RK's wall-clock time is comparable to Adam's provided all gradient data fits into device memory simultaneously. This can be seen with full-batch MNIST/Fashion MNIST training, where RK4 and Adam times are nearly identical; see Appendix \ref{appendix:figure_1}  Fig.~\ref{figure:time_versus_step}. However, RK's wall-clock time sharply increases if gradient data exceeds memory capacity. For instance, when training a ResNet model on CIFAR-10, RK4's time is similar to Adam's for batch size of 1 but roughly doubles for batch size of 8192; see Appendix \ref{appendix:figure_time_vs_step_cifar} Fig.~\ref{figure:time_vs_step_cifar}. Consequently, performance can become prohibitive for large workloads under current memory constraints. In short, RK updates are  competitive in I/O-bound situations (where data fits in memory) but face significant wall-clock time challenges in memory-constrained scenarios.

\paragraph{Gradient flow stiffness.}

Runge-Kutta methods struggle with \emph{stiff} ODEs $\dot \theta = f(\theta)$; i.e., where the vector field $f(\theta)$ exhibits large variations in reaction to small changes in $\theta$, see \cite{hairer_solving_ode_2}. In neural network optimization, such stiffness can arise from phenomena influencing the vector field at different scales such as interactions between smooth large-scale geometry and bumpy smaller-scale geometry of the loss surface \cite{visualization}. This issue has been previously addressed in Physics-Informed Neural Networks (PINNs) using implicit first-order RK methods where stiffness has been shown to correlate with large Hessian eigenvalue gaps \cite{li2023implicit,wang2021understanding}. While implicit RK methods generally manage stiffness better than explicit ones \cite{hairer_solving_ode_2}, the computational cost of solving their required non-linear systems given by \eqref{equation:rk_neighoring_points} is often prohibitive for higher-order methods in neural networks. The following sections investigate modifications to the vanilla RK4 method which aim to mitigate this stiffness and help lessen the generalization gap we see in the large batch setting; cf.~Fig.~\ref{figure:small_batch_vs_large_batch}.

\paragraph{Generalization gap in large batch settings.}

In large batch settings, Runge-Kutta (RK) methods are stable because they closely follow the gradient flow. This, however,  also prevents them from benefiting from the beneficial flatness bias induced by first-order discretization errors \cite{barrett2021implicit,cattaneo2023implicit,ghosh2023implicit}. The absence of this implicit regularization leads to a generalization gap causing RK methods to underperform first-order methods like Adam \cite{cattaneo2023implicit}, even on simple datasets such as MNIST and Fashion-MNIST in the large batch setting; see Fig.~\ref{figure:small_batch_vs_large_batch} (right column). On the other hand, for small batch settings, the implicit regularization due to the stochastic noise helps RK4 bridge the generalization gap existing in the large batch setting; see Fig.~\ref{figure:small_batch_vs_large_batch} (left column).

In the next section, we propose three modifications which aim to combat ODE stiffness and allow us to bridge the generalization gap in the large batch setting observed in Fig.~\ref{figure:small_batch_vs_large_batch}. Namely, we explore 1) modifying the gradient field via preconditioning, 2) adaptive learning rates, and 3) adding momentum.

\section{Modifications to vanilla Runge-Kutta optimization}
\label{section:modifications to rk}

In this section we explore how three modern training techniques—preconditioning, adaptive learning rates, and momentum—can be integrated with vanilla RK methods to mitigate ODE stiffness and improve performance, allowing us to bridge the generalization gap observed in Fig.~\ref{figure:small_batch_vs_large_batch}. For our experiments, we focus on ``pure'' MLP-based workloads, specifically MNIST and Fashion MNIST, since competitive performance is achievable without complex, optimizer-specific tricks.

\subsection{Preconditioning Runge-Kutta by modifying the ODE}
\label{section:preconditioning}

% \paragraph{Modifying the ODE.}
As discussed in Section \ref{section:limitations}, stiffness presents a significant challenge when applying RK methods to the gradient flow ODE of a neural loss, as it can cause the gradient field to change dramatically in response to even minor parameter variations. One way to mitigate this problematic behavior is through preconditioning; e.g., \cite{Duchi2011, diederik2014adam, amari1998natural, martens2015optimizing, gupta2018shampoo}, see also \cite{xie2025structured} and \cite{bernstein2024modular}. One way to view preconditioning is that it offers the flexibility to transform the gradient flow ODE into one which alleviates this stiffness problem. That is, at each step $k$ of the optimization process, we modify our differential equation $
    \dot \theta = f_k(\theta)
$
and solve it with an RK method with step $h$ and initial condition $\theta_k$ yielding a different RK update than the original RK update for step $k$. The only requirement is that the neural loss $L$ decreases along the exact solutions of $\dot \theta = f_k(\theta)$ and that critical points of the vector field $f_k(\theta)$ remain the same as the critical points of the loss gradient. We can test this by computing the derivative of the loss w.r.t.~time along the solutions of $\dot \theta = f_k(\theta)$ remains negative.\footnote{In mathematical terms, this means that we can choose any differential equation at step $k$ as long as the loss function remains a Lyapunov function for that differential equation \cite{bacciotti2001liapunov}.}

% \paragraph{Preconditioning lemma.} 
Using this trick, we modify the gradient flow ODE to precondition the gradient. The following lemma outlines this step-wise modification using a matrix (potentially dependent on the step and parameters), which generalizes prior work \cite{Cortes2006, fiscko2023towards}.

\begin{lemma}[Preconditioning lemma]
\label{lemma:preconditioning}
Consider the preconditioned gradient flow  \mbox{$\dot \theta(t) = - A(\theta(t)) g(\theta(t))$}, where $g(\theta) = \nabla L(\theta)$ is the loss gradient and $A(\theta)$ is a positive definite symmetric matrix possibly depending on the parameter $\theta$. Then, the loss $L$ decreases on the exact solutions $\theta(t)$ of the preconditioned gradient flow equation at the following rate
\begin{equation}
    \frac{dL(\theta(t))}{dt} = - \|g(\theta(t))\|^2_A,
\end{equation}
where $\|v\|^2_A = v^TAv$ is the norm of $v$ in the metric given by $A$.
\end{lemma}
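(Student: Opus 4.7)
The plan is to obtain the claimed identity by a direct chain rule computation along any solution curve $\theta(t)$ of the preconditioned gradient flow, then read off the sign via the positive definiteness of $A$.

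First, I would differentiate $L(\theta(t))$ with respect to $t$ using the chain rule, which yields $\frac{dL(\theta(t))}{dt} = \nabla L(\theta(t))^T \dot\theta(t) = g(\theta(t))^T \dot\theta(t)$. Next, I would substitute the ODE $\dot\theta(t) = -A(\theta(t))g(\theta(t))$ directly into this expression, giving $\frac{dL(\theta(t))}{dt} = -g(\theta(t))^T A(\theta(t)) g(\theta(t))$. Finally, I would recognize the right-hand side as the squared $A$-norm of the gradient by the definition $\|v\|_A^2 = v^T A v$ provided in the statement, which yields the claimed equality.

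There is essentially no obstacle to overcome here: the computation is a one-line application of the chain rule, and symmetry of $A$ is not even required to establish the displayed equation (it only matters for the norm interpretation, since $v^T A v = v^T A^T v$ when $A$ is symmetric). The positive definiteness hypothesis plays no role in deriving the identity itself; it only ensures that $\|g(\theta(t))\|_A^2 \geq 0$, so that the derivative is non-positive and $L$ is genuinely a Lyapunov function for the modified flow, consistent with the footnote. I would mention this observation explicitly, since it clarifies which hypotheses are used where and motivates the design constraint on $A$ in the subsequent preconditioned RK updates.
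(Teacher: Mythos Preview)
Your proposal is correct and matches the paper's proof essentially line for line: the paper also applies the chain rule to get $\frac{dL(\theta(t))}{dt} = \nabla L(\theta(t))^T \dot\theta(t) = -g(\theta(t))^T A(\theta(t)) g(\theta(t))$ and then invokes positive definiteness of $A(\theta)$ to conclude non-negativity of $\|g(\theta(t))\|_A^2$. Your additional remark separating the roles of symmetry (for the norm interpretation) and positive definiteness (for the sign) is a helpful clarification that the paper does not spell out.
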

\begin{proof}
 We obtain immediately that $\frac{dL(\theta(t))}{dt} = \nabla L(\theta(t))^T \dot \theta(t) = -g(\theta(t))^T A(\theta(t)) g(\theta(t))$. The positivity of $\|g(\theta(t))\|^2_A$ for all vectors is guaranteed by the fact that $A(\theta)$ is a Riemannian metric (i.e., $A(\theta)$ is symmetric and positive definite for all $\theta$).
\end{proof}

\paragraph{AdaGrad-like preconditioning for RK updates.} 
AdaGrad \cite{Duchi2011} is one of the earliest forms of adaptive preconditioning and works by adapting the learning rate for each parameter based on the history of its gradients. More specifically, AdaGrad  uses a matrix $G_n = \sum_{l=1}^n g(\theta_l)g(\theta_l)^T$ to accumulate past gradient outer products, capturing information about the loss-surface metric through the learning trajectory. Viewed through the lens of Lemma \ref{lemma:preconditioning}, for step $n$, Adagrad takes
\begin{equation}
A_n(\theta) = (\epsilon + \operatorname{diag} G_n)^{-1/2},
\end{equation}

where here $\epsilon$ is a small constant introduced only to improve numerical stability.

The insight that preconditioning aims to smooth the loss surface's local geometry and reduce ODE stiffness points towards a more natural choice for the preconditioning matrix $A_n$. Specifically, a loss surface given by $L(\theta)$ possesses a natural geometry  \cite{pouplin2023curvature,barrett2021implicit}, with its local distance metric characterized by the matrix
\begin{equation}
\mathcal{G}(\theta) = \mathbb{I} + g(\theta)g(\theta)^T, \quad \text{where}\quad g(\theta) = \nabla L(\theta).
\end{equation}

Thus, preconditioning by $\mathcal{G}^{-1}(\theta)$ naturally scales the gradient field, thereby smoothing the local geometry. However, computing $\mathcal{G}^{-1}(\theta)$ is often impractical due to the matrix inversion. Inspired by AdaGrad, we instead use a diagonal approximation of $g(\theta) g(\theta)^T$ and (similar to AdaGrad) average over the trajectory which leads us to our \emph{modified AdaGrad} preconditioner given by
\begin{equation}
A'_n(\theta) = (1 + \operatorname{diag} G_n)^{-1/2}.
\end{equation}

This yields a preconditioner similar to AdaGrad's but which performs better than AdaGrad in our experiments and succeeds in decreasing the generalization gap with Adam in the large batch regime; as shown in  Fig.~\ref{figure:bridging_the_gap_preconditioning}. See Appendix \ref{appendix:geometry_of_preconditioning} for more details around this preconditioning choice.

\begin{figure}[htbp] 
\centering 
\includegraphics[width=0.49\textwidth]{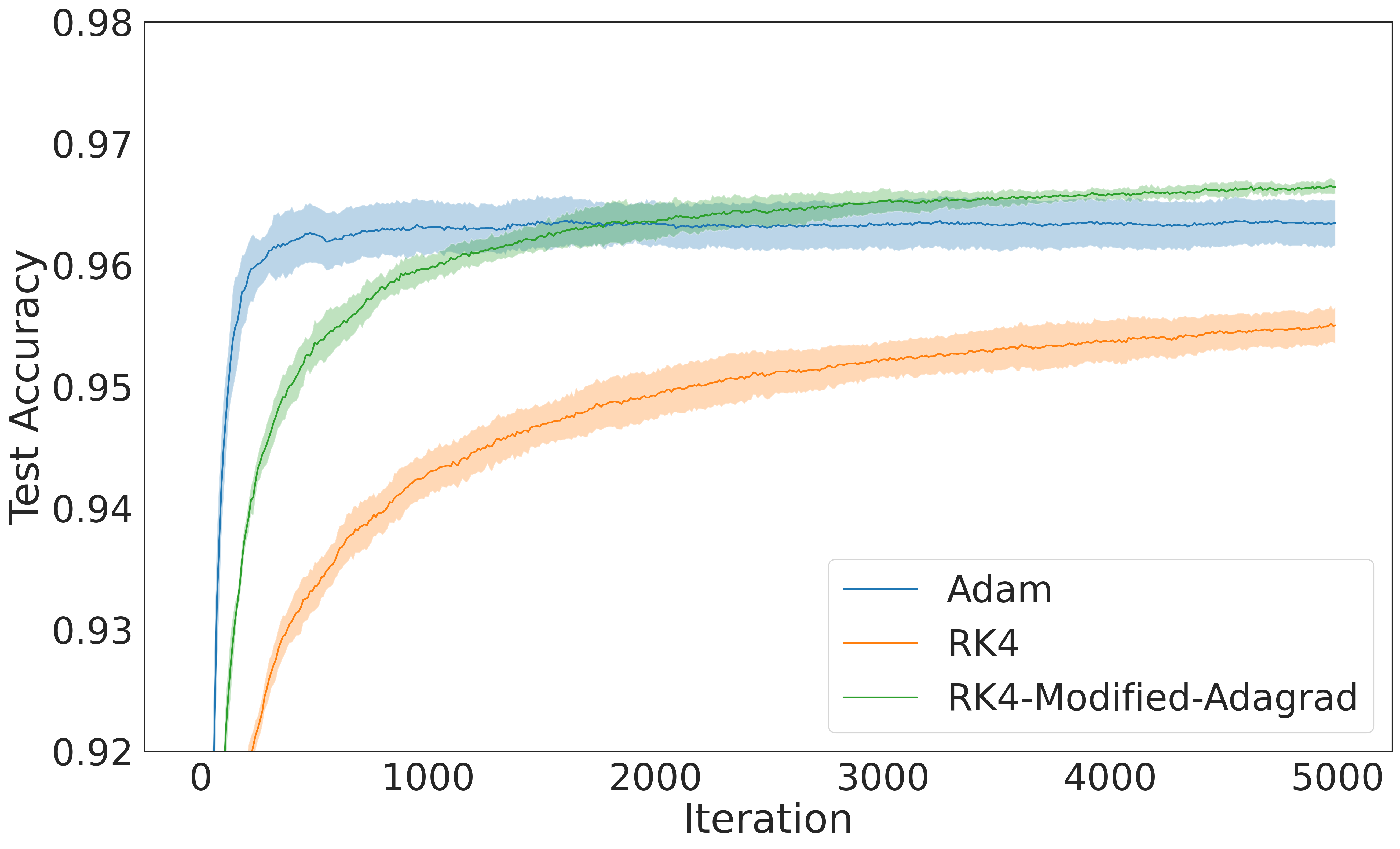}
\includegraphics[width=0.49\textwidth]{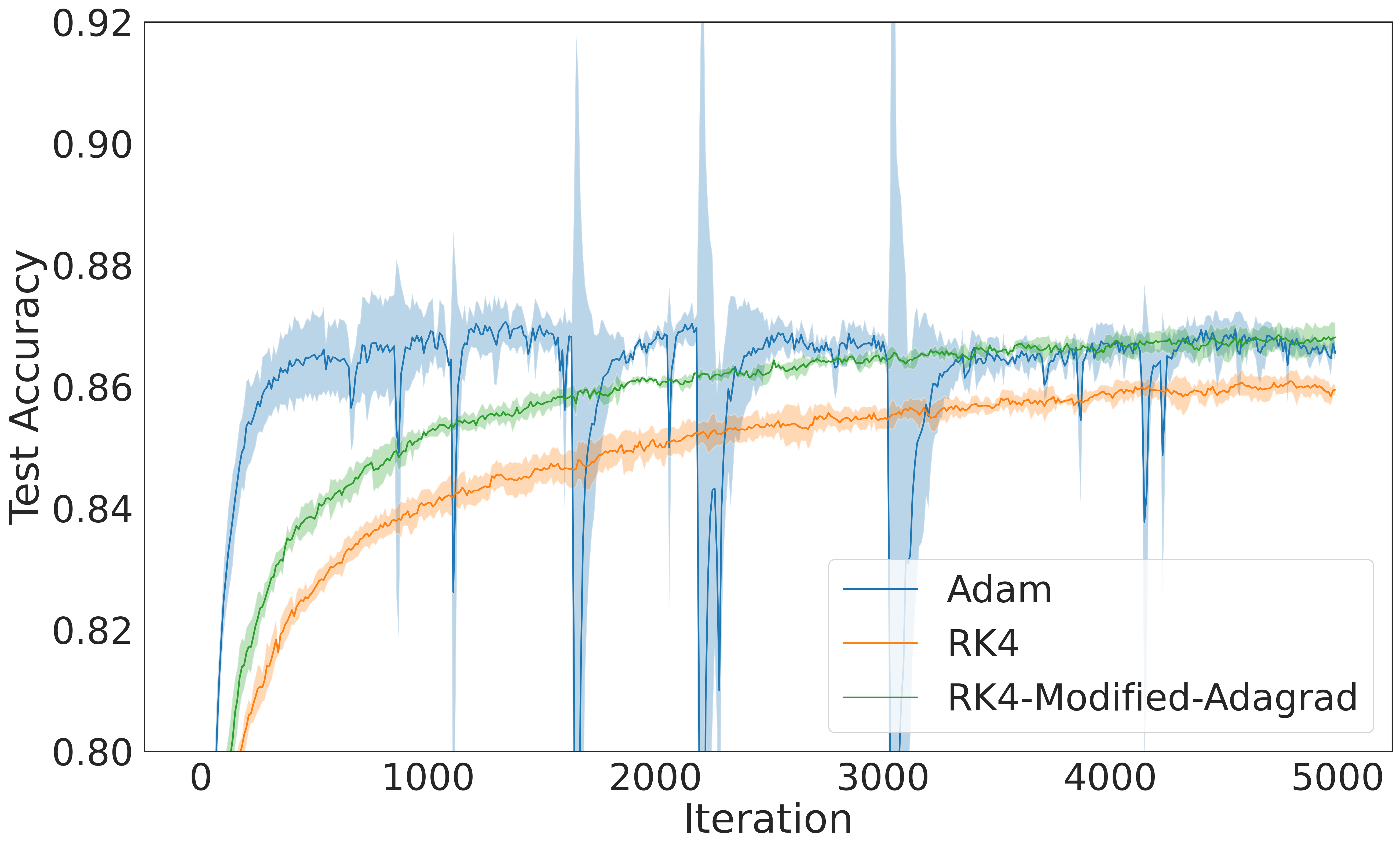}
\caption{
RK4 with AdaGrad-like preconditioning helps bridge the gap between Adam and vanilla RK4 in the large batch regime for MLP trained on MNIST (\textbf{left}) and Fashion-MNIST (\textbf{right}). See Appendix \ref{appendix:bridging_the_gap_preconditioning} for experiment details. Additional experiments on CIFAR-10 and a ResNet-18 model yield similar results; see Fig.~\ref{figure:conditioners_cifar} and Appendix \ref{section:cifar10_rk4_modified_adagrad}. \label{figure:bridging_the_gap_preconditioning}}
\end{figure}

\subsection{Making use of adaptive learning rates}
\label{section:adaptive_learning_rate}
Adaptive learning rates provide a promising tool for addressing the challenge of stiffness of the ODE. By automatically decreasing the learning rate in stiff regions and increasing the learning rate in more tame regions, we are able to more fully benefit from the precision offered by higher-order RK methods. The appropriate size for the learning rate is determined by measuring how the gradient varies in a neighborhood of a given point which, following \cite{dherin2022why}, can be computed via the derivative of the gradient along the gradient flow:
\begin{equation}
\label{equation:derivative_of_gradient}
\frac {d g(\theta)}{dt} = \nabla g(\theta) \dot \theta = - H(\theta)g(\theta),
\end{equation}
where $g$ and $H$ denote the gradient and the Hessian of the loss, respectively. 

Equation \eqref{equation:derivative_of_gradient} implies that the gradient $g(\theta)$ along the gradient flow path changes slowly when the term $\|H(\theta)g(\theta)\|$ is small, and changes rapidly when this term is large. Therefore, an adaptive learning rate should be larger precisely when $\|H(\theta)g(\theta)\|$ is small (reflecting these slow gradient changes) and smaller when it is large (reflecting rapid changes). In short, the learning rate should be inversely proportional to $\|H(\theta)g(\theta)\|$, leading to the adaptive learning rate DAL-$p$ as in \cite{rosca2021discretization}:
\begin{equation}
    h_{\textrm{DAL}}(\theta) := 2\left(\dfrac{\|g(\theta)\|}{\|H(\theta)g(\theta)\|}\right)^p, \quad \text{for some } p >0.
\end{equation}
Unfortunately, using $h_{\textrm{DAL}}$ directly with RK methods can cause divergence, as it may lead to excessively large learning rates. To mitigate this instability, we introduce a rescaling technique similar to the Polyak step-size normalization in \cite{orvieto2024adaptive}, resulting in the rescaled DAL-$p$ learning rate given by
\begin{equation}
\label{equation:dalr}
    h_{\textrm{DALR}}(\theta) := \dfrac{c}{1 + \dfrac c2 \left(\dfrac{\|H(\theta)g(\theta)\|}{\|g(\theta)\|}\right)^p}.
\end{equation}
The new learning rate \eqref{equation:dalr} is now capped by the (tunable) value $c$ and substantially improves upon vanilla RK4, even beating tuned Adam on MNIST and Fashion-MNIST; as shown in Fig.~\ref{figure:bridging_the_gap_adaptive_learning_rate}.

\begin{figure}[htbp] 
\centering 
\includegraphics[width=0.49\textwidth]{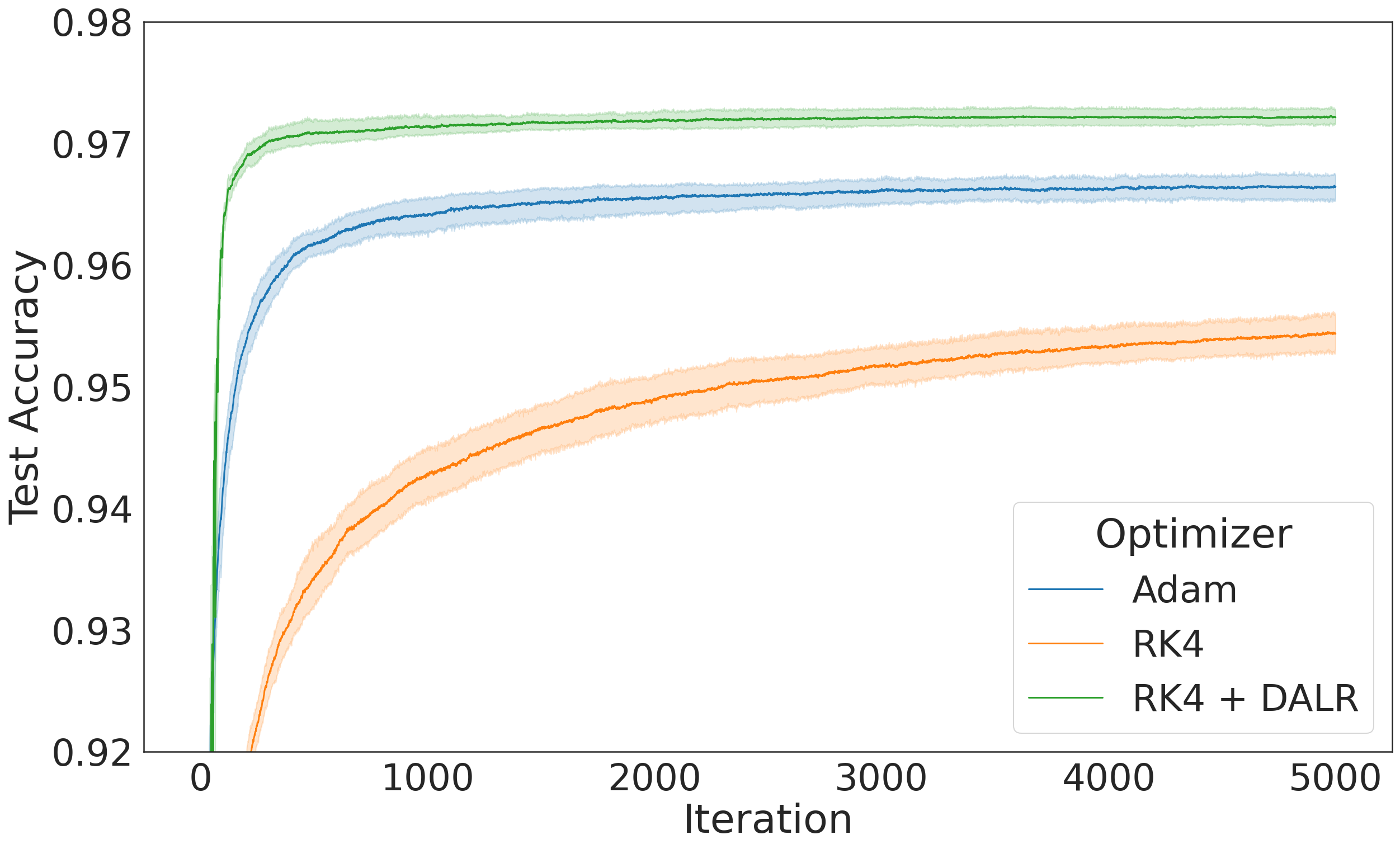}
\includegraphics[width=0.49\textwidth]{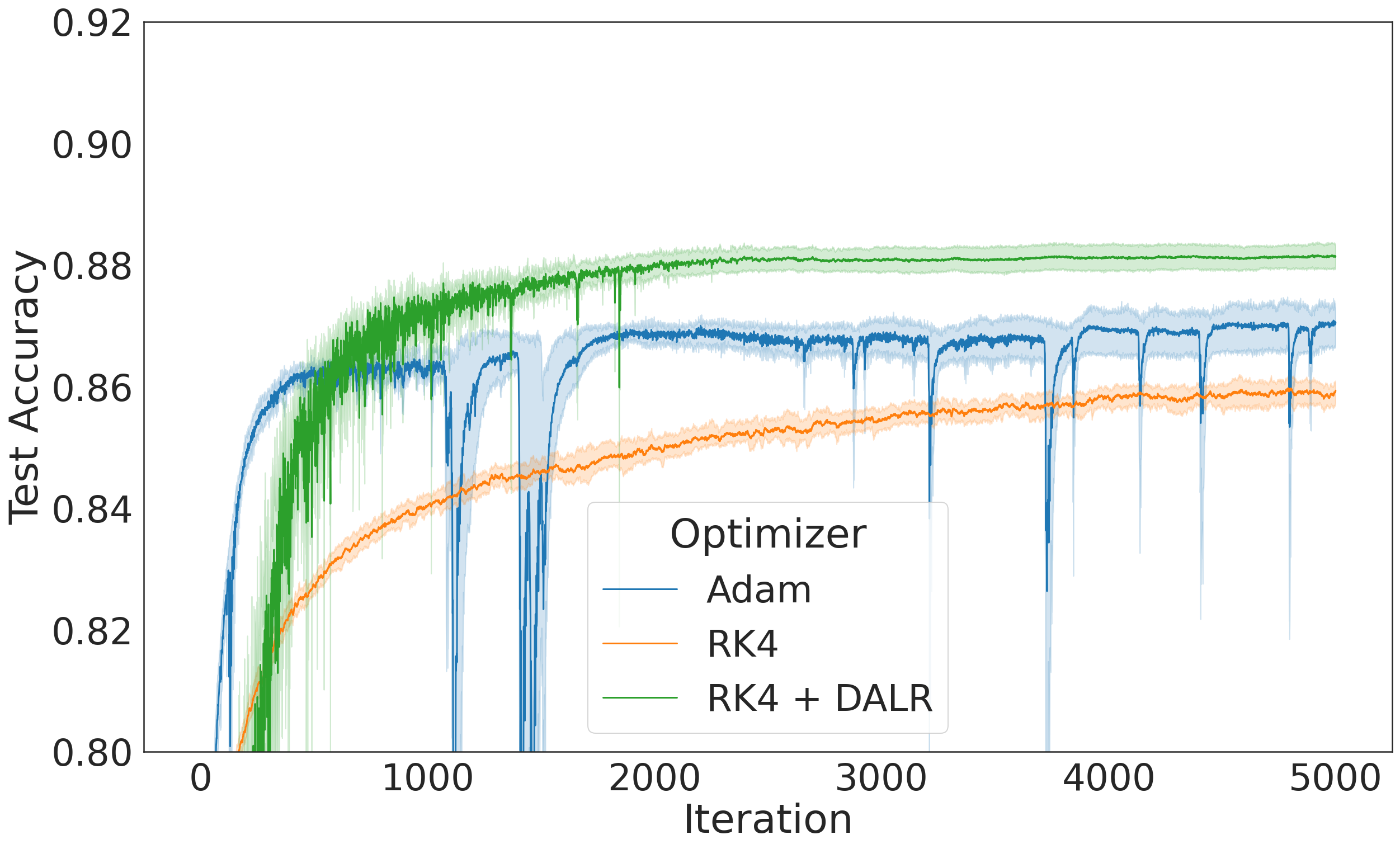}
\caption{
RK4 combined with an adaptive learning rate bridges the generalization gap in the large batch setting and is competitive with Adam for MLP trained on MNIST (\textbf{left}) and Fashion-MNIST (\textbf{right}). See Appendix \ref{appendix:bridging_the_gap_adaptive_learning_rate} for experiment details. Additional experiments on CIFAR-10 and a ResNet-18 model yield similar results; see Fig.~\ref{figure:cifar_momentum_dalr} and Appendix \ref{section:cifar10_rk4_momentum_adaptive_lr}.  \label{figure:bridging_the_gap_adaptive_learning_rate}}
\end{figure}

\subsection{Incorporating momentum with Runge-Kutta updates}
\label{section:momentum}
Momentum \cite{Sutskever2013} is a core component of many state-of-the-art optimizers like Adam \cite{kingma2015adam} and its variants (e.g., Nadam \cite{dozat2016incorporating}, AdamW \cite{loshchilov2019decoupled}). Traditionally, momentum based optimizers employ an exponential moving average (EMA) of past raw gradients, weighted by a hyperparameter $\beta$. We adapt this principle by applying the EMA to the RK gradients instead of the raw gradients, yielding the following RK momentum scheme
\begin{eqnarray}
    m_{n+1} & = & \beta m_n + g^*(\theta_n, h) \\
    \theta_{n+1} & = & \theta_n - h m_{n+1},
\end{eqnarray}
where the lookahead step uses a Runge–Kutta estimate of the gradient $g^*(\theta_n, h)$ in place of the standard Nesterov approximation~\cite{Su2016}, and where $m_0=0$. Accumulating these more precise gradients is beneficial and helps bridge the gap between vanilla RK and Adam, as shown in Fig.~\ref{figure:bridging_the_gap_momentum}. 

\begin{figure}[htbp] 
\centering 
\includegraphics[width=0.49\textwidth]{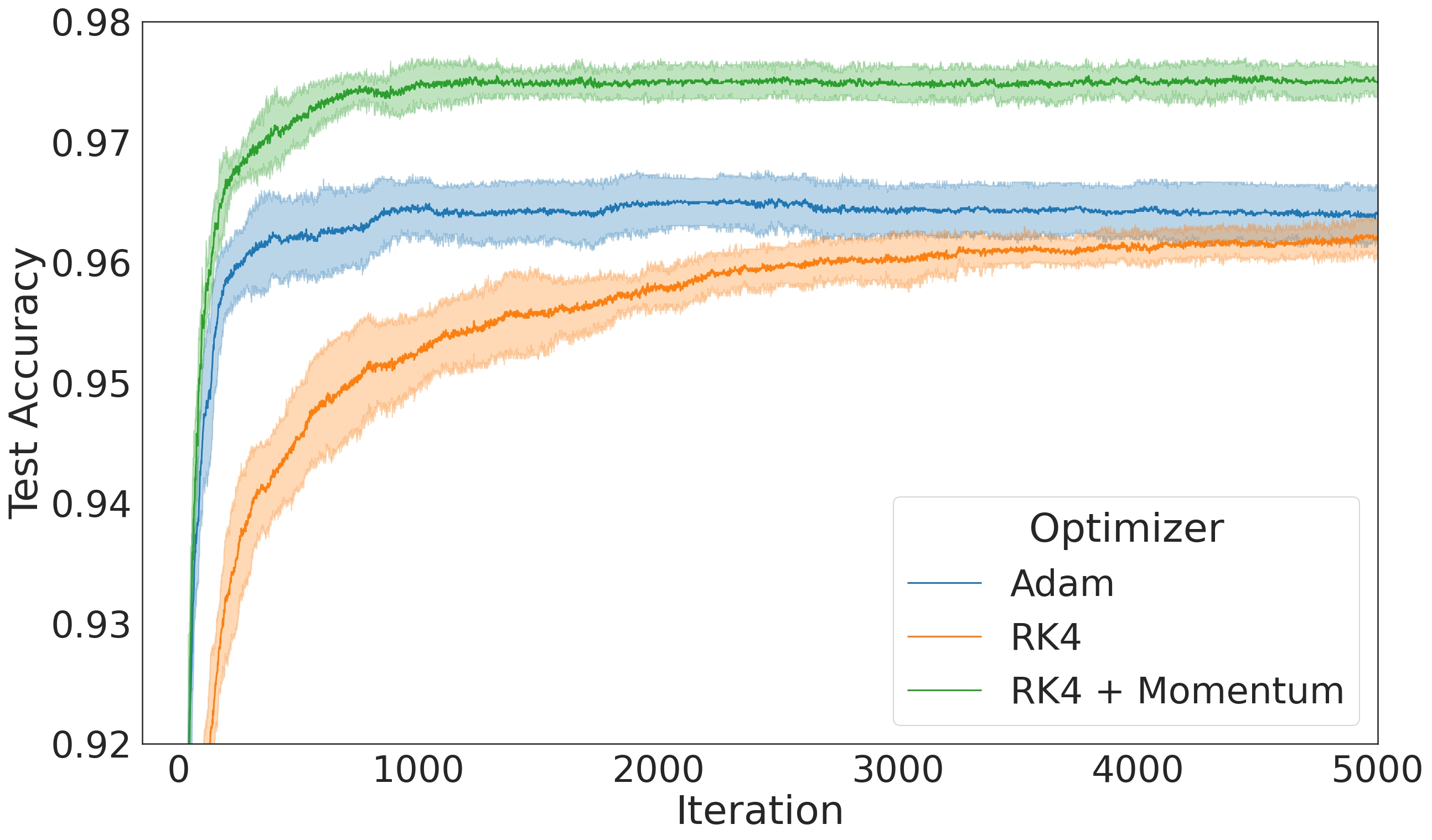}
\includegraphics[width=0.49\textwidth]{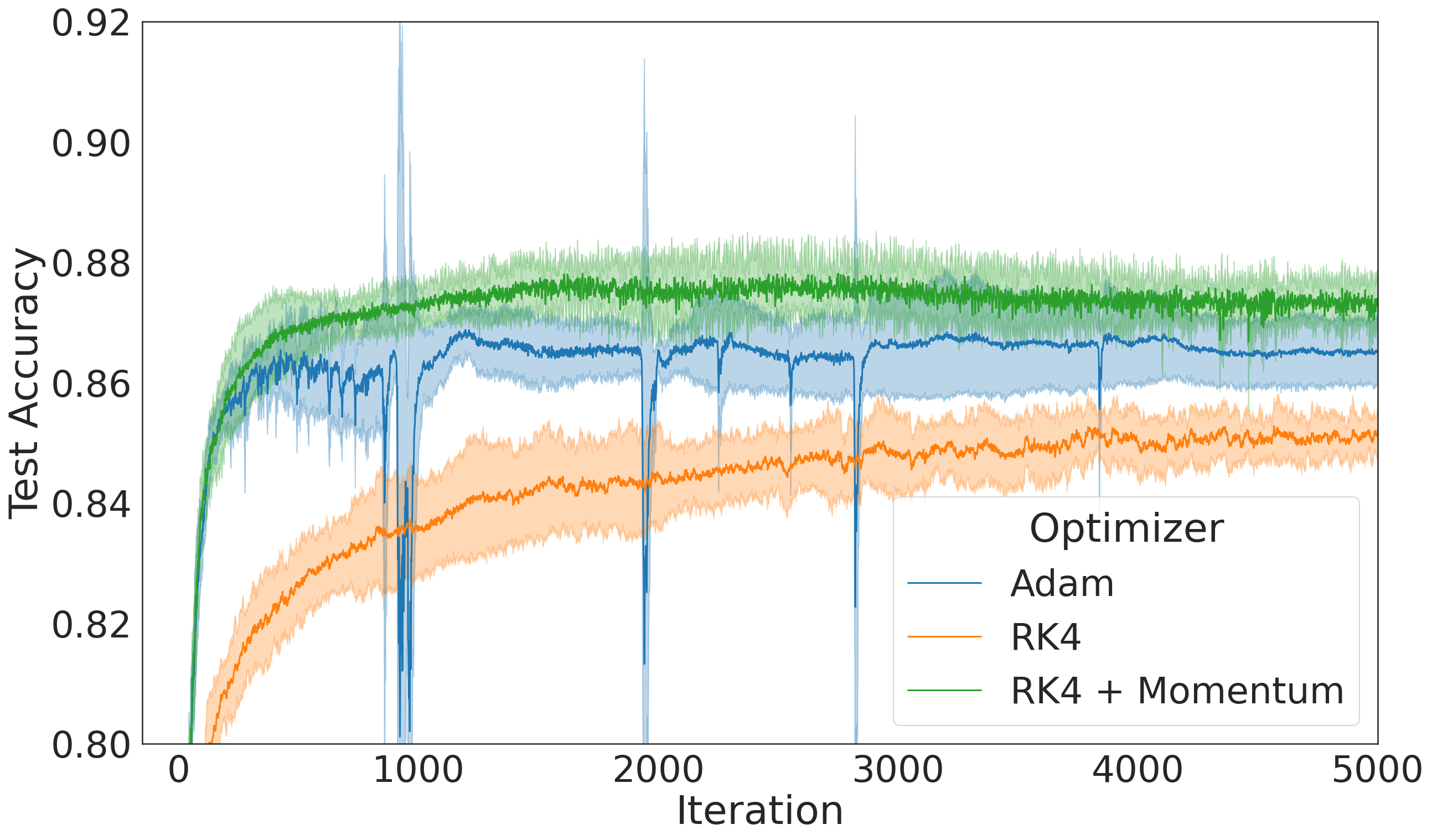}
\caption{Combining momentum with RK4 bridges the generalization gap in the large batch regime and achieves better test accuracy than both  Adam and vanilla RK4 on MNIST (\textbf{left}) and Fashion MNIST (\textbf{right}). See Appendix \ref{appendix:bridging_the_gap_momentum} for experiment details. Additional experiments on CIFAR-10 and a ResNet-18 model yield similar results; see Fig.~\ref{figure:cifar_momentum_dalr} and Appendix \ref{section:cifar10_rk4_momentum_adaptive_lr}. \label{figure:bridging_the_gap_momentum} }
\end{figure}

\section{Conclusion and Future Work}

This paper explores the application and potential of higher-order Runge-Kutta (RK) ODE solvers in deep learning. These higher-order RK methods follow more precisely the exact solutions of the gradient flow ODE minimizing the loss, yielding more stable training paths. We initiated our study with an empirical evaluation of vanilla ${4^\text{th}}$-order RK optimizers against strong baseline settings on a variety of classic deep learning workloads, addressing both their benefits and limitations. We then explored how  modern training techniques (preconditioning, adaptive learning rates and momentum) could be adapted to address these limitations, verifying through experiments that these modifications indeed improve upon vanilla RK4 and, in some cases, outperform Adam as well.

Despite the promising results, this study has certain limitations. Our investigation into the various modifications to vanilla RK4 methods has thus far deliberately focused on relatively simple workloads such as MNIST/Fashion-MNIST and MLP models. This focused approach was chosen because it allows us to more precisely isolate and measure the effects of each modification when compared to vanilla RK4, minimizing the confounding effects introduced by complex, optimizer-specific training tricks (e.g., schedules, batch normalization, weight decay, label smoothing, etc) which are often necessary to achieve competitive performance in larger-scale settings. Further investigation into how RK-based optimizers can be effectively tailored or how these heuristics might be co-adapted to allow for competitive performance on more complex workloads is an interesting area of future research. Ultimately, this work seeks to help bridge the gap between deep learning optimization and numerical methods of ODEs, highlighting this intersection as a promising area for future research where established solvers can significantly advance deep learning techniques.

\section{Acknowledgments}
\label{section:acknowledgements}
We would like to thank Joe Toth, George Dahl, Sammy Jerome, and Corinna Cortes for helpful discussions, suggestions, and feedback during the development of this work.

\bibliographystyle{plain}
%\bibliography{references}

\clearpage
\newpage

\appendix

\section{Runge-Kutta updates}
\label{appendix:runge_kutta_updates}

This section presents the formulas for Runge-Kutta (RK) updates up to the fourth order (RK2, RK3, and RK4). These methods provide approximations to the exact solutions of a differential equation $\dot \theta = f(\theta)$, where $f$ is the differential-equation vector-field. For explicit second-order methods, it's possible to parameterize the entire family of these methods. We include this well-known computation \cite{hairer1993solving1} to illustrate the general approach for finding RK coefficients. In the main paper, we benchmarked RK4, which is the classical $4^{\text{th}}$ order method, and it has an error of size $\mathcal O(h^5)$.

Recall that a RK method is given by a matrix $A=(a_{ij})$ and a vector $b = (b_i)$. The corresponding RK update is of the form
\begin{equation} \label{equation:rk_method_appendix}
\theta' = \theta + h\sum_{i=1}^s b_i f(\theta_i),
\end{equation}
where the points $\theta_i$ are the solutions of the following system of equations
\begin{equation}\label{equation:rk_neighoring_points_appendix}
\theta_i = \theta + h \sum_{j=1}^s a_{ij}f(\theta_j).
\end{equation} 
The idea is to find $A$ and $b$ such that the Taylor expansion of the RK update in Eq. \eqref{equation:rk_method_appendix} coincides up to order $k+1$ with the Taylor series of the exact solution of the ODE, yielding a RK method of order $k$. The first step is to compute the Taylor expansion of the numerical method. This is easy for the first two orders, but rapidly becomes extremely complex and requires the introduction of sophisticated mathematics (see \cite{hairer1993solving1}). Let us compute this Taylor expansion here up to order 2, then we will illustrate the full process in the case of explicit second order methods:

\begin{eqnarray}\label{equation:RK_taylor_expansion}
\theta' 
& = & \theta +  h\sum_{i=1}^s b_i f(\theta_i) \\
& = & \theta +  h\sum_{i=1}^s b_i f\Big(\theta + h \sum_{j=1}^s a_{ij}f(\theta_j)\Big)\\
& = & \theta + h\sum_{i=1}^s b_i \Bigg(
f(\theta) + f'(\theta)
    \Big(
    h \sum_{j=1}^s a_{ij}f(\theta + \mathcal O(h))  \Big)
    + \mathcal O(h^2)
\Bigg)\\
& = & \theta +  h \left(\sum_{i=1}^s b_i\right) f(\theta) +
h^2 \left(\sum_{ij=1}^s b_i a_{ij}\right)f'(\theta)f(\theta) + \mathcal O(h^3).
\label{equation:RK_taylor_expansion_last}
\end{eqnarray}

Therefore, the condition
\begin{equation}\label{equation:rk_conditions_appendix}
\sum_{i=1}^s b_i = 1 
\end{equation}
ensures that the method is at least a first order method, while the condition
\begin{equation}
\sum_{ij=1}^s b_ia_{ij} = \frac 12
\end{equation}
ensures the method is at least of order 2. 

The next paragraph applies this argument to explicit second order methods.

 \paragraph{General second-order methods.} For explicit second-order methods, we can compute all their possible weight, while also illustrating the general principle how to compute the RK weights $a=(a_{ij})$ and $b=(b_i)$. These methods have only two gradient evaluations (i.e. they have two stages), and a step of the method will have an error of order $\mathcal O(h^3)$. Since the method has two stages the vector $b$ has two components only, and the matrix $a$ is a lower-diagonal $2\times 2$ matrix. This means that we can parameterize the probability vector $b$ with a single number $\alpha \in [0, 1]$, namely $b_1 = 1 - \alpha$ and $b_2 = \alpha$. On the other hand, the lower-triangular $2 \times 2$ matrix $a$ has a single non-zero entry $a_{21} = \beta$. Therefore the two points where we will evaluate the gradients will be $\theta_1 = \theta$ and $\theta_2 = \theta + h \beta f(\theta)$. This yields the gradient update rule:
\begin{equation}\label{equation:second_order_rk}
\theta' = \theta + h \Bigg( (1 - \alpha) f(\theta) + \alpha f(\theta + h\beta f(\theta))\Bigg),
\end{equation}
which parameterizes all the two-stages explicit RK methods.
Now, the goal is to find $\alpha$ and $\beta$ so that the error between one step of the method above and the exact solutions of $\dot \theta = f(\theta)$ is of order $\mathcal O(h^3)$. To figure that out, we can compute the Taylor expansion of the exact solution up to order $\mathcal O(h^3)$ as well as that of the update rule above and adjust the $\alpha$ and $\beta$ for the two series to coincide up to order $\mathcal O(h^3)$. Let us do that. First of all, the Taylor's expansion of the solution $\theta(h)$ of the ODE $\dot \theta(t) = f(\theta(t))$ starting at $\theta(0) = \theta$ is given at the first orders by 
\begin{eqnarray}
    \theta(h) 
    & = & \theta + \dot \theta(0) h + \frac 1{2!} \ddot \theta(0) h^2 + \mathcal O(h^3) \ \\
    & = & \theta + hf(\theta) + \frac {h^2}2 f'(\theta)f(\theta) + \mathcal O(h^3),
\end{eqnarray}
where we obtained the second derivative of $\theta(t)$ by differentiating the ODE $\dot \theta = f(\theta)$ on both sides w.r.t.~to time: $\ddot \theta = f'(\theta) \dot \theta = f'(\theta) f(\theta)$.
On the other hand for the two-stage RK update rule \eqref{equation:second_order_rk} we have the following expansion in the learning rate:
$$
\theta' = \theta +h f(\theta) + h^2 \alpha \beta f'(\theta) f(\theta) + \mathcal O(h^3).
$$
So, as long as $\alpha \beta = \frac 12$, that is $\beta = \frac 1{2\alpha}$, the method will be of second order, yielding all possible two-stage second-order RK methods. They are parameterized by $\alpha \in (0,1]$ with updates: 
\begin{equation}\label{equation:order2_RK_methods}
\theta' = \theta +  h \Bigg(
(1-\alpha) f(\theta) + \alpha f\big(\theta + \frac{h}{2\alpha}f(\theta)\big)
\Bigg),
\end{equation}
 A popular choice is to take $\alpha = 1/2$ and $\beta = 1$, leading to the \emph{improved Euler method} (also know as the Heun method) which is an order of magnitude more precise than the Euler method (the error is $\mathcal O(h^3)$ compared to $\mathcal O(h^2)$ for the Euler method) but it necessitates two gradient evaluations.
 
Below, we list popular $2^\text{nd}$, $3^\text{rd}$, and $4^{\text{th}}$ order RK methods for reference.

\paragraph{RK2: Heun $2^\text{nd}$ order method.} As we saw, there all the second order RK methods are parameterized by a single coefficient $\alpha \in (0, 1]$. A popular choice is $\alpha = 1/2$ leading to the Heun method or improved Euler method:
\begin{eqnarray}
\theta_1 & = & \theta \\
\theta_2 & = & \theta +  h  f(\theta_1) \\
\theta' & = & \theta + \frac h2 \Big( f(\theta_1) +  f(\theta_2) \Big)
\end{eqnarray}
  
\paragraph{RK3: Kutta $3^\text{rd}$ order method.} There are many more ways to produce 3rd order methods than second order ones. The computation idea is the same but it becomes much more intricate. For instance, here is a popular 3rd order method:
\begin{eqnarray}\label{equation:RK3}
\theta_1 & = & \theta \\
\theta_2 & = & \theta + \frac h2  f(\theta_1) \\
\theta_3 & = & \theta -  h  f(\theta_1) +  2h  f(\theta_2) \\
\theta' & = & \theta + \frac h6 \Big(f(\theta_1) + 4  f(\theta_2) + f(\theta_3) \Big)
\end{eqnarray}

\paragraph{RK4: Classical $4^{\text{th}}$ order method.} Similarly, there are many more ways to produce $4^{\text{th}}$ order methods than second order ones. Here is a very popular $4^{\text{th}}$ order method:

\begin{eqnarray}\label{equation:RK4}
\theta_1 & = & \theta \\
\theta_2 & = & \theta + \frac h2  f(\theta_1) \\
\theta_3 & = & \theta + \frac h2  f(\theta_2) \\
\theta_4 & = & \theta +  h  f(\theta_3) \\
\theta' & = & \theta + h \frac 16 \Big(f(\theta_1) + 2  f(\theta_2) + 2f(\theta_3)  + f(\theta_4) \Big)
\end{eqnarray}

\clearpage
\newpage

\section{Geometry of preconditioning}
\label{appendix:geometry_of_preconditioning}

We discuss here the mathematics and intuition underlying the preconditioning of RK updates, giving some motivation for our particular preconditioning choice in Section \ref{section:preconditioning}.

\paragraph{Vectors and co-vectors.} Let $V$ be a vector space. A vector in $v$ is usually represented in coordinates by a column vector. On the other hand, elements of the dual space $V^*$, which are linear maps from $V$ to $\mathbb R$, are represented by row vectors and called \emph{co-vectors}. The application of a co-vector $\mu\in V^*$ to a vector $v$ yields the number $\mu \cdot v$, where $\cdot$ is the matrix product of the row vector $\mu$ by the column vector $v$.

In machine learning, the loss gradient $\nabla L(\theta)$ is typically a co-vector (i.e., a row vector). This is the linear map that tells us how much the loss changes if we move from the parameter $\theta$ in the direction $v$, namely $L(\theta + v) \simeq L(\theta) + \nabla L(\theta) \cdot v$.

On the other hand, the velocity $\dot \theta(t)$ of a curve $\theta(t)$ in parameter space, is a \emph{vector} (represented by a column vector) since by definition of the derivative we have
\begin{equation}
\dot \theta(t) = \operatorname{lim}_{\substack{\epsilon \rightarrow 0}} \frac{\theta(t + \epsilon) - \theta(t)}{\epsilon} .
\end{equation}
This is important for ODE's because on both sides of a differential equation 
\begin{equation}
    \dot \theta(t) = f(\theta(t))
\end{equation}
we need to be vectors (i.e. column vectors). 

\paragraph{The gradient flow ODE.}
The standard gradient flow equation $\dot \theta(t) = \nabla L(\theta(t))$ suffers from the problem of equating a vector with a co-vector: the gradient flow written in this form is not \emph{invariant under change of coordinates}. This means that the gradient flow equation does not assume the same form after a change of coordinates because vectors and co-vectors transform differently. This issue has already been noticed recently in \cite{bernstein2024modular} for optimizers like gradient descent that add a vector (the parameter) to a co-vector (the update), creating a type mismatch. Luckily, if we have a Riemannian metric around; i.e., a positive symmetric matrix $G(\theta)$ defined for each point $\theta$ of the parameter space, we can transform a co-vector into vector via what is sometimes called the \emph{musical isomorphism} (see \cite{Lee_SmoothManifolds} for more details): $\mu \rightarrow G^{-1}\mu$. Therefore, the correct way to write the gradient flow ODE is with the help of an underlying metric tensor $G(\theta)$ on the parameter space given by
\begin{equation}
    \dot \theta(t) = G(\theta(t))^{-1} \nabla L(\theta(t)).
\end{equation}
In this form, the gradient flow ODE has the exact same form in every coordinate system. 
Therefore writing the gradient flow ODE in an invariant form coincides exactly with preconditioning the gradient field by the inverse metric tensor $A = G^{-1}$.

\paragraph{Natural metric on the loss surface.} 
There is a natural metric on the loss surface, which has been identified in \cite{barrett2021implicit} Appendix A.2 and studied in \cite{pouplin2023curvature}:
\begin{equation}
\mathcal{G}(\theta) = \mathbb{I} + g(\theta)g(\theta)^T, \quad \text{where}\quad g(\theta) = \nabla L(\theta).
\end{equation}
Intuitively, this metric says that if you move by a small amount $v$ from a point $\theta$ in parameter space, then you actually move a distance approximated by $\|v\|_{\mathcal G} = \sqrt{v^T \mathcal G(\theta)v}$ on the loss surface. It thus seems natural to precondition the gradient field by this metric tensor; i.e. $\mathcal G(\theta)^{-1}\nabla L(\theta)$ since it would scale back large gradients. Luckily, this is easy thanks to the next lemma:
\begin{lemma}
The gradient $g(\theta)$ is an eigenvector of the matrix $\mathcal G(\theta) = \mathbb{I} + g(\theta) g(\theta)^T$ with eigenvalue $(1 + \|g(\theta)\|^2)$. This means that $G^\alpha(\theta) g(\theta) = (1 + \|g(\theta\|^2)^\alpha g(\theta)$. 
\end{lemma}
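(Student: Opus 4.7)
The plan is to prove the two claims separately, with the first being a one-line direct computation and the second following from spectral calculus on symmetric positive-definite matrices.

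First I would verify the eigenvalue claim by direct matrix--vector multiplication. Using associativity of matrix products, write
\begin{equation}
\mathcal{G}(\theta)\, g(\theta) = \bigl(\mathbb{I} + g(\theta) g(\theta)^T\bigr) g(\theta) = g(\theta) + g(\theta)\bigl(g(\theta)^T g(\theta)\bigr) = \bigl(1 + \|g(\theta)\|^2\bigr)\, g(\theta),
\end{equation}
where the key step is recognizing $g(\theta)^T g(\theta) = \|g(\theta)\|^2$ as a scalar that can be pulled out. This establishes that $g(\theta)$ is an eigenvector of $\mathcal{G}(\theta)$ with eigenvalue $\lambda := 1 + \|g(\theta)\|^2$. (The degenerate case $g(\theta) = 0$ is trivially fine since both sides vanish.)

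Next I would extend this to the fractional power $\mathcal{G}^\alpha(\theta)$. The matrix $\mathcal{G}(\theta) = \mathbb{I} + g(\theta) g(\theta)^T$ is symmetric and positive definite (it is the sum of the identity and a rank-one positive-semidefinite matrix), so by the spectral theorem it admits an orthonormal eigendecomposition $\mathcal{G} = Q \Lambda Q^T$, and fractional powers are defined by $\mathcal{G}^\alpha := Q \Lambda^\alpha Q^T$. Since $g(\theta)$ lies in the eigenspace associated with $\lambda$, applying $\mathcal{G}^\alpha$ scales it by $\lambda^\alpha$, giving
\begin{equation}
\mathcal{G}^\alpha(\theta)\, g(\theta) = \bigl(1 + \|g(\theta)\|^2\bigr)^\alpha g(\theta).
\end{equation}

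There is no real obstacle here; the argument is entirely elementary. The only point requiring a line of care is invoking spectral calculus to justify the $\alpha$-th power step, which could alternatively be done by induction on integer $\alpha$ (using $\mathcal{G}^{n+1} g = \mathcal{G}(\mathcal{G}^n g) = \mathcal{G}(\lambda^n g) = \lambda^{n+1} g$) and then extending to real $\alpha$ via the SPD functional calculus, but this is all standard.
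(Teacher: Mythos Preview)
Your proof is correct and follows essentially the same approach as the paper: the first part is the identical one-line computation, and for the second part the paper simply invokes the general fact that eigenvectors of an invertible matrix remain eigenvectors of its powers with eigenvalue $\lambda^\alpha$, whereas you supply the extra justification via the spectral theorem for SPD matrices. Your version is slightly more detailed but not materially different.
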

\begin{proof}
First of all, we have that $\mathcal G(\theta)g(\theta) = g(\theta) + g(\theta) g(\theta)^T g(\theta) =(1+ \|g(\theta)\|^2) g(\theta)$, which proves the first part. The second part comes from the general fact that if a matrix $G$ is invertible with eigenvalue $\lambda$ for eigenvector $v$ then $v$ is also an eigenvector of $\mathcal G^{\alpha}$ but with eigenvalue $\lambda^\alpha$
\end{proof}

\paragraph{Adagrad-like conditioning.}
Unfortunately, the resulting preconditioner $\mathcal G^{-1}(\theta) \nabla L(\theta)$ was not helpful. It seems that integrating past information from the trajectory (as done in Adam or AdaGrad for instance) is important here. We decided to accumulate past information in a similar way as done in AdaGrad \cite{Duchi2011} through a diagonal approximation $\operatorname{diag} G_n$ where $G_n = \sum_{l=1}^n g(\theta_l)g(\theta_l)^T$. This yields the preconditioner 
\begin{equation}
A'_n(\theta) = (1 + \operatorname{diag} G_n)^{-\nicefrac{1}{2}}.
\end{equation}
We chose the square root of the inverse above because it worked better in practice, and because it is also the choice made in Adam and AdaGrad. Note that our preconditioner differs only by the constant factor 1 which in AdaGrad is an hyperparameter $\epsilon$ that needs to be tune:
\begin{equation}
A_n(\theta) = (\epsilon + \operatorname{diag} G_n)^{-\nicefrac{1}{2}},
\end{equation}
We tried both preconditioners in Appendix \ref{appendix:bridging_the_gap_preconditioning} Fig.~\ref{figure:conditioners_mnist}
 and Fig.~\ref{figure:conditioners_fashion_mnist}, and ours worked noticeably better than AdaGrad.

\clearpage
\newpage

\section{Experiment Details}
\label{appendix:experiment_details}

We used two different TPU configurations:
\begin{itemize}
    \item \texttt{SMALL-TPU}: a single Google Cloud TPU V2
    \item \texttt{COLAB-TPU}: a single Google Cloud TPU V3 accessed via a Google Colab.
    \item \texttt{LARGE-TPU}: a single Google Cloud TPU V4
\end{itemize}

\subsection{Experiment Details for Figure
 \ref{figure:small_batch_vs_large_batch}}
\label{appendix:figure_1}
 
\paragraph{Left column (small batch regime):} RK4 has an advantage over Adam in the small batch regime (batch size is 16) for both MNIST (top) and Fashion MNIST (bottom). In both cases, we trained a 3 hidden layer MLP with 500 neurons in each layer trained for 10000 steps on 5 seeds with metrics reported at each iteration. 
{\bf Top (MNIST):} Adam learning rate was tuned to 0.001 while the decay parameters were set to Optax defaults \cite{optax2020github}, and RK4 learning rate was tuned to 0.003. The learning curves for this experiment are in Fig. \ref{figure:learning_curves_small_batch_mnist}.
{\bf Bottom (Fashion MNIST):} Adam learning rate was tuned to 0.001 while the decay parameters were set to Optax defaults \cite{optax2020github}, and RK4 learning rate was tuned to 0.003. The learning curves for this experiment are in Fig. \ref{figure:learning_curves_fashion_mnist_small_batch}. The 5 random seeds for each experiment sweep took
roughly 10 minutes of training on the \texttt{COLAB-TPU} configuration for each workload.

\paragraph{Right column (large batch regime):} RK4 suffers from a generalization gap over Adam. In the large batch regime (batchsize=60000) for both MNIST (top) and Fashion MNIST (bottom). In both cases, we trained a 3 hidden layer MLP with 500 neurons in each layer trained for 10000 steps on 5 seeds with metrics reported at each iteration. 
{\bf Top (MNIST):}  Adam learning rate was tuned to 0.001 while the decay parameters were set to Optax defaults \cite{optax2020github}, and RK4 learning rate was tuned to 0.004. The learning curves for this experiment are in Fig. \ref{figure:learning_curve_large_batch_mnist}.
{\bf Bottom (Fashion MNIST):} Adam learning rate was tuned to 0.001 while the decay parameters were set to Optax defaults \cite{optax2020github}, and RK4 learning rate was tuned to 0.003. The learning curves for this experiment are in Fig. \ref{figure:learning_curves_fashion_mnist_large_batch}. The 5 random seeds for each experiment took
roughly 4 hours of training on a \texttt{COLAB-TPU} configuration for each workload.

\begin{figure}[h] 
\centering 
\includegraphics[width=1.0\textwidth]{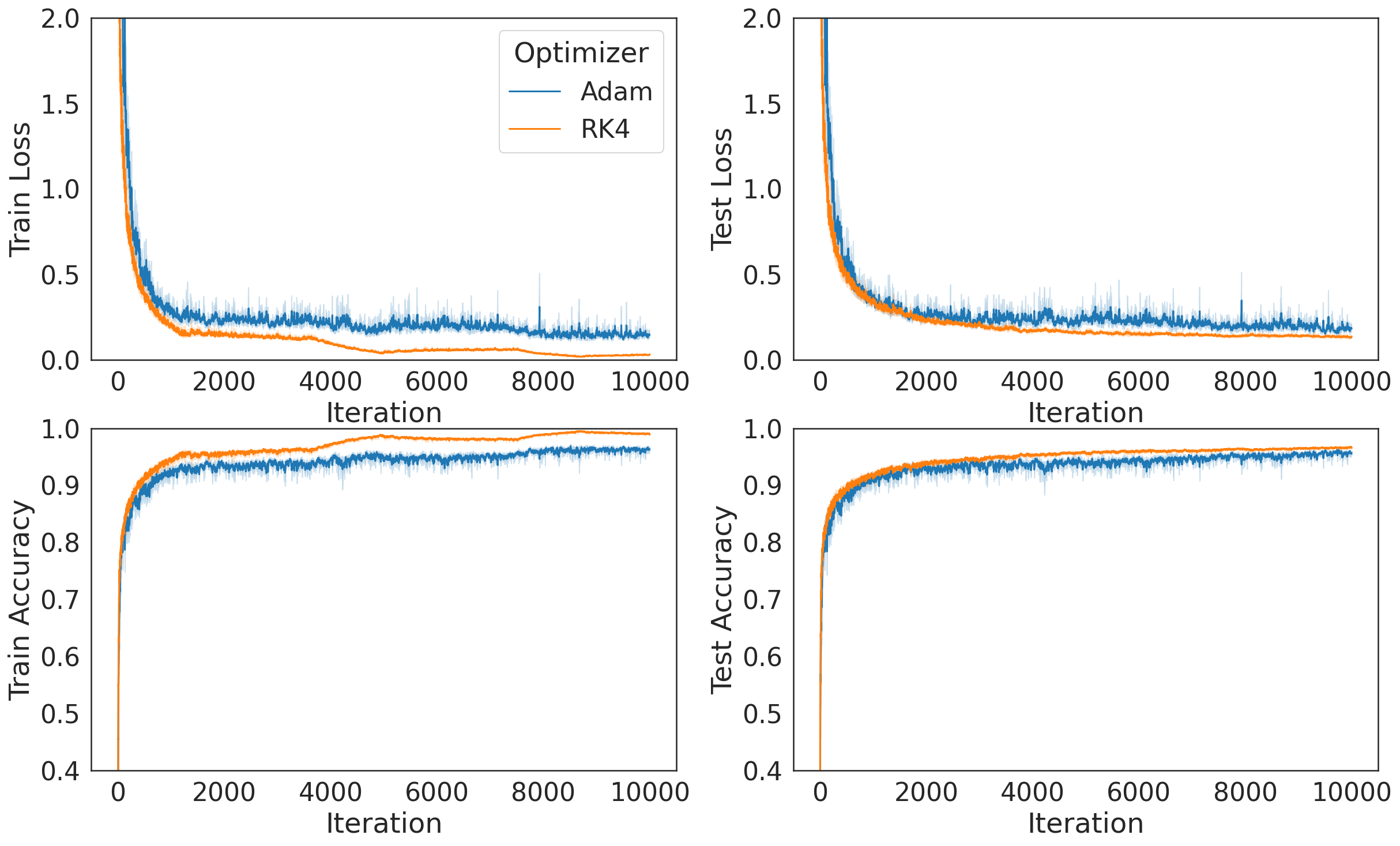}
\caption{
RK4 is competitive with Adam on MNIST with a 3 hidden layer MLP of 500 neurons each when trained with small batches of size 16 for 10000 steps. Adam's learning rate was tuned to 0.001 while the decay parameters were set to Optax defaults \cite{optax2020github}, and RK4 learning rate was tuned to 0.003; 5 seeds.}
\label{figure:learning_curves_small_batch_mnist}
\end{figure}

\begin{figure}[h] 
\centering 
\includegraphics[width=1.0\textwidth]{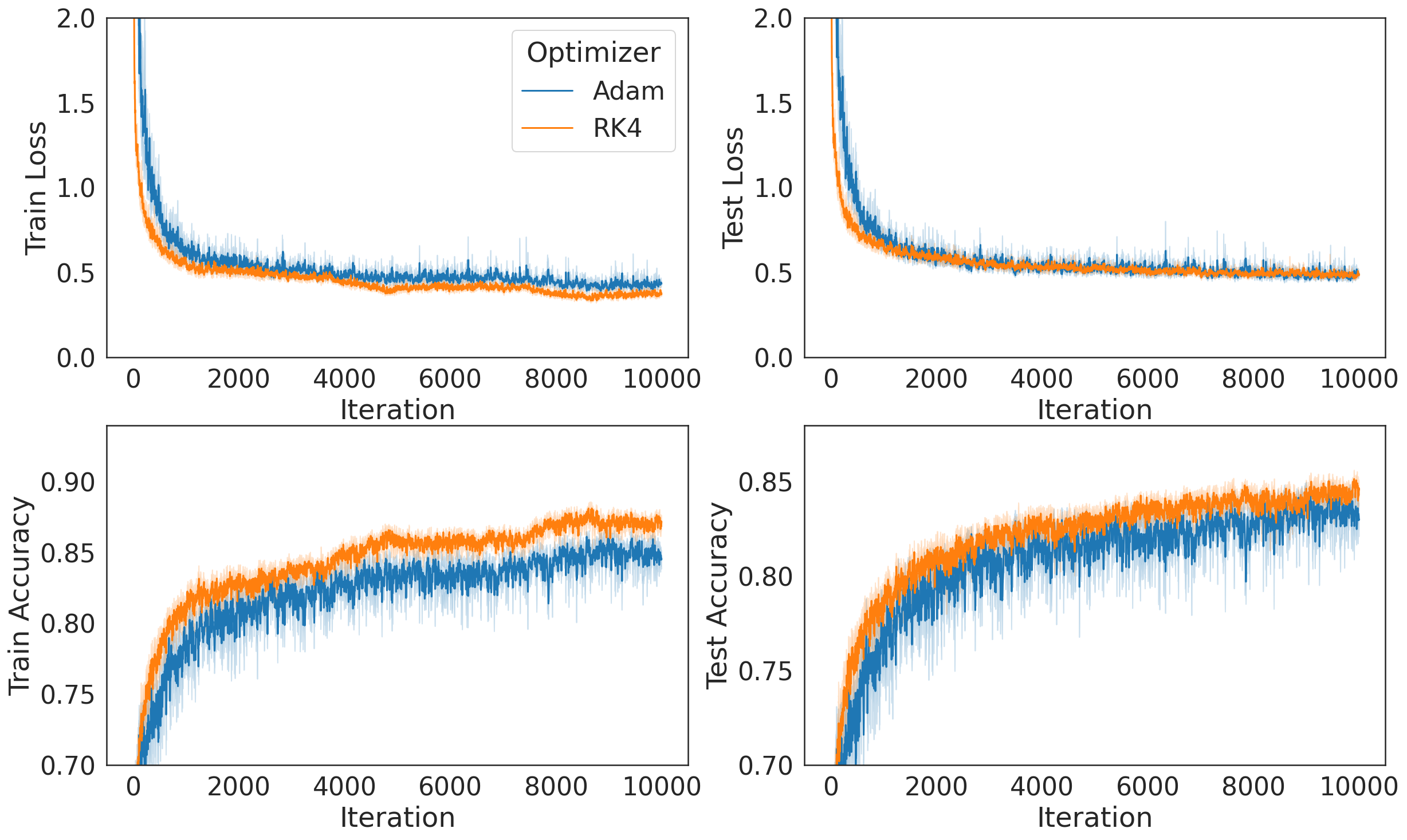}
\caption{
RK4 is competitive with Adam on Fashion MNIST with a 3 hidden layer MLP of 500 neurons each when trained with small batches of size 16 for 10000 steps. Adam's learning rate was tuned to 0.001 while the decay parameters were set to Optax defaults \cite{optax2020github}, and RK4 learning rate was tuned to 0.003; 5 seeds.}
\label{figure:learning_curves_fashion_mnist_small_batch}
\end{figure}

\begin{figure}[h] 
\centering 
\includegraphics[width=1.0\textwidth]{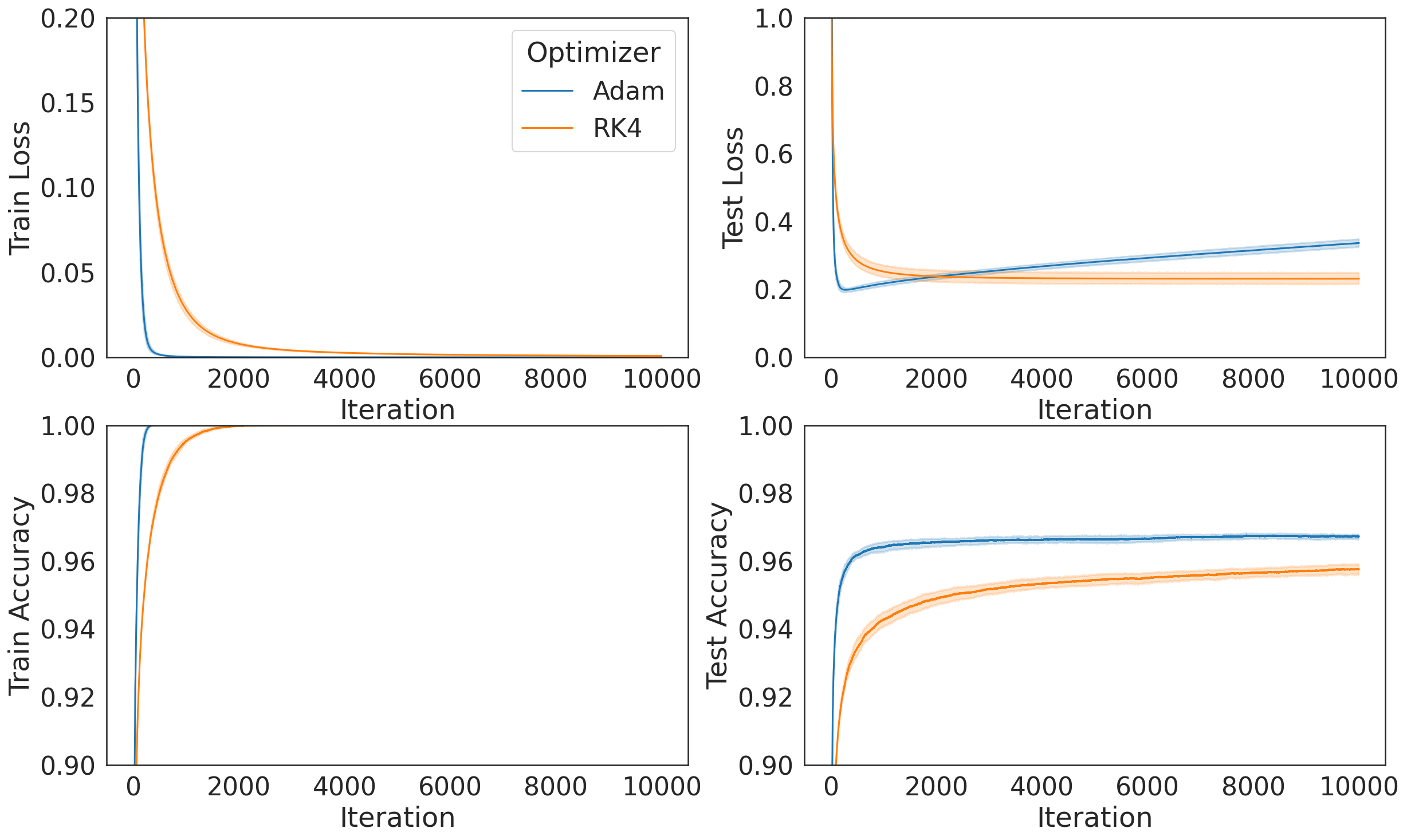}
\caption{
RK4 suffers from a generalization gap w.r.t. Adam on MNIST with a 3 hidden layer MLP of 500 neurons each when trained with full batches of size 60000 for 10000 steps. Adam's learning rate was tuned to 0.001 while the decay parameters were set to Optax defaults \cite{optax2020github}, and RK4 learning rate was tuned to 0.004; 5 seeds.}
\label{figure:learning_curve_large_batch_mnist}
\end{figure}

\begin{figure}[h] 
\centering 
\includegraphics[width=1.0\textwidth]{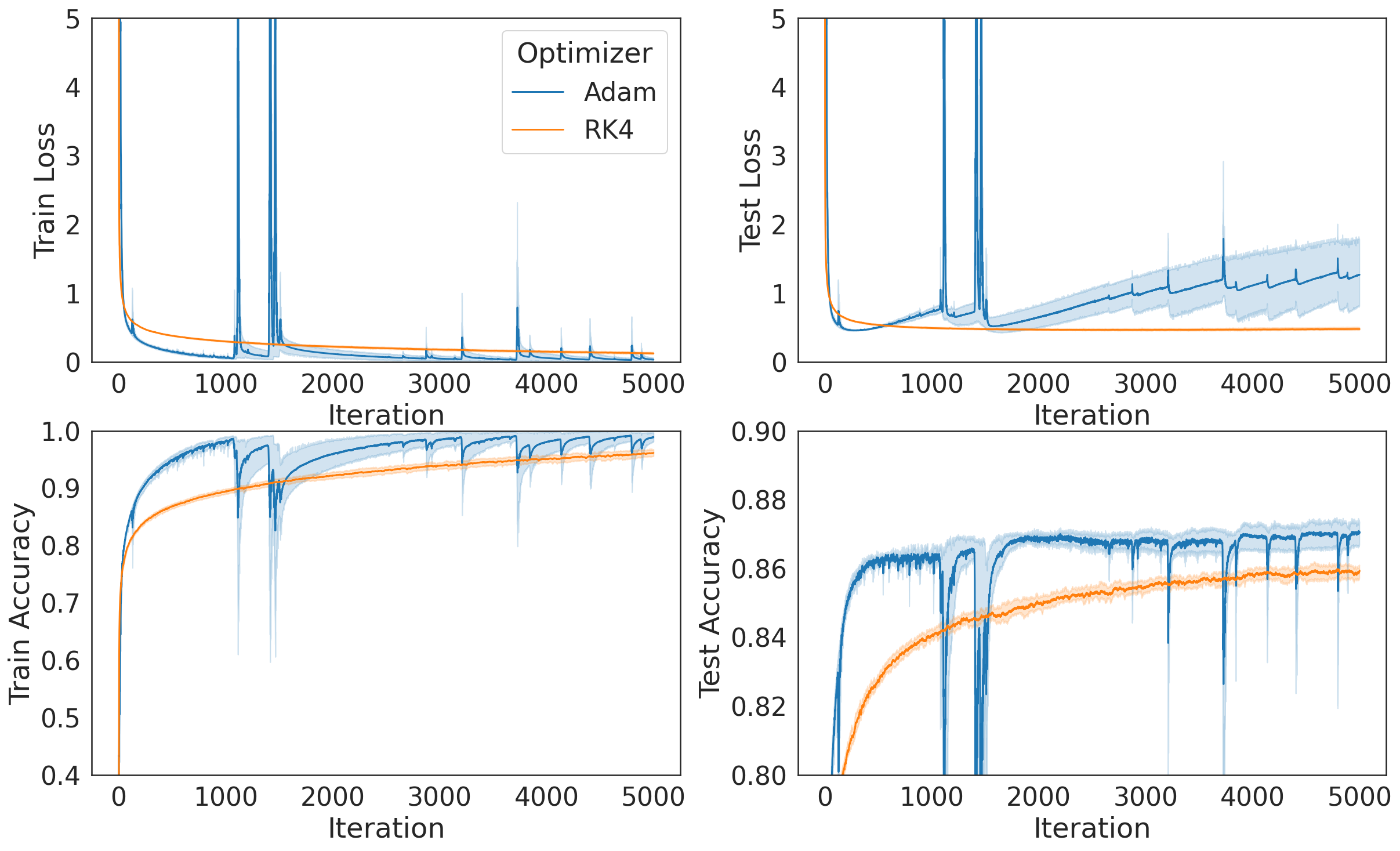}
\caption{
RK4 suffers from a generalization gap w.r.t. Adam on Fashion MNIST with a 3 hidden layer MLP of 500 neurons each when trained with full batches of size 60000 for 10000 steps. Adam's learning rate was tuned to 0.001 while the decay parameters were set to Optax defaults \cite{optax2020github}, and RK4 learning rate was tuned to 0.003; 5 seeds.}
\label{figure:learning_curves_fashion_mnist_large_batch}
\end{figure}

\begin{figure}[h] 
\centering 
\includegraphics[width=0.49\textwidth]{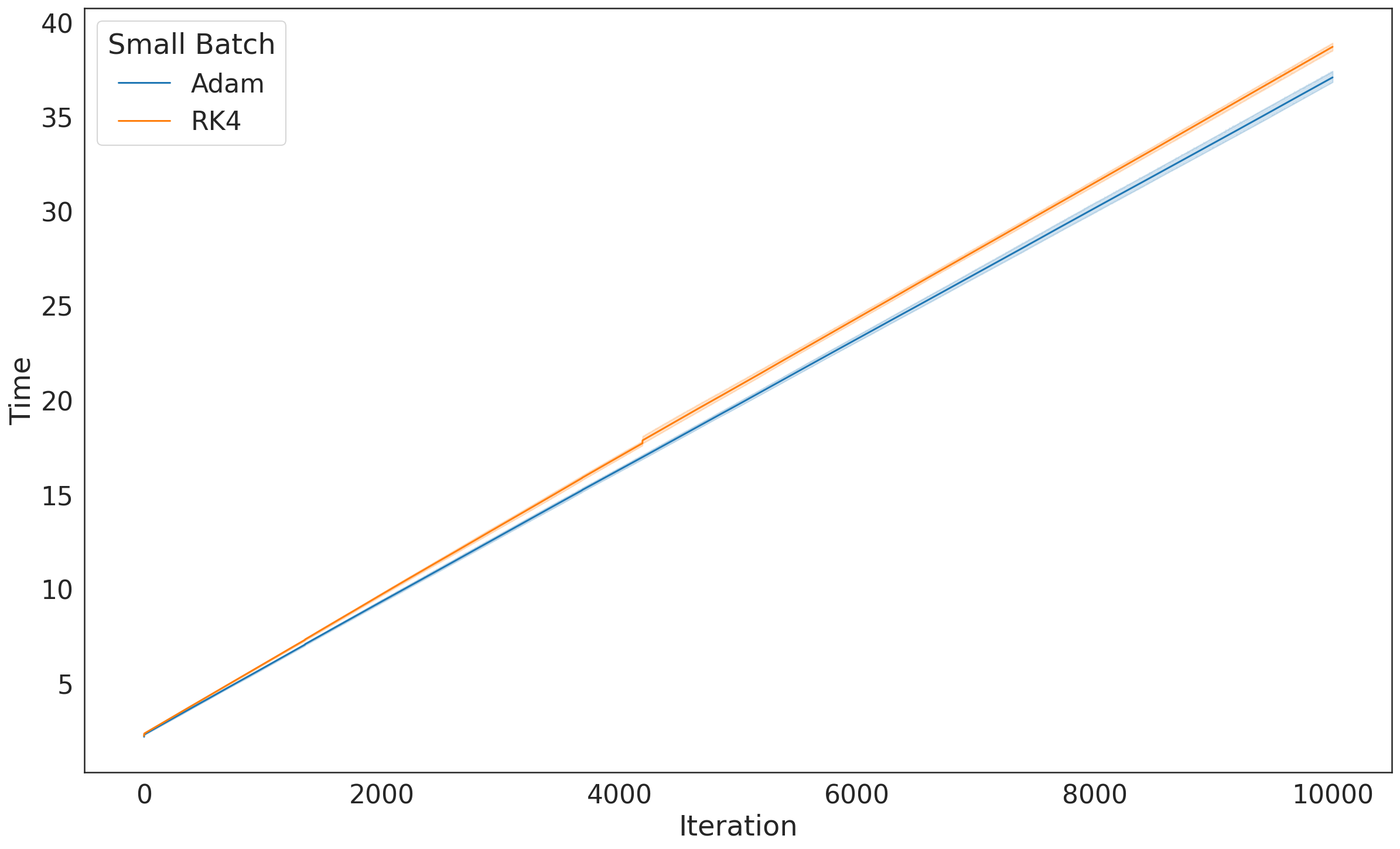}
\includegraphics[width=0.49\textwidth]{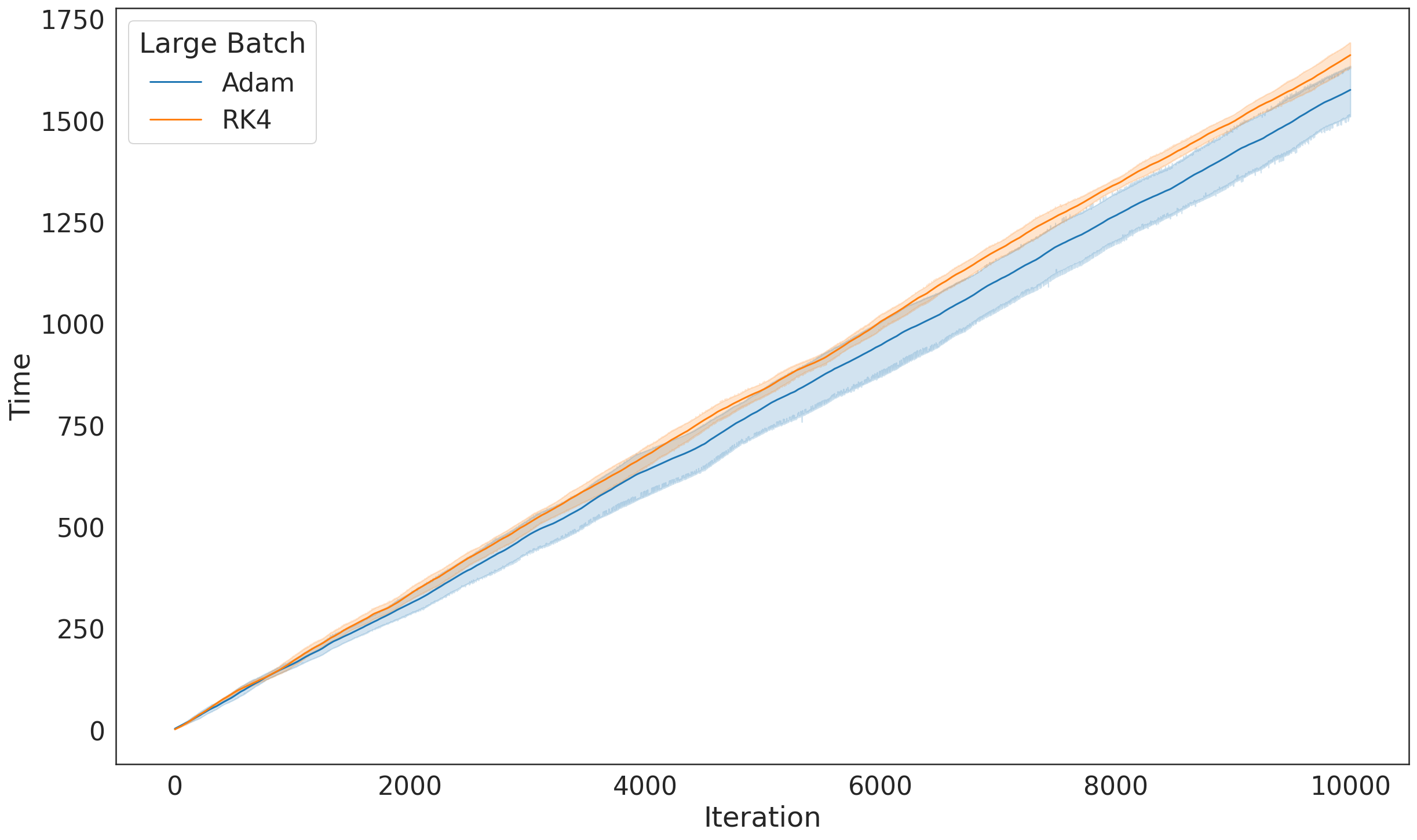}
\includegraphics[width=0.49\textwidth]{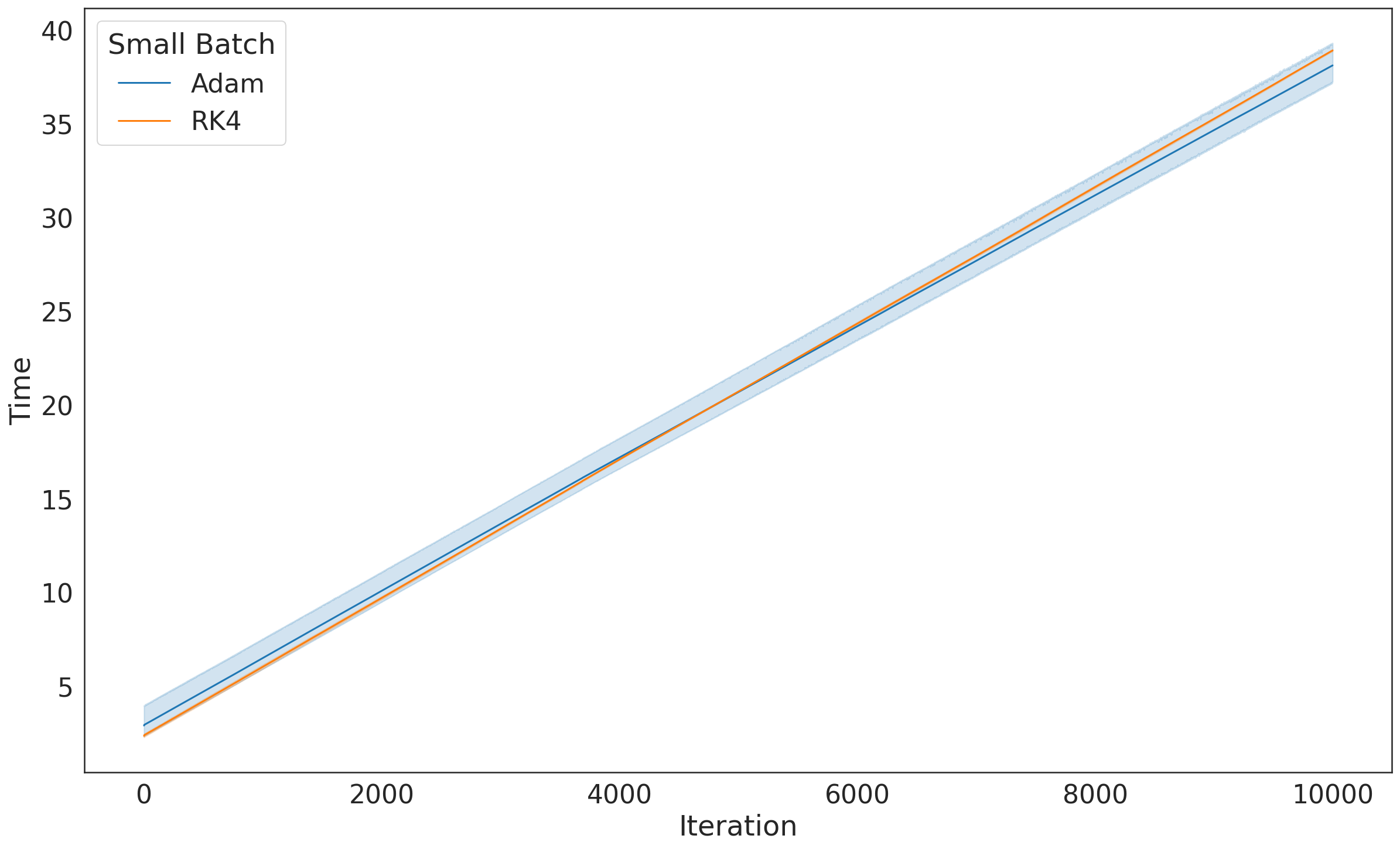}
\includegraphics[width=0.49\textwidth]{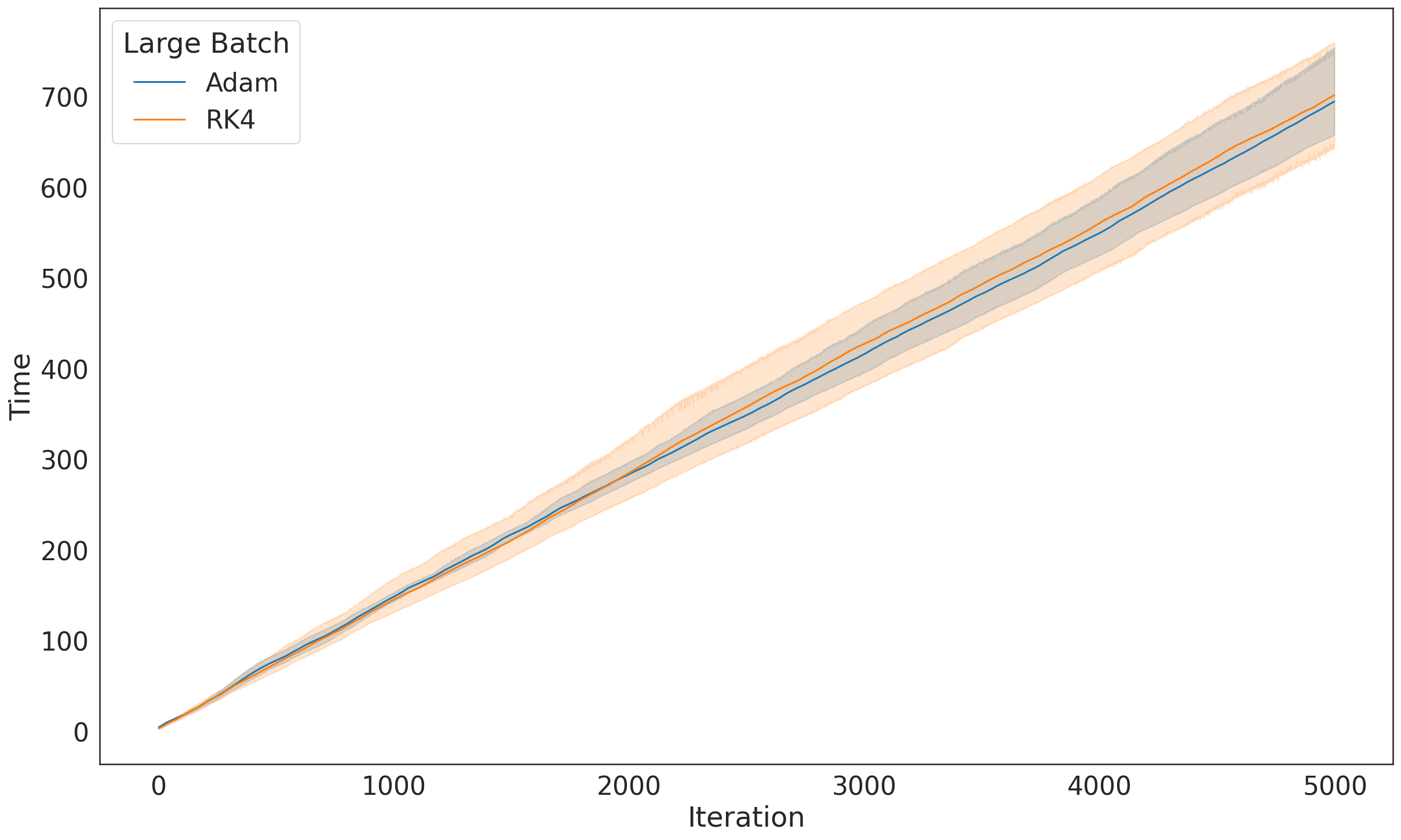}
\caption{
Wall-clock time versus steps between Adam and vanilla RK4 for MNIST ({\bf top row}) and Fashion MNIST ({\bf bottom row}) in the small batch regime ({\bf left column}) and the large batch regime ({\bf right column}).
}
\label{figure:time_versus_step}
\end{figure}

\clearpage
\newpage

\subsection{Experiment Details for Fig.
 \ref{figure:bridging_the_gap_preconditioning}}
\label{appendix:bridging_the_gap_preconditioning}

The modified AdaGrad preconditioning for RK4 helps bridge the generalization gap with Adam on MNIST (left) and Fashion MNIST (right). For the MNIST dataset, the tuned learning rates were as follows: Adam (0.002), RK4 (0.004), RK4 + ADGR (0.032), and RK4 + Modified-ADGR (0.064). Similarly, for the Fashion MNIST dataset, the learning rates were: Adam (0.002), RK4 (0.003), RK4 + ADGR (0.016), and RK4 + Modified-ADGR (0.032). The model architecture consists of three fully connected layers, each with 500 neurons. Models are trained using the full training dataset in each iteration (step). Fig. ~\ref{figure:conditioners_mnist} and Fig. ~\ref{figure:conditioners_fashion_mnist} display the training curves for these experiments alongside the previously presented test curves for comparison. The random seeds took roughly 5h of training on the \texttt{COLAB-TPU} configuration for each workload.

\begin{figure}[h] 
\centering 
    \begin{minipage}{0.48\textwidth}
        \centering
        \includegraphics[width=1.0\textwidth]{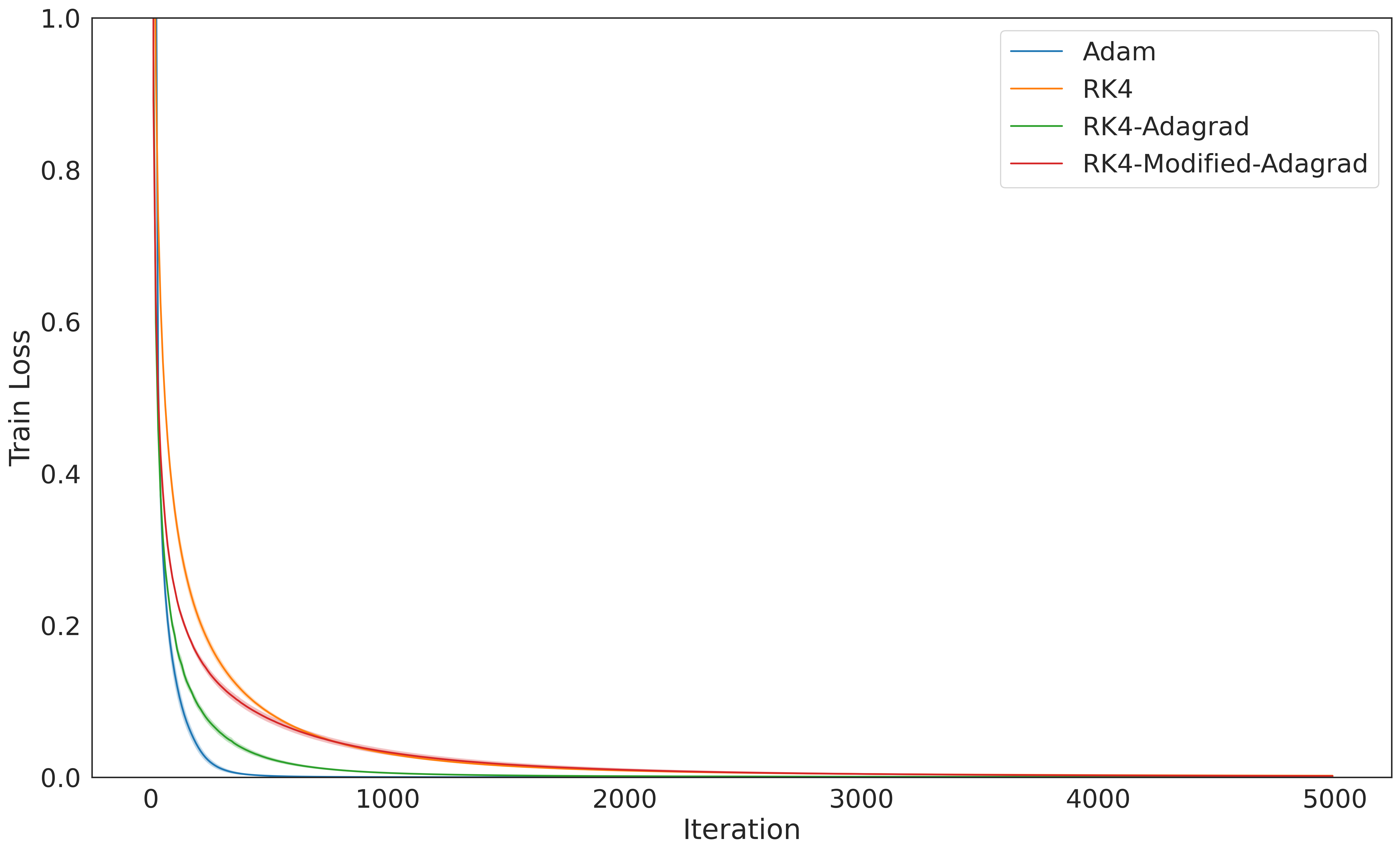}
    \end{minipage}
    \hfill 
    \begin{minipage}{0.48\textwidth}
        \centering
        \includegraphics[width=1.0\textwidth]{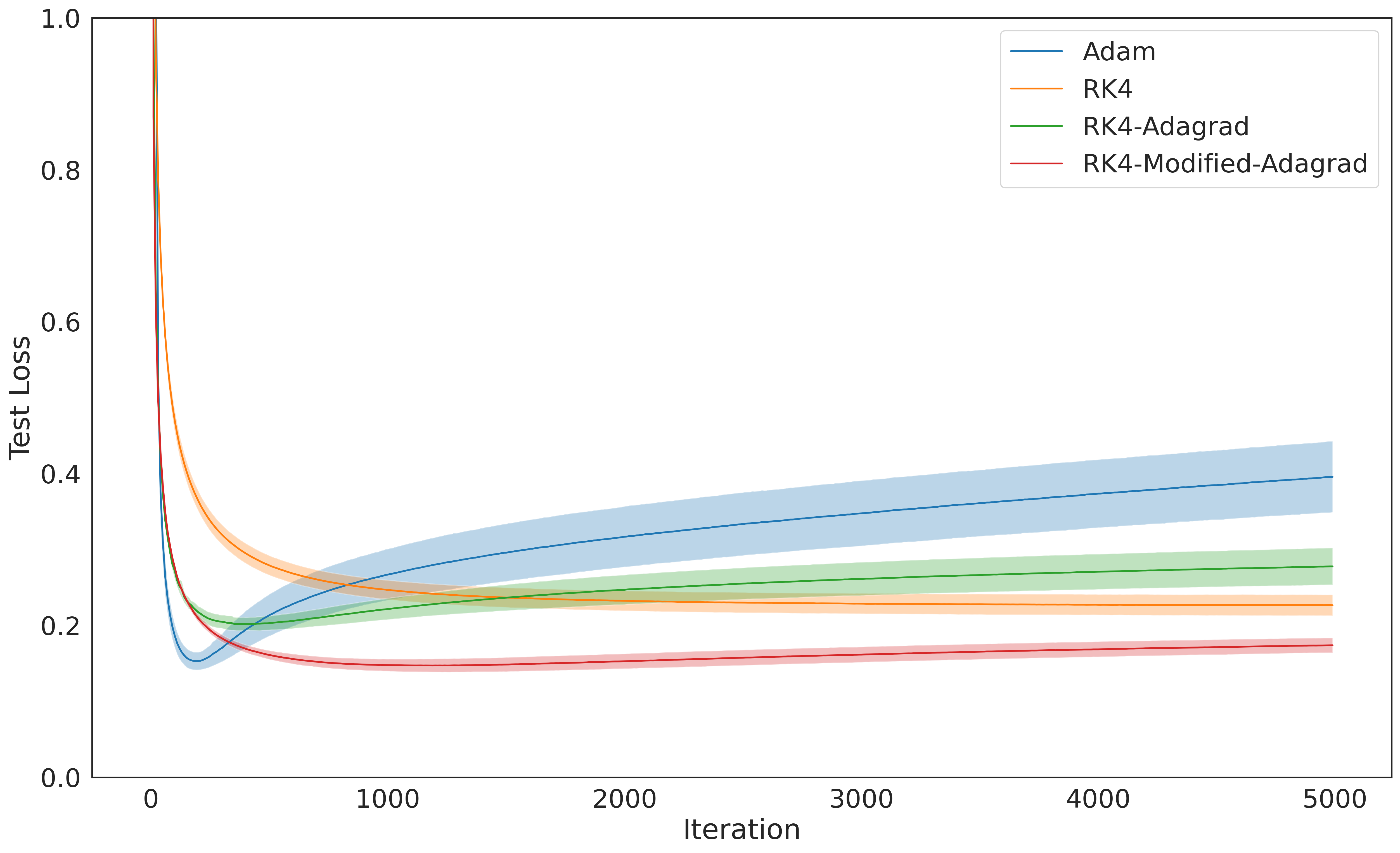}
    \end{minipage}
    \\
    \begin{minipage}{0.48\textwidth}
        \centering
        \includegraphics[width=1.0\textwidth]{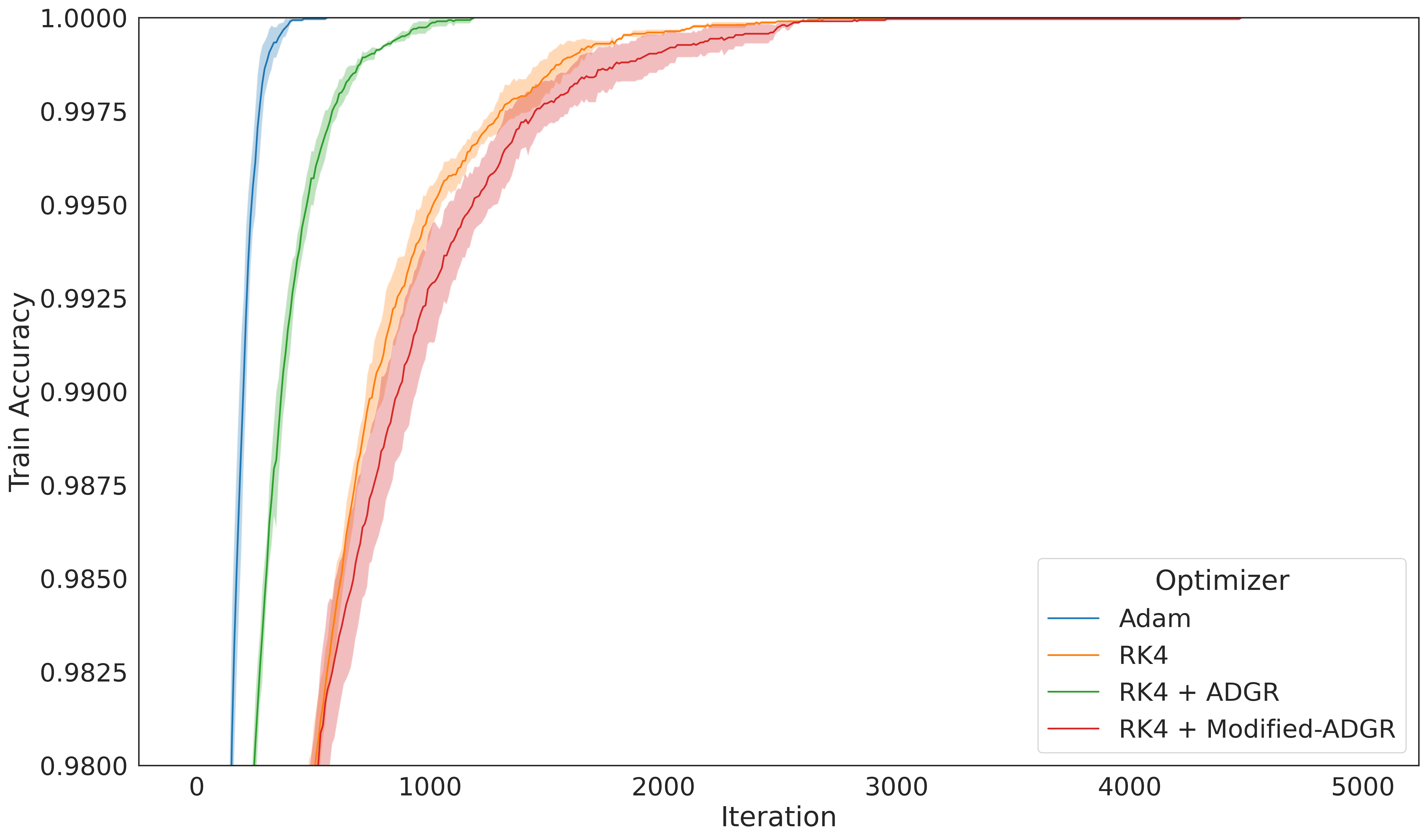}
    \end{minipage}
    \hfill 
    \begin{minipage}{0.48\textwidth}
        \centering
        \includegraphics[width=1.0\textwidth]{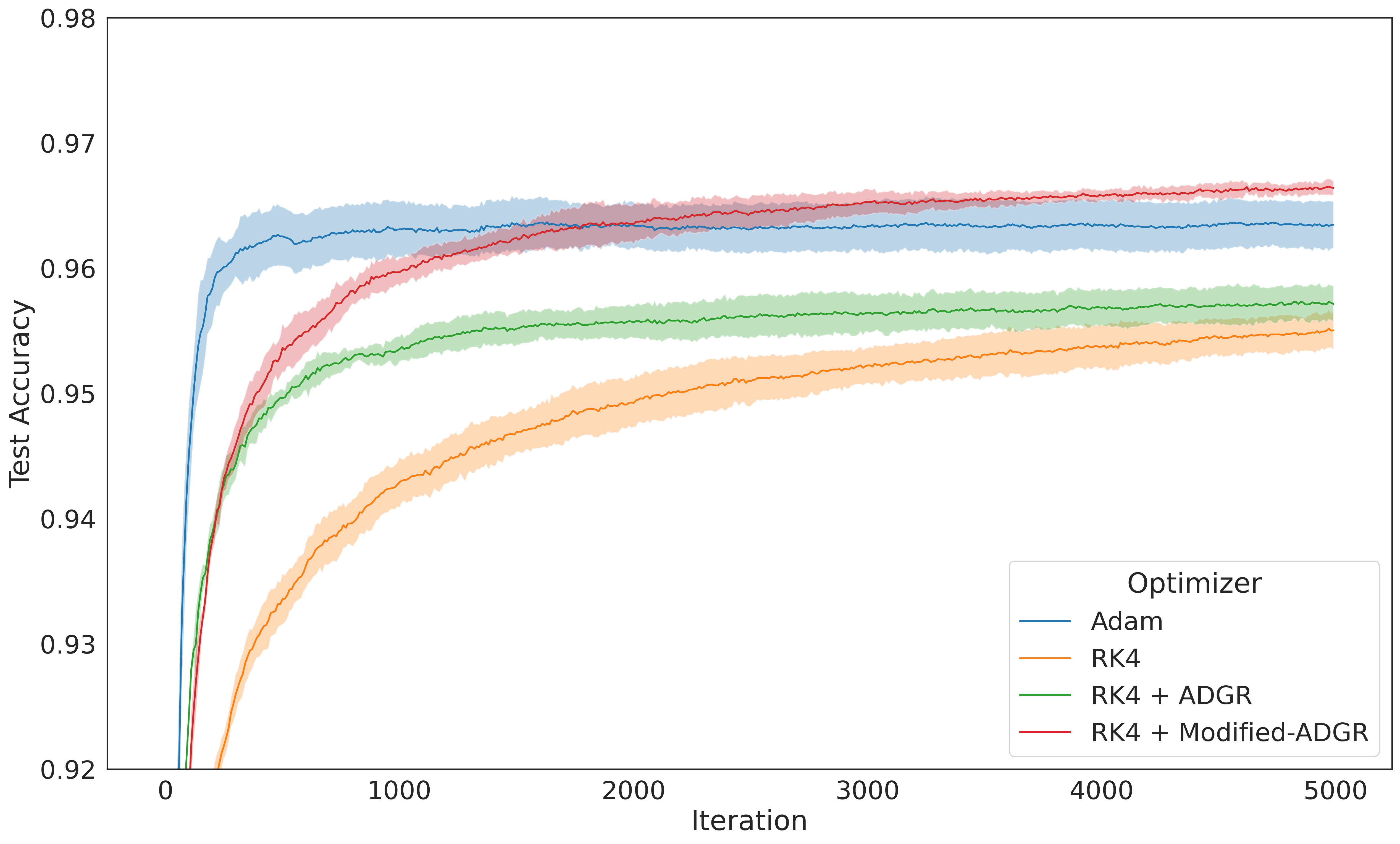}
    \end{minipage}
\caption{
RK4 with preconditioning competes effectively with Adam on MNIST when using a 3-layer MLP (500 neurons per layer) trained with full-batch gradient descent for 5,000 steps.}
\label{figure:conditioners_mnist}
\end{figure}

\begin{figure}[h] 
\centering 
    \begin{minipage}{0.48\textwidth}
        \centering
        \includegraphics[width=1.0\textwidth]{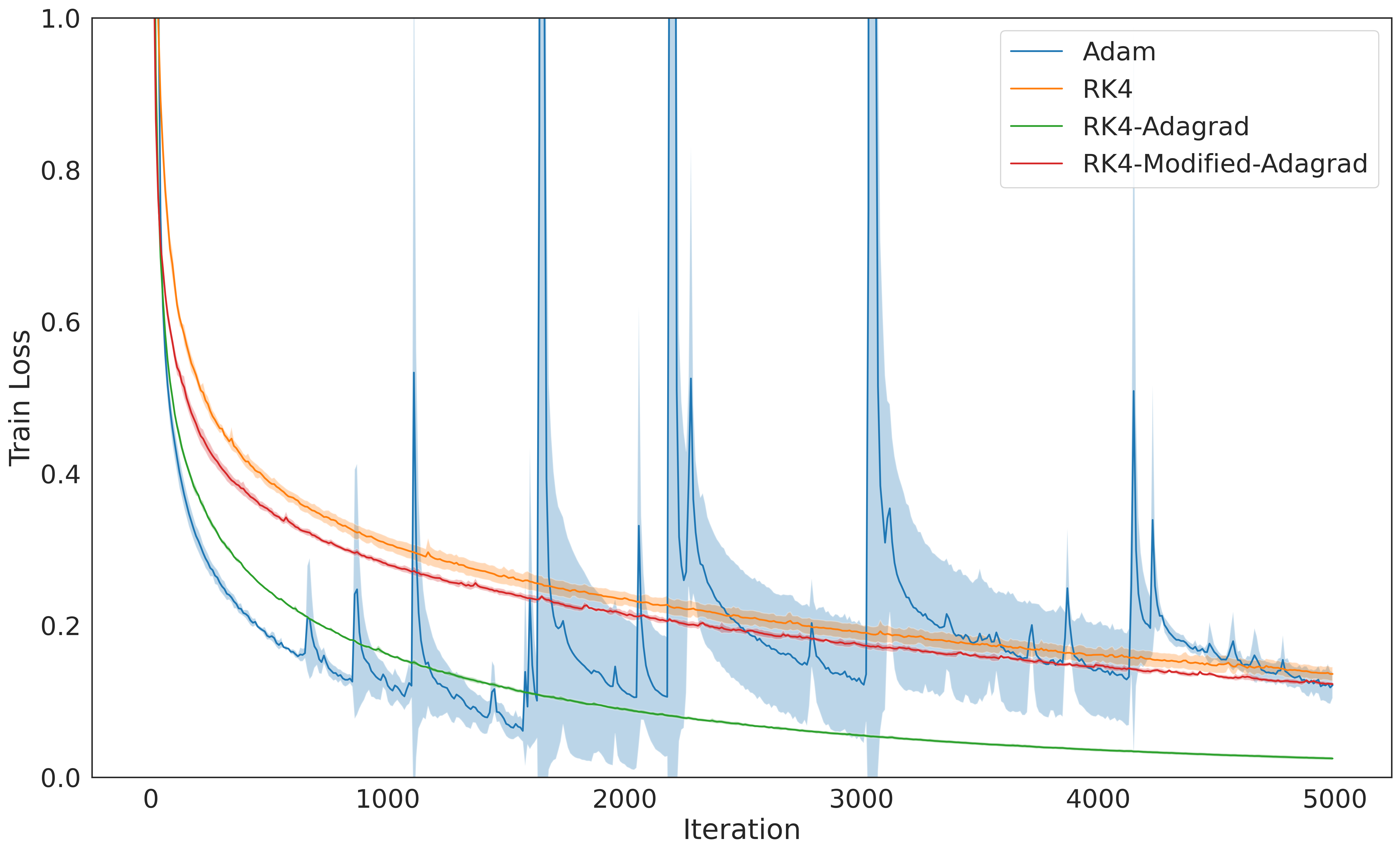}
    \end{minipage}
    \hfill 
    \begin{minipage}{0.48\textwidth}
        \centering
        \includegraphics[width=1.0\textwidth]{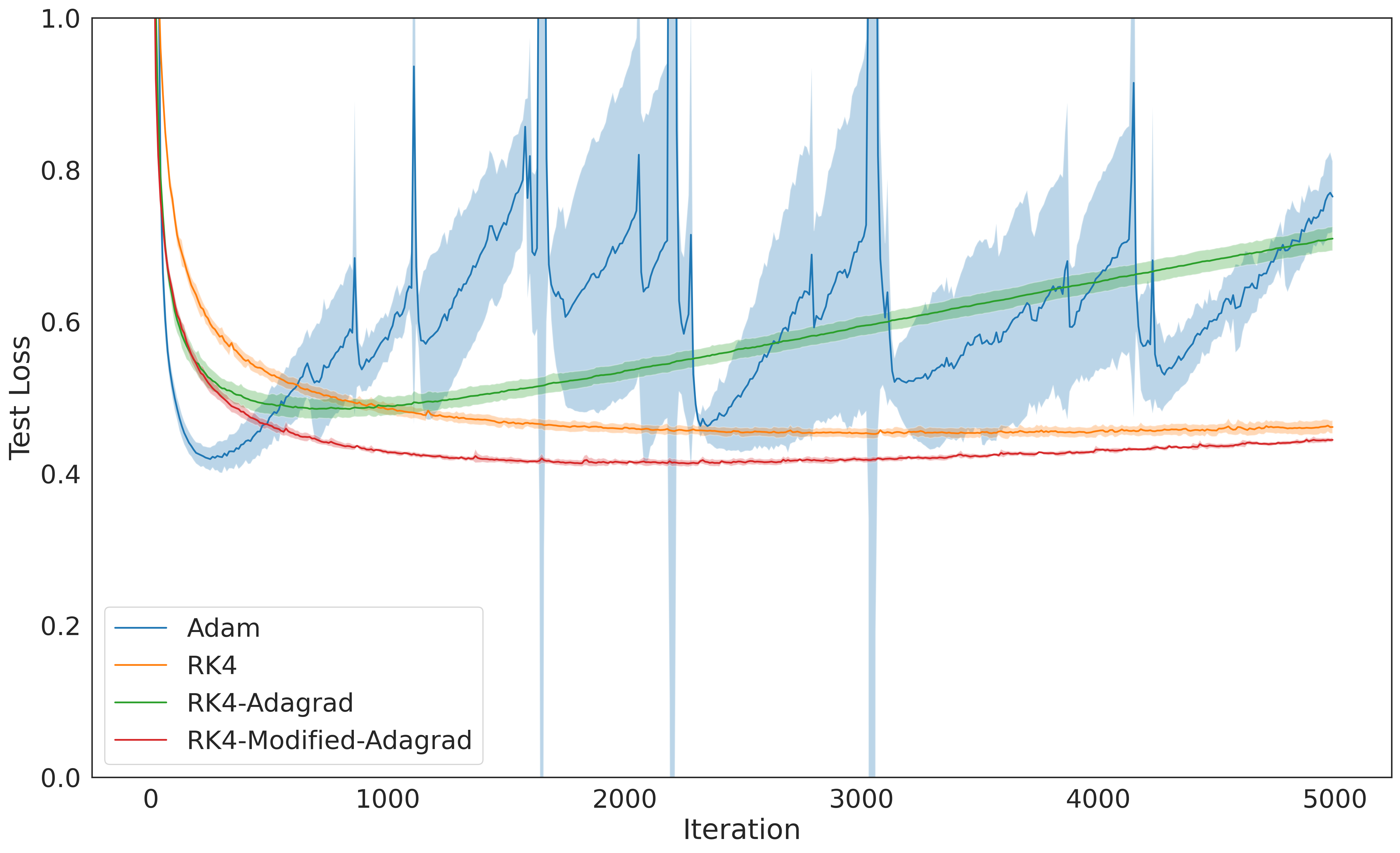}
    \end{minipage}
    \\
    \begin{minipage}{0.48\textwidth}
        \centering
        \includegraphics[width=1.0\textwidth]{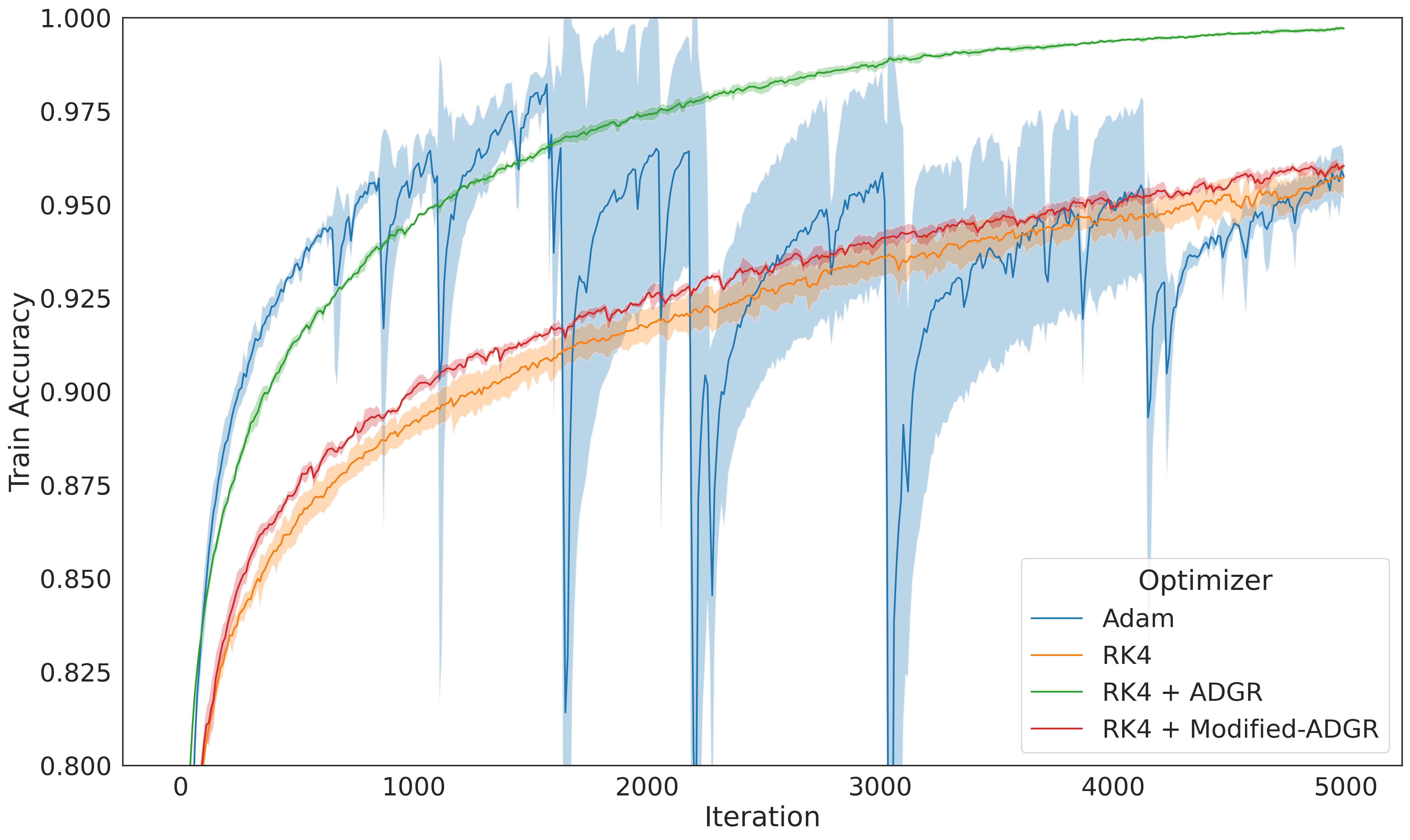}
    \end{minipage}
    \hfill 
    \begin{minipage}{0.48\textwidth}
        \centering
        \includegraphics[width=1.0\textwidth]{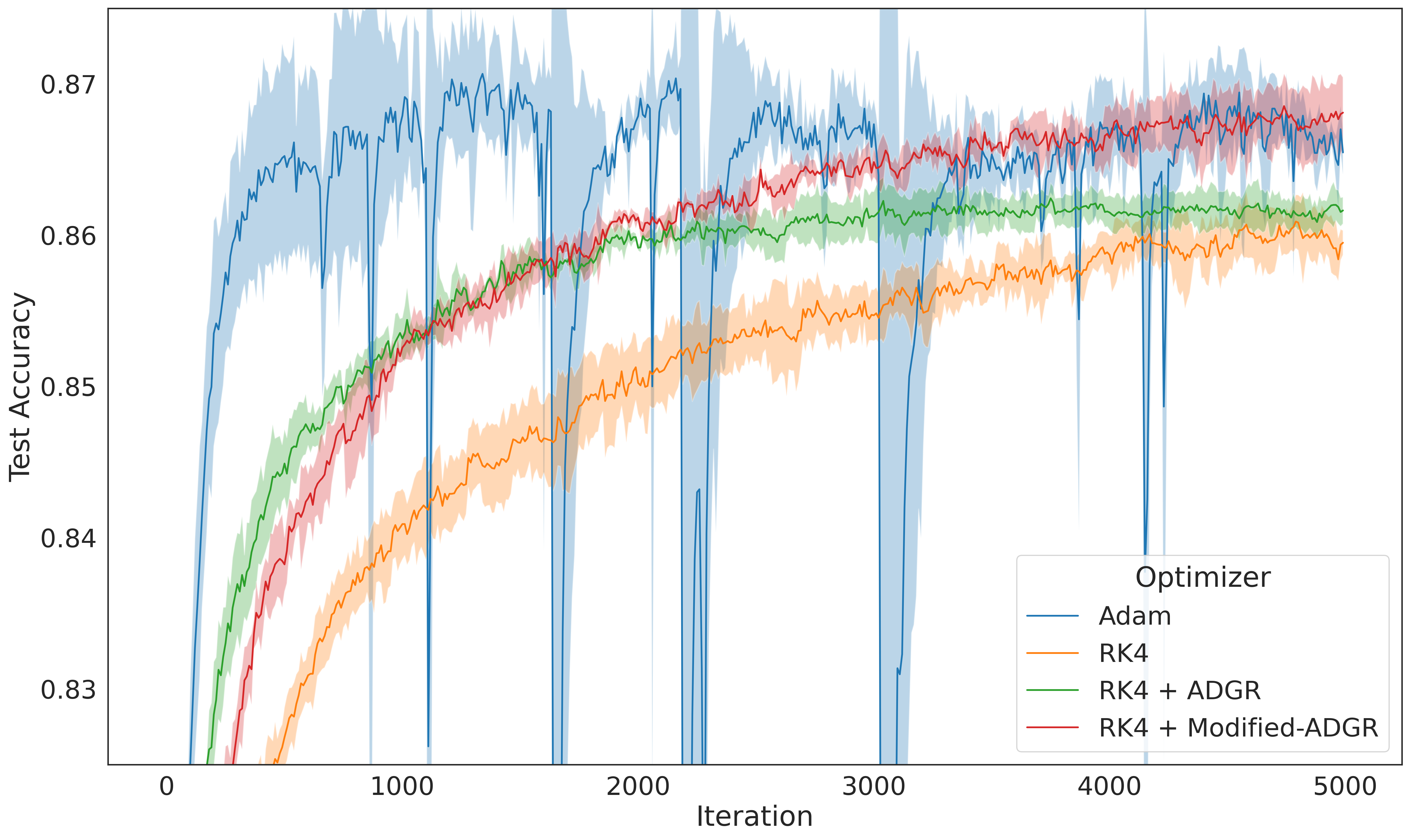}
    \end{minipage}
\caption{
RK4 with preconditioning competes effectively with Adam on Fashion MNIST when using a 3-layer MLP (500 neurons per layer) trained with full-batch gradient descent for 5,000 steps.}
\label{figure:conditioners_fashion_mnist}
\end{figure}

\clearpage
\newpage

\subsection{Experiment Details for Fig.
 \ref{figure:bridging_the_gap_adaptive_learning_rate}}
\label{appendix:bridging_the_gap_adaptive_learning_rate}

RK4 used with the DALR adaptive learning rate bridges the gap between and even surpasses Adam on MNIST (left) and Fashion MNIST (right). The experiment settings for Adam and RK4 for both workloads are the exact same as for the experiments in Fig. \ref{figure:small_batch_vs_large_batch} right column (large batch setting). The DALR adaptive learning rate has been tuned with values $p=0.8$ and $c=4.0$ for MNIST (left), and $p=0.8$ and $c=1.0$ for Fashion MNIST (right). The learning curves for MNIST are in Fig. \ref{figure:dalr_mnist} and those for Fashion MNIST are in Fig. \ref{figure:dalr_fashion_mnist}.
The 5 random seeds took roughly 6h of training on the \texttt{COLAB-TPU} configuration for each workload.

\begin{figure}[h] 
\centering 
    \includegraphics[width=1.0\textwidth]{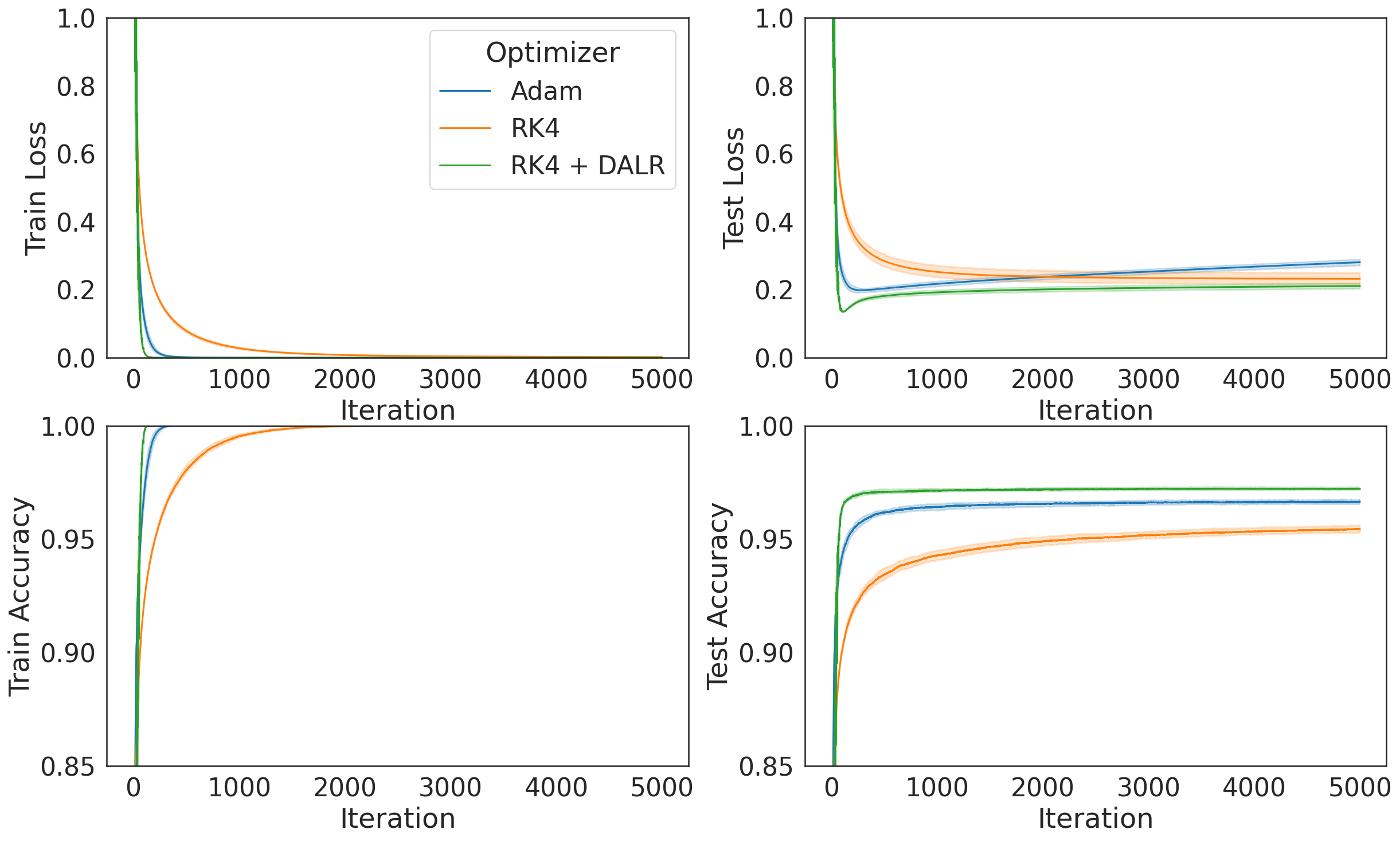}
\caption{
RK4 with adaptive learning rate (DALR) competes effectively with Adam on MNIST when using a 3-layer MLP (500 neurons per layer) trained with full-batch gradient descent for 5,000 steps.}
\label{figure:dalr_mnist}
\end{figure}

\begin{figure}[h] 
\centering 
    \includegraphics[width=1.0\textwidth]{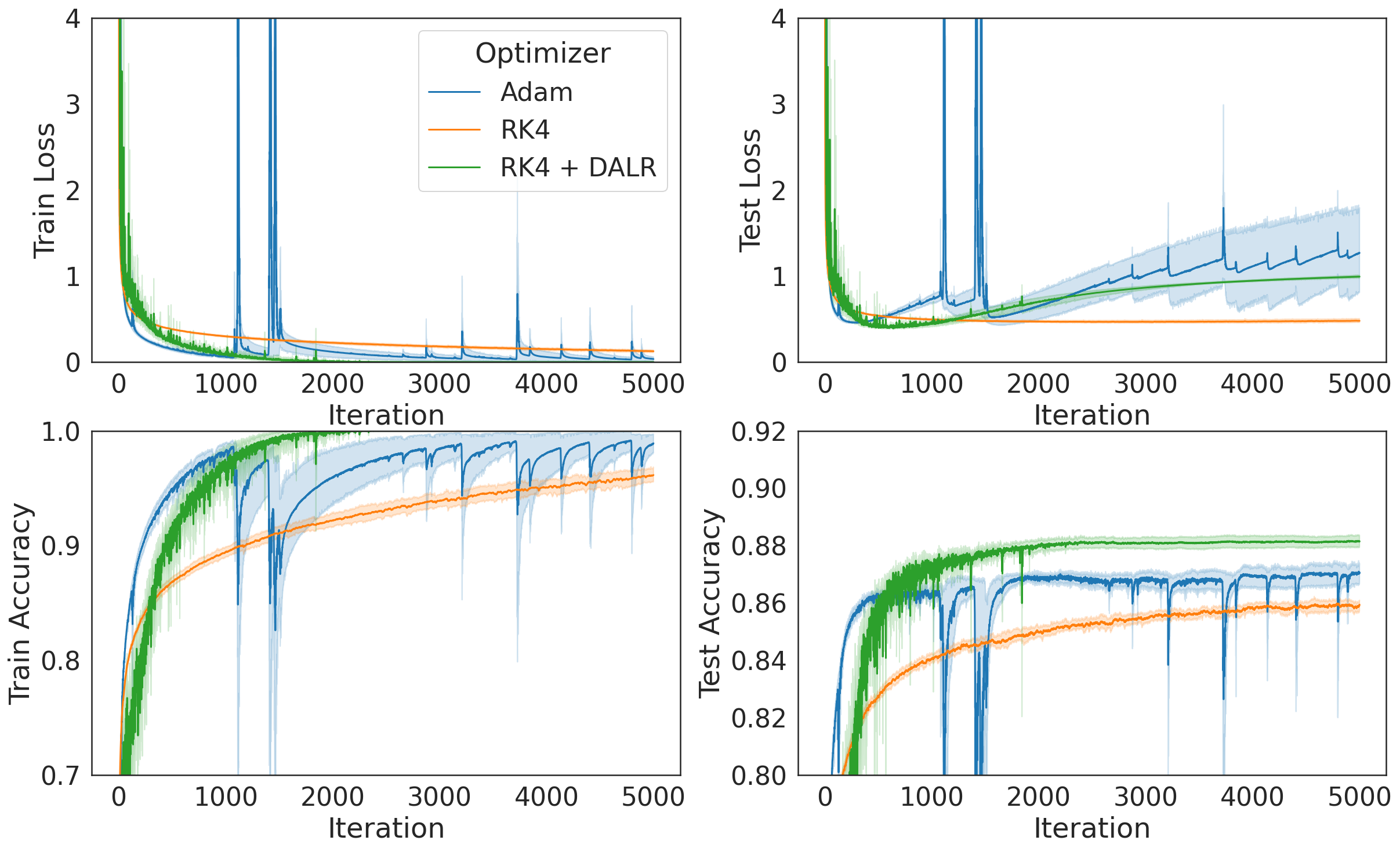}
\caption{
RK4 with adaptive learning rate (DALR) competes effectively with Adam on Fashion MNIST when using a 3-layer MLP (500 neurons per layer) trained with full-batch gradient descent for 5,000 steps.}
\label{figure:dalr_fashion_mnist}
\end{figure}

\clearpage
\newpage 

\subsection{Experiment Details for Fig.
 \ref{figure:bridging_the_gap_momentum}}
\label{appendix:bridging_the_gap_momentum}

RK4 used with momentum (see Section~\ref{section:momentum}) surpasses Adam on MNIST (left) and Fashion MNIST (right). The experiment settings for Adam and RK4 for both workloads are the exact same as for the experiments in the second row. The momentum has been tuned with values $\beta=0.95$ and learning rate 0.004 for MNIST (left) and $\beta=0.95$ and learning rate 0.001 for Fashion MNIST (right). The learning curves for MNIST are in Fig. \ref{figure:momentum_mnist} and those for Fashion MNIST are in Fig. \ref{figure:momentum_fashion_mnist}.

\begin{figure}[h] 
\centering 
    \includegraphics[width=1.0\textwidth]{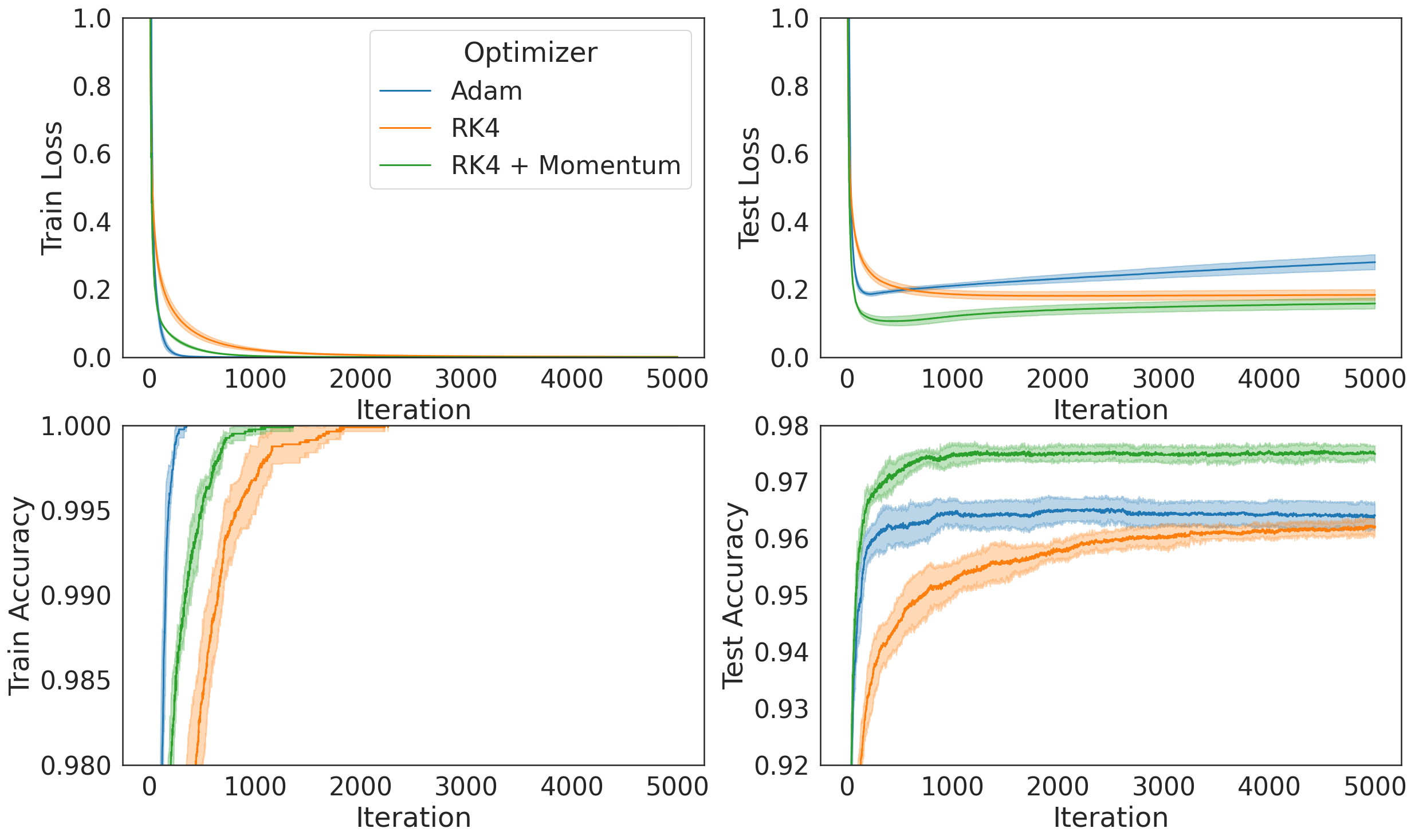}
\caption{
RK4 with momentum competes effectively with Adam on MNIST when using a 3-layer MLP (500 neurons per layer) trained with full-batch gradient descent for 5,000 steps.}
\label{figure:momentum_mnist}
\end{figure}

\begin{figure}[h] 
\centering 
    \includegraphics[width=1.0\textwidth]{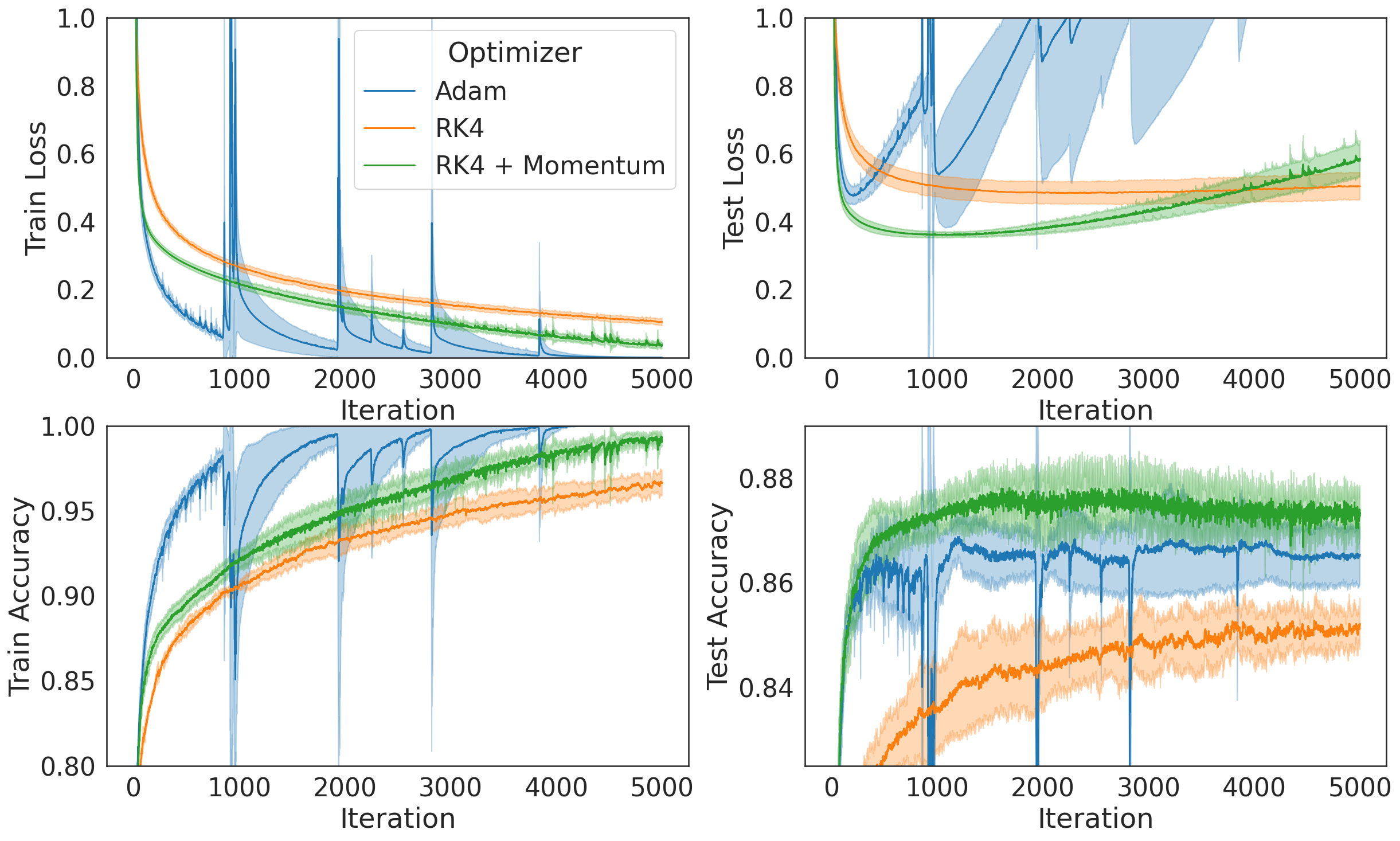}
\caption{
RK4 with momentum competes effectively with Adam on Fashion MNIST when using a 3-layer MLP (500 neurons per layer) trained with full-batch gradient descent for 5,000 steps.}
\label{figure:momentum_fashion_mnist}
\end{figure}

All 5 random seeds for each experiment took 5 hours and 10 minutes of training on the \texttt{SMALL-TPU} configuration for each workload.

\clearpage
\newpage

\subsection{Experiment Details for Table
 \ref{table:rk_sota}} 
\label{appendix:sota_experiment_details}
 
 This table presents a comparison of the best test set accuracy for various workloads, each comprising a specific dataset and model pairing. Models within each workload were trained using either the RK4 optimizer or a competitive baseline optimizer. For a fair comparison, all models for a given workload were trained for an identical number of steps. The reported accuracy and standard for each optimizer and workload combination is derived from measuring peak test accuracy, repeated across 5 independent trials with each independent trial employing a distinct random seed per trial. This seed influenced model weight initialization, data shuffling, and the stochastic elements of any data augmentation techniques.
 
For each of the experiments described below, we used one the following TPU configurations 
\begin{itemize}
    \item Google Cloud TPU V2 arranged as a 4x4 TPU pod
    \item Google Cloud TPU V5 arranged as 2x4 TPU pod
\end{itemize}
 
 \paragraph{MNIST on DNN} We trained a 3 layer DNN with 500 neurons per layer for 10,000 steps measuring training evaluation metrics every step. For the baseline, we used Adam optimizer with learning rate 0.001 and batch size 16. The 5 random seeds took roughly 3 hrs of training on TPUv2. For the RK4 optimizer, we used a learning rate of 0.003 and a batch size of 16. The 5 random seeds took roughly 7 hrs of training on TPUv2.
 
 \paragraph{MNIST on CNN} We trained a 2 layer CNN model, both layers having kernel size [3, 3], the first convolutional layer with 20 filters and the second convolutional layer with 10 filters. The model was trained for 10,000 steps measuring evaluation metrics every 10 steps. The 5 random seeds took roughly 3 hrs of training on TPUv2. For the baseline, we used stochastic gradient descent with a momentum of 0.9, constant learning rate of 0.01, batch size 64 and l2 weight decay factor of 0.002. For the RK4 optimizer, we used a constant learning rate of 0.1 with batch size of 64 and no weight decay. The 5 random seeds took roughly 7 hrs of training on TPUv2.
 
 \paragraph{Fashion-MNIST on DNN} We trained a 3 layer DNN with 500 neurons per layer for 10,000 steps measuring training evaluation metrics every step. For the baseline, we used Adam optimizer with learning rate 0.001 and batch size 16. The 5 random seeds took roughly 3 hrs of training on TPUv2. For the RK4 optimizer, we used a learning rate of 0.003 and a batch size of 16.  The 5 random seeds took roughly 7 hrs of training on TPUv2.
 
  \paragraph{Fashion-MNIST on CNN} We trained a 2 layer CNN model, both layers having kernel size [3, 3], the first convolutional layer with 20 filters and the second convolutional layer with 10 filters. The model was trained for 10,000 steps measuring evaluation metrics every 10 steps. For the baseline, we used stochastic gradient descent with a momentum of 0.9, constant learning rate of 0.01, batch size 512 and l2 weight decay factor of 0.0005. The 5 random seeds took roughly 3 hrs of training on TPUv2. For the RK4 optimizer, we used a constant learning rate of 0.5 with batch size of 64 and no weight decay. The 5 random seeds took roughly 7 hrs of training on TPUv2.
 
  \paragraph{CIFAR-10 on WRN} We trained a Wide-ResNet (WRN), specifically the WRN 28-10 model, as proposed by \cite{zagoruyko2016wide}. This network has a depth of 28 convolutional layers and a widening factor of 10. The WRN 28-10 was trained for 117,187 steps (or 200 epochs) using stochastic gradient descent with a momentum of 0.9, a learning rate of 0.1 with cosine decay and batch size of 128. The 5 random seeds took roughly 19 hrs of training on TPUv2. For the RK4 optimizers, we used a base learning rate of 0.5 with cosine decay and a batch size of 512. The 5 random seeds took roughly 32 hrs of training on TPUv2. 

  \paragraph{CIFAR-100 on WRN} We trained a Wide-ResNet (WRN), specifically the WRN 28-10 model, as proposed by \cite{zagoruyko2016wide}. This network has a depth of 28 convolutional layers and a widening factor of 10. The WRN 28-10 was trained for 117,187 steps using stochastic gradient descent with a momentum of 0.9, a learning rate of 0.1 with cosine decay, a batch size of 128 and l2 weight decay factor of 0.0005. The 5 random seeds took roughly 2 hrs of training on TPUv5. For the RK4 optimizers, we used a base learning rate of 0.75 with cosine decay and a batch size of 256. The 5 random seeds took roughly 3 hrs of training on TPUv5.
  
  \paragraph{ImageNet on ViT} We trained a Vision Transformer (ViT) model, ViT-B/16 \cite{dosovitskiy2020image}, on the ImageNet ILSVRC 2012 dataset. This model processes images by dividing them into 16x16 patches, which are then linearly embedded and supplied with positional embeddings before being fed into a series of Transformer encoder blocks. The ViT-B/16 model was trained directly on the ImageNet 1000-class dataset for 186,666 steps with evaluation every 1,866 steps. The baseline optimizer we used was NAdamW. This optimizer combines the Nesterov-accelerated adaptive moment estimation (NAdam) \cite{dozat2016incorporating} with the decoupled weight decay approach from AdamW \cite{loshchilov2017decoupled}. We used $\beta_1= 0.9414$, $\beta_2=0.9768$, $\epsilon = 1 \times 10^{-8}$, weight decay of 0.02, and 20\% label smoothing. The 5 random seeds took roughly 12 hrs of training on TPUv5. The RK4 optimizer was trained using learning rate 0.16 with cosine warmup and batch size 1024. The 5 random seeds took roughly 26 hrs of training on TPUv5.

\clearpage
\newpage

\section{Additional Experiments}
\label{appendix:additional_experiments}

\subsection{Wall-clock time Adam v.s. RK4 on CIFAR-10}
\label{appendix:figure_time_vs_step_cifar}

In Fig. \ref{figure:time_vs_step_cifar}, we trained a ResNet-18 \cite{he2016resnet,zagoruyko2016wide} on CIFAR-10 \cite{cifar10} for 1000 steps measuring the wall-clock time at every step for 5 random seeds. Adam's learning rate was tuned to 0.001 while the decay parameters were set to Optax defaults \cite{optax2020github}, and RK4 learning rate was tuned to 0.004. 
{\bf Left plot:} The batch-size was set to 1. The 5 random seeds took
roughly 2h of training on a single Google Cloud TPU V3 for each workloads.
{\bf Right plot:} The batch-size was set to 8192. The 5 random seeds took
roughly 23h of training on the \texttt{COLAB-TPU} configuration for each workloads.

\begin{figure}[h] 
\centering 
\includegraphics[width=0.49\textwidth]{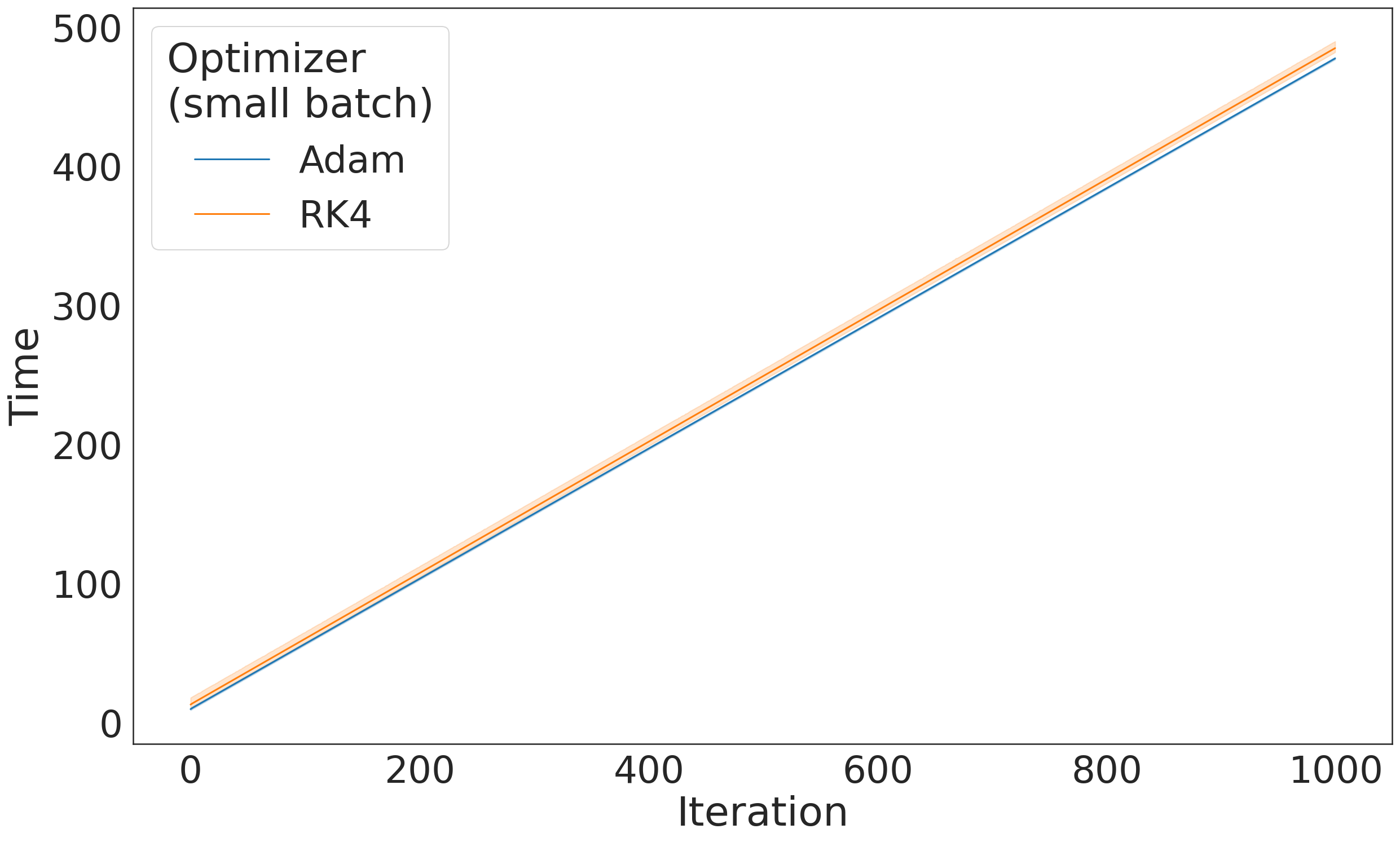}
\includegraphics[width=0.49\textwidth]{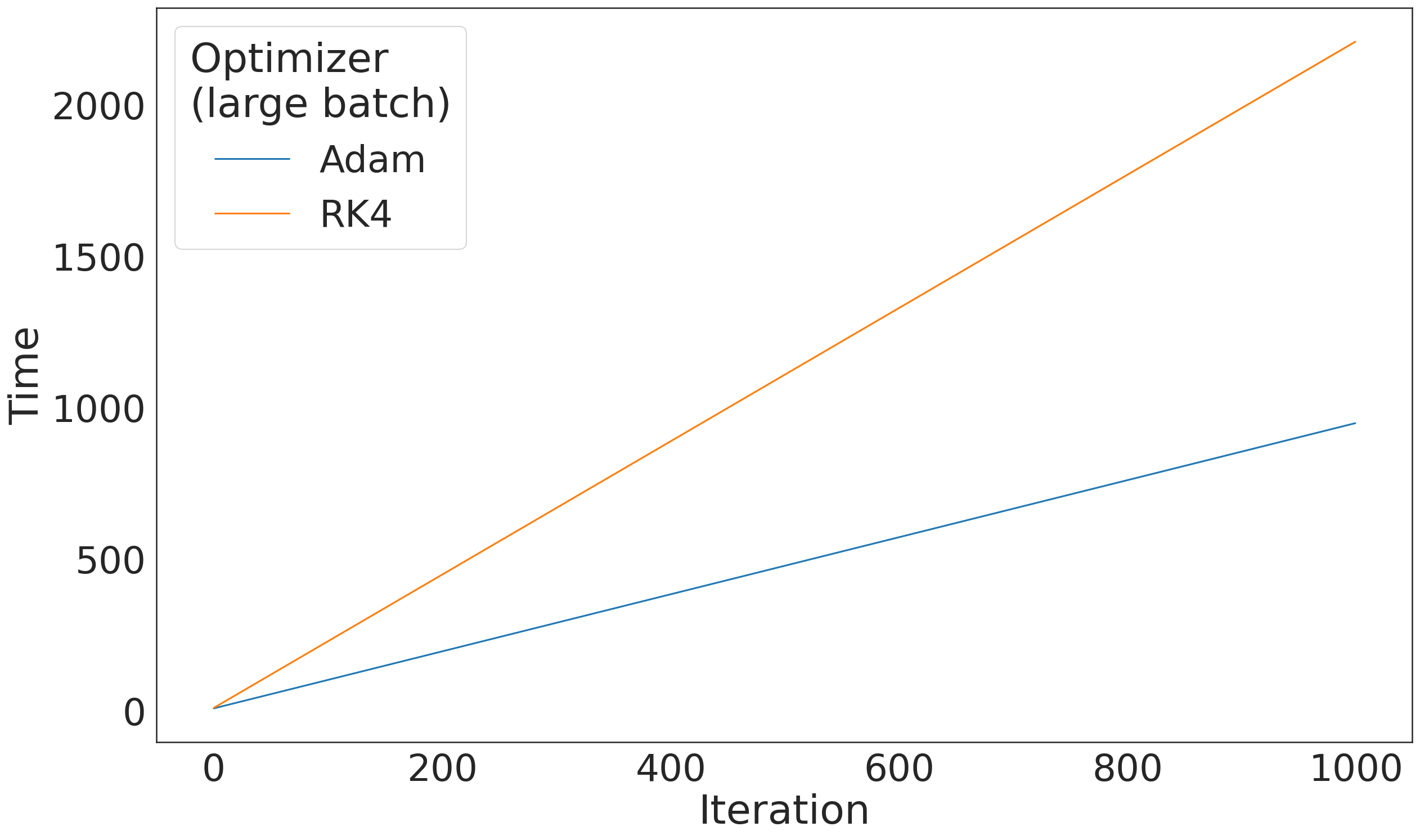}

\caption{
The wall-clock time for Adam and RK4 on CIFAR-10 is essentially the same for small batches but more than doubles for RK4 for large batches.
}
\label{figure:time_vs_step_cifar}
\end{figure}

\subsection{Preconditioners on CIFAR-10}
\label{section:cifar10_rk4_modified_adagrad}

\begin{figure}[h] 
\centering 
    \begin{minipage}{0.48\textwidth}
        \centering
        \includegraphics[width=1.0\textwidth]{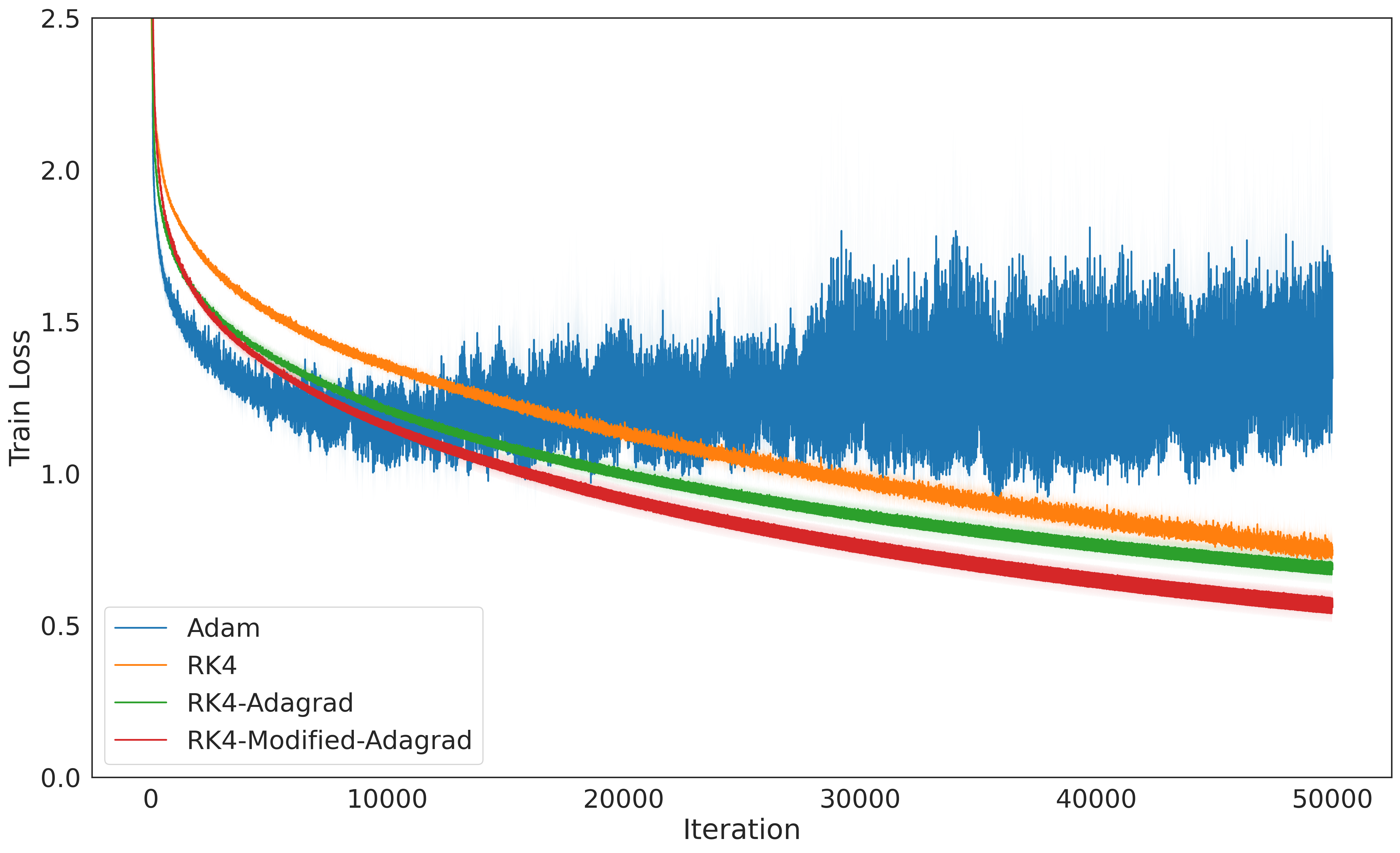}
    \end{minipage}
    \hfill 
    \begin{minipage}{0.48\textwidth}
        \centering
        \includegraphics[width=1.0\textwidth]{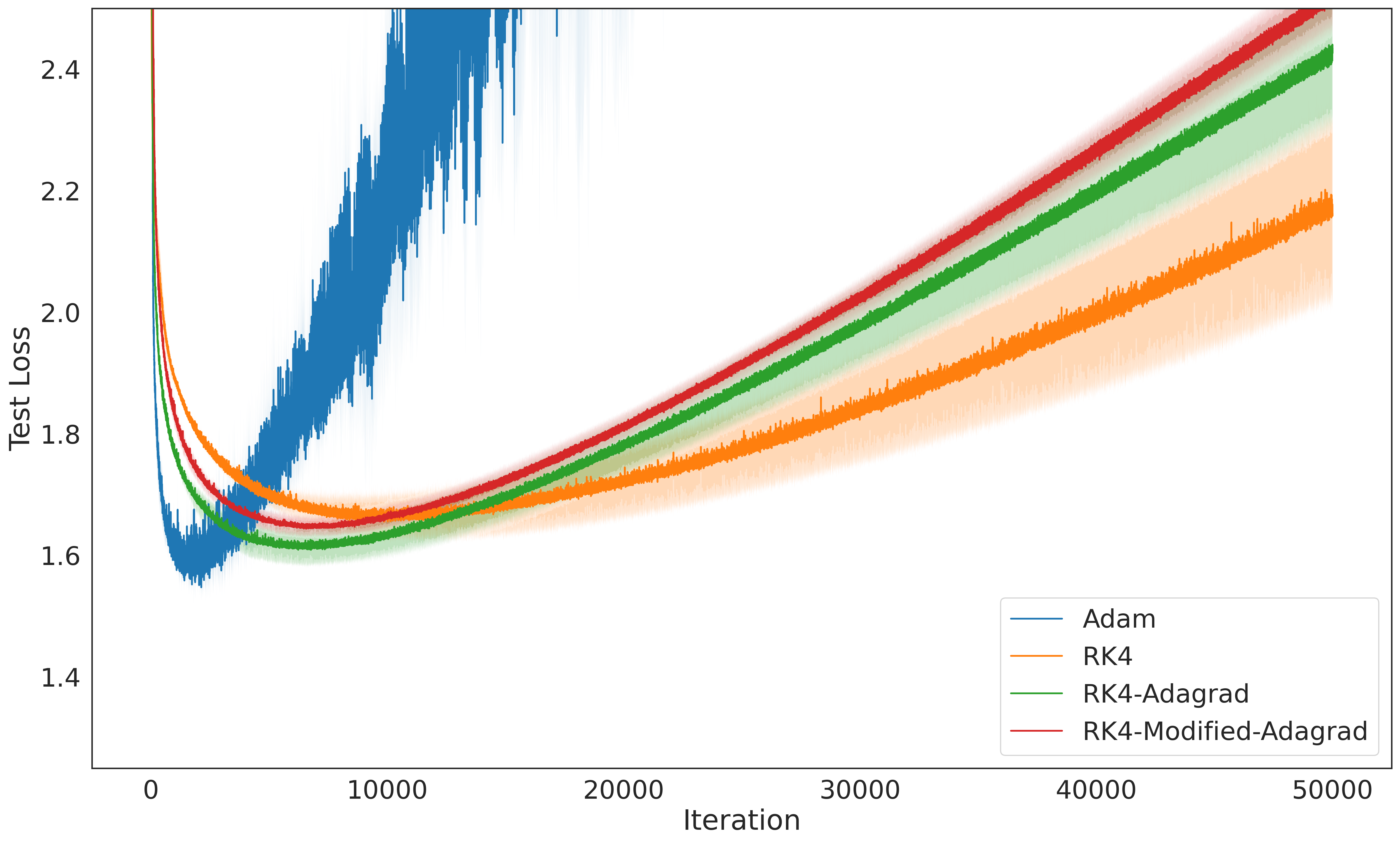}
    \end{minipage}
    \\
    \begin{minipage}{0.48\textwidth}
        \centering
        \includegraphics[width=1.0\textwidth]{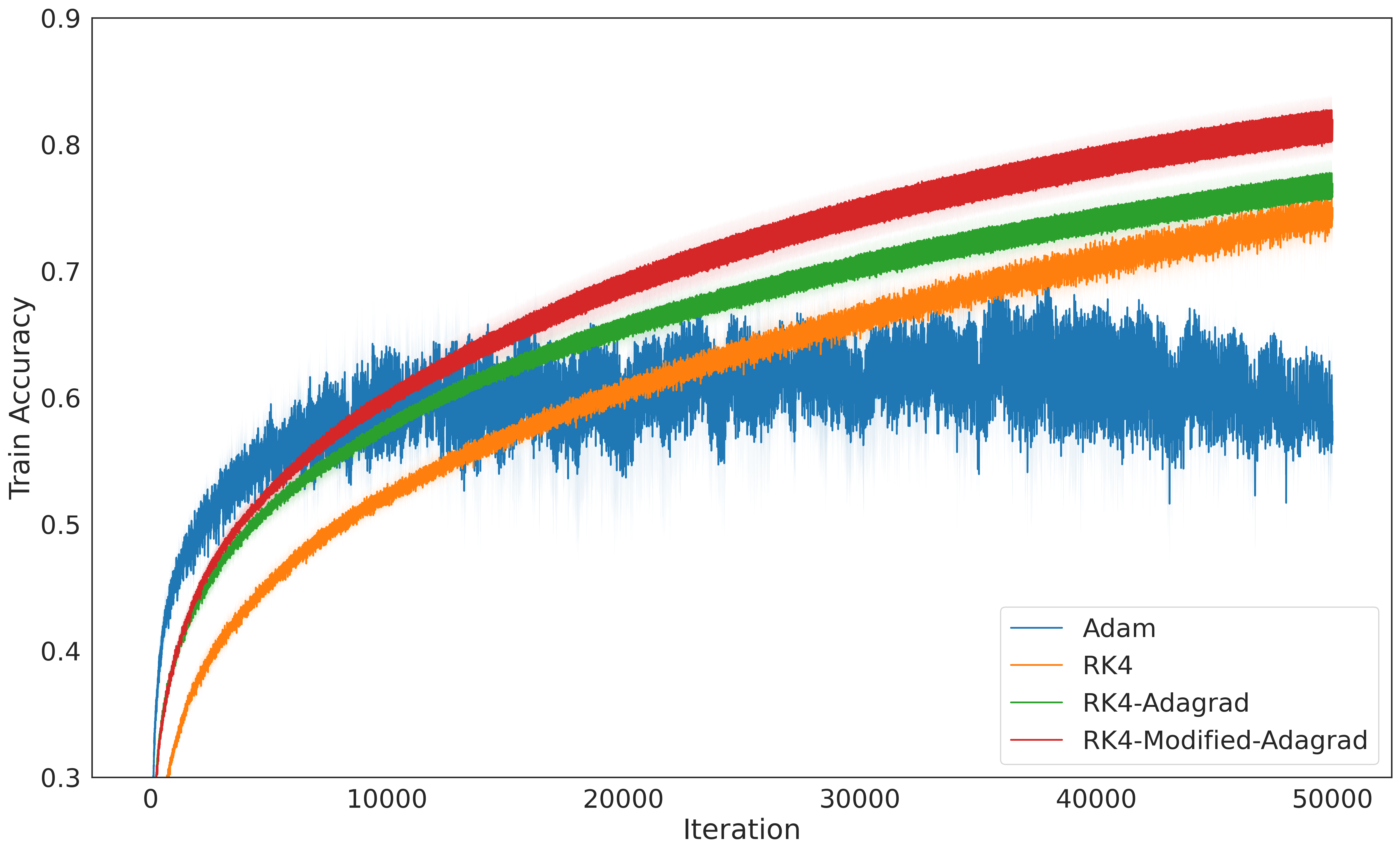}
    \end{minipage}
    \hfill 
    \begin{minipage}{0.48\textwidth}
        \centering
        \includegraphics[width=1.0\textwidth]{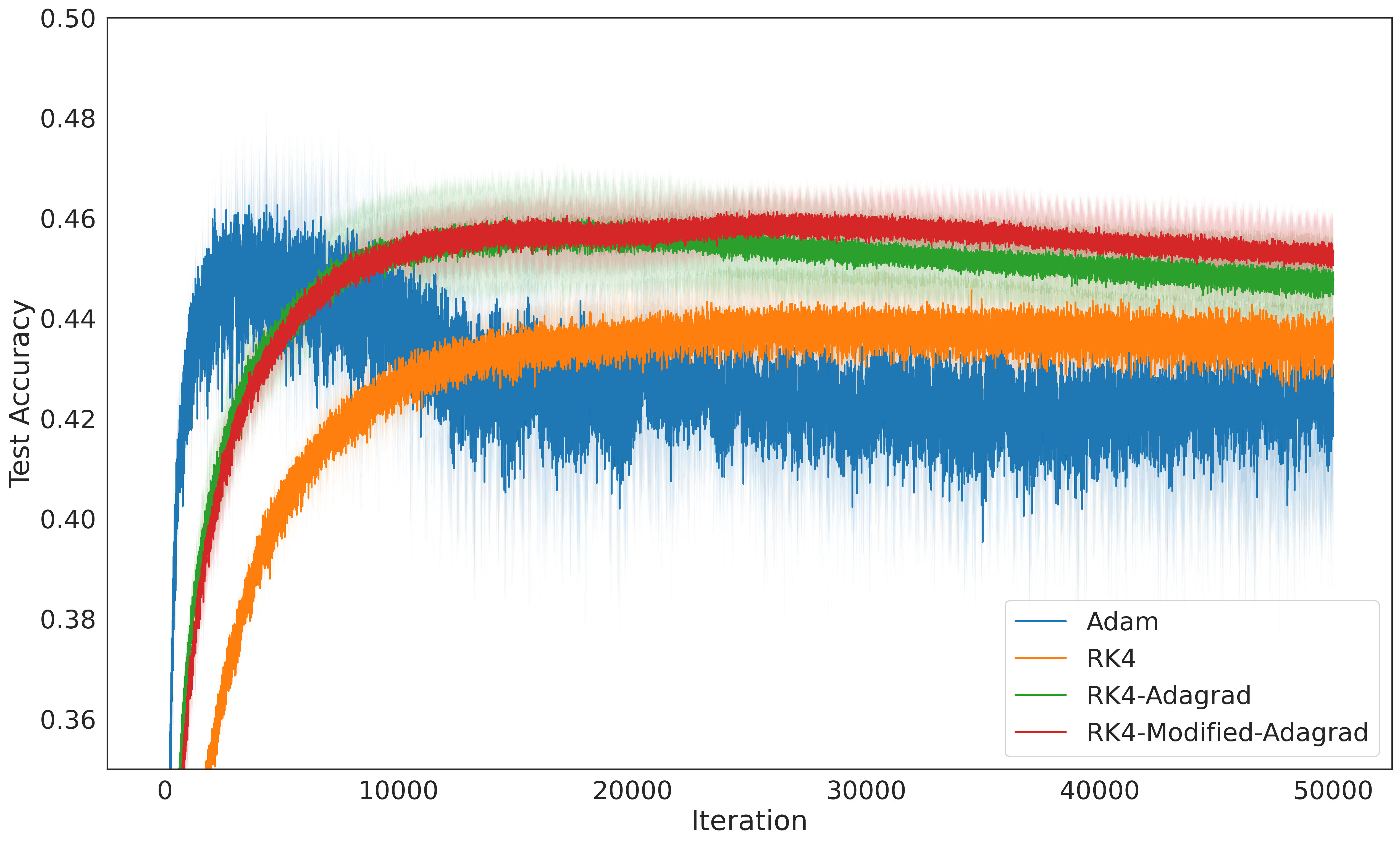}
    \end{minipage}
\caption{
The figure displays side-by-side training curves (left) versus test curves (right), illustrating that RK4 with preconditioning competes effectively with Adam on CIFAR-10 when using a 3-layer MLP (500 neurons per layer) trained with a batch size of 1024 for 50,000 steps. For these experiments, the tuned learning rates were: Adam (0.002), RK4 (0.001), RK4 + ADGR (0.008) and RK4 + Modified-ADGR (0.004). The random seeds took roughly 6h of training on the \texttt{COLAB-TPU} configuration.}
\label{figure:conditioners_cifar}
\end{figure}

\clearpage
\newpage

\subsection{Momentum and adaptive learning rate}
\label{section:cifar10_rk4_momentum_adaptive_lr}

\begin{figure}[h] 
\centering 
    \begin{minipage}{0.49\textwidth}
        \centering
        \includegraphics[width=1.0\textwidth]{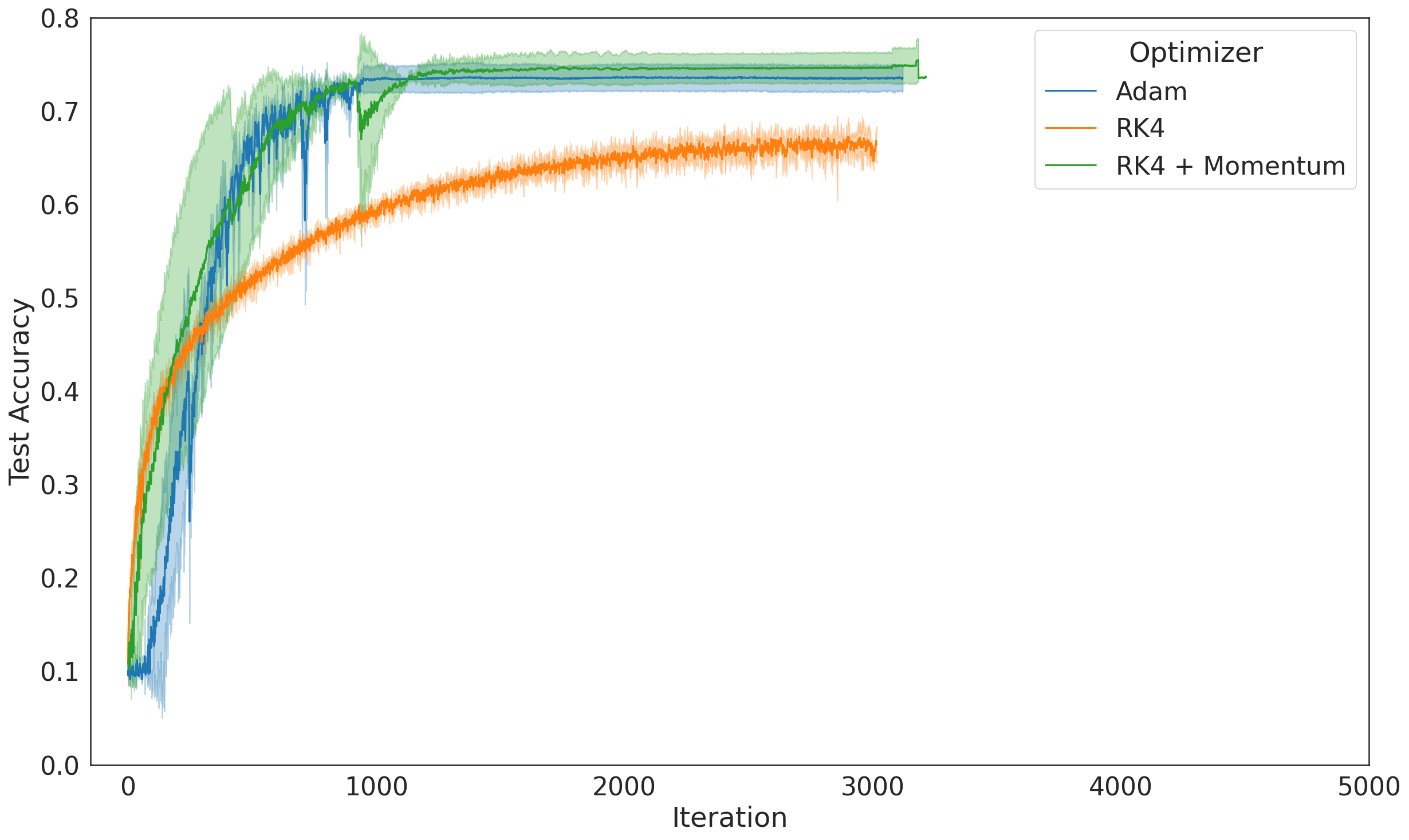}
    \end{minipage}
    \hfill 
    \begin{minipage}{0.49\textwidth}
        \centering
        \includegraphics[width=1.0\textwidth]{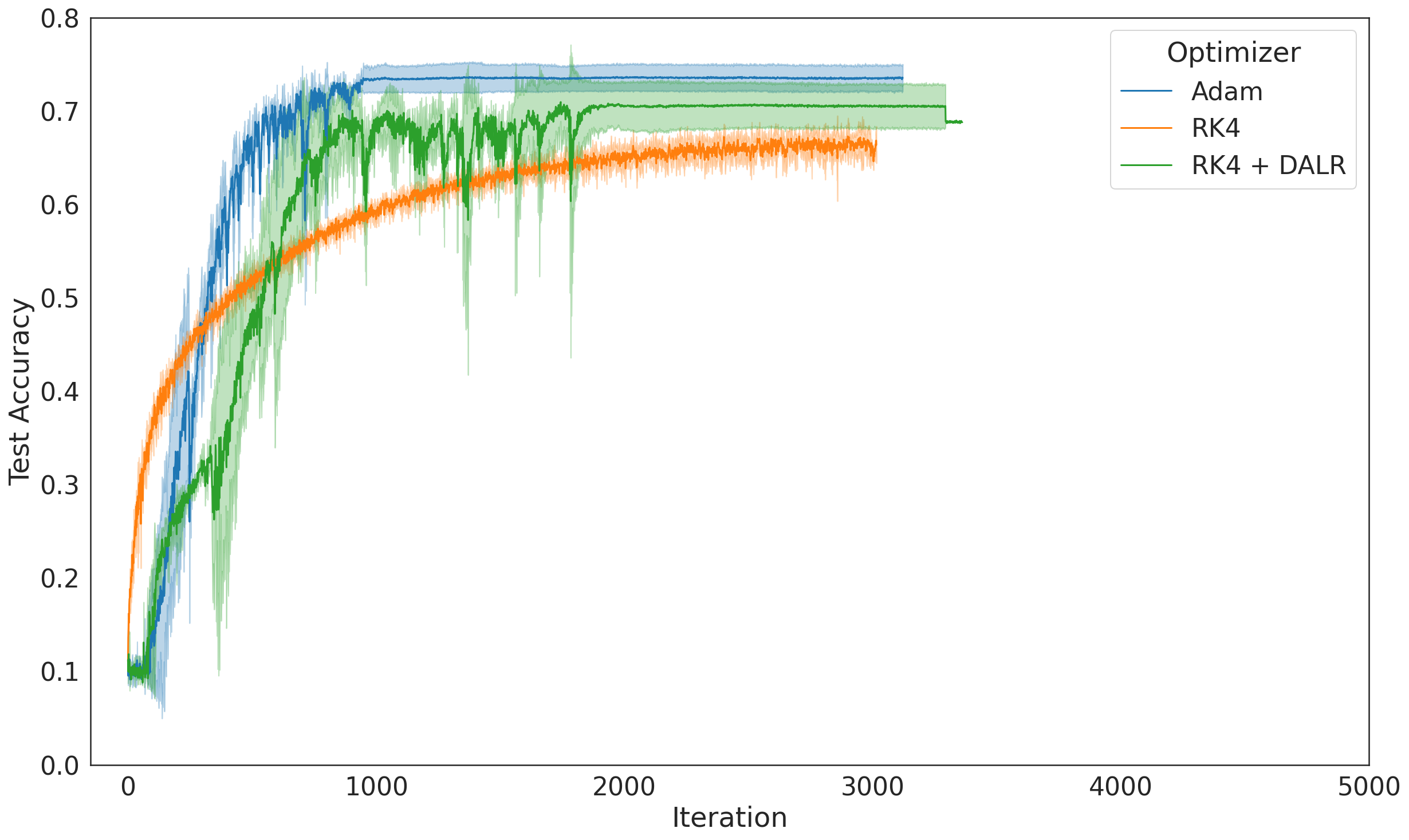}
    \end{minipage}
\caption{
The figure displays side-by-side training curves for momentum (left) and adaptive learning rate (DALR) (right) comparing it with Adam on CIFAR-10 when using a ResNet-18 \cite{resnets} trained with a batch size of 8192 for 3,000 steps. For these experiments, the tuned learning rates were: Adam (0.001), RK4 (0.004), RK4 + Momentum (0.004, $\beta=0.99$) and RK4 + DALR. No regularization, augmentation, or schedule was used in either case. The 5 random seeds for each experiment took roughly 3h 20 minutes of training on the \texttt{LARGE-TPU} configuration for each workload.
}
\label{figure:cifar_momentum_dalr}
\end{figure}

\end{document}